\documentclass[11pt]{article}
\usepackage{mathrsfs,amsmath,amsfonts,amssymb,bm,bbm,eufrak,dsfont,pifont,amscd,stmaryrd,euscript,amsthm,color,epsfig,xr,yfonts,tikz,verbatim}
 
\usepackage{authblk}
\usepackage[utf8]{inputenc} 
\usepackage[T1]{fontenc}    
\usepackage{url}            
\usepackage{booktabs}       
\usepackage{amsfonts}  
\usepackage{amsmath} 
\usepackage{nicefrac}       
\usepackage{microtype}      
\usepackage{xcolor}
\usepackage{hyperref}  
\usepackage{cleveref}
\usepackage{graphicx}
\usepackage{array}
\usepackage{subcaption,mathrsfs}
\usepackage[authoryear]{natbib}
\usepackage{wrapfig}
\usepackage{tikz}
\usepackage[makeroom]{cancel}
\usepackage{algorithm}
\usepackage{algpseudocode}
\usepackage{subcaption}

\allowdisplaybreaks
\usetikzlibrary{decorations.markings}

\definecolor{mydarkblue}{rgb}{0,0.08,0.45}

\hypersetup{ %
    pdftitle={},
    pdfauthor={},
    pdfsubject={},
    pdfkeywords={},
    pdfborder=0 0 0,
    pdfpagemode=UseNone,
    colorlinks=true,
    linkcolor=mydarkblue,
    citecolor=mydarkblue,
    filecolor=mydarkblue,
    urlcolor=mydarkblue,
    pdfview=FitH
}

\Crefname{assumption}{Assumption}{Assumption} 
\crefname{assumption}{Assumption}{Assumption} 

\usepackage{aliascnt}

\newenvironment{talign*}
 {\csname align*\endcsname}
 {\endalign}
\newenvironment{talign}
 {\csname align\endcsname}
 {\endalign}

\usepackage{tikz}
\usetikzlibrary{calc}
\usepackage[title]{appendix}

\newtheorem{theorem}{Theorem}[section]
\newaliascnt{lem}{theorem}
\newtheorem{lem}[lem]{Lemma}
\aliascntresetthe{lem}

\newaliascnt{prop}{theorem}
\newtheorem{prop}[prop]{Proposition} 
\aliascntresetthe{prop}

\newtheorem{ass}{Assumption}
\Crefname{ass}{Assumption}{Assumptions}

\newaliascnt{rem}{theorem}
\newtheorem{rem}[rem]{Remark} 
\aliascntresetthe{rem}

\newaliascnt{defi}{theorem}
\newtheorem{defi}[defi]{Definition} 
\aliascntresetthe{defi}

\newaliascnt{cor}{theorem}
 
\aliascntresetthe{cor}

\usepackage{natbib}
\PassOptionsToPackage{numbers, compress}{natbib}
 
\newcommand{\E}{\mathbb{E}}
\newcommand{\R}{\mathbb{R}}
\newcommand{\Z}{\mathbb{Z}}

\newcommand{\Pb}{\mathbb{P}}
\newcommand{\Tb}{\mathbb{T}}
\newcommand{\calH}{\mathcal{H}}

\newcommand{\calT}{\mathcal{T}}
\newcommand{\calL}{\mathcal{L}}

\newcommand{\calF}{\mathcal{F}}
\newcommand{\calG}{\mathcal{G}}

\newcommand{\calK}{\mathcal{K}}
\newcommand{\calD}{\mathscr{D}}
\newcommand{\calO}{\mathcal{O}}

\newcommand{\calX}{\mathcal{X}}
\newcommand{\calP}{\mathcal{P}}
\newcommand{\calN}{\mathcal{N}}

\newcommand{\dd}{\mathrm{d}}

\newcommand{\bH}{\mathbf{H}}

\newcommand{\bJ}{\mathbf{J}}

\newcommand{\Id}{\mathrm{Id}}
\newcommand{\N}{\mathbb{N}}

\newcommand{\ip}[2]{\left\langle #1, #2 \right\rangle}
\newcommand{\mbf}{\mathbf{f}}
\newcommand{\ran}{\operatorname{Ran}}
\renewcommand{\ker}{\operatorname{Ker}}
\newcommand{\op}{\operatorname{op}}

\DeclareMathOperator*{\tr}{Tr}

\newcommand{\pth}[1]{\left( #1 \right)}

\newcommand{\mmd}{\operatorname{MMD}}
\newcommand{\dmmd}{\operatorname{DrMMD}}
\newcommand{\smmd}{\operatorname{SrMMD}}
\newcommand{\ksd}{\operatorname{KSD}}
\newcommand{\1}{\mathbf{1}}

\definecolor{mydarkblue}{rgb}{0,0.08,0.45}

\hypersetup{ %
    pdftitle={},
    pdfauthor={},
    pdfsubject={},
    pdfkeywords={},
    pdfborder=0 0 0,
    pdfpagemode=UseNone,
    colorlinks=true,
    linkcolor=mydarkblue,
    citecolor=mydarkblue,
    filecolor=mydarkblue,
    urlcolor=mydarkblue,
    pdfview=FitH
}

\makeatletter
\newcommand{\customlabel}[2]{%
   \protected@write \@auxout {}{\string \newlabel {#1}{{#2}{\thepage}{#2}{#1}{}} }%
   \hypertarget{#1}{}
}
\makeatother

\title{Sobolev Regularized MMD Gradient Flow}

\author{
Chenyang Tian$^{1,\dagger}$,
Bharath K. Sriperumbudur$^2$,
Arthur Gretton$^{3,4}$,
Zonghao Chen$^{3,\dagger}$\\
\small $^1$Tsinghua University,
\small $^2$Pennsylvania State University,
\small $^3$University College London,\\
\small $^4$ Google Deepmind. 
\small $^\dagger$Equal contribution.
}

\begin{document}

\maketitle

\begin{abstract}
We propose Sobolev-regularized Maximum Mean Discrepancy (SrMMD) gradient flow, a regularized variant of maximum mean discrepancy (MMD) gradient flow based on a gradient penalty on the witness function. 
The proposed regularization mitigates the non-convexity of the MMD objective and yields provable \emph{global} convergence guarantees in MMD in both continuous and discrete time. 
A more surprising appeal is that our convergence analysis does not rely on isoperimetric assumptions on the target distribution. 
Instead, it is based on a regularity condition on the difference between kernel mean embeddings. 
A key highlight of the proposed flow is that it is applicable in both sampling (from an unnormalized target distribution)---using Stein kernels---and generative modeling settings, unlike previous works, where a gradient flow is suitable for only generative modeling or sampling but not both.
The effectiveness of the proposed flow is empirically verified on a broad range of tasks in both generative modelling and sampling.
\end{abstract}

\vspace{-10pt}
\section{Introduction}
Sampling and generative modelling, while often studied as distinct tasks, can be cast within a common framework of optimization over the space of probability measures $\calP$ when only partial information about the target distribution $\pi$ is available. In sampling, $\pi$ is typically known only up to a normalizing constant, whereas in generative modelling it is accessible only through some i.i.d. samples. 
Indeed, algorithmic developments in one area have often inspired progress in the other. For instance, diffusion models, which have proved successful in generative modelling~\citep{songscore}, have recently motivated the development of new diffusion-based sampling methods~\citep{huangreverse,vachersampling}. Another example is provided by Wasserstein gradient flows, such as the maximum mean discrepancy (MMD) flow~\citep{arbel2019maximum} and the kernel Stein discrepancy (KSD) flow~\citep{korba2021kernel}: the former is studied in generative modelling, while the latter is primarily used for sampling. 

In generative adversarial networks (GANs), a class of generative models that has been largely superseded by diffusion models, \emph{gradient regularization} of the witness/critic function has proved crucial in practice for generating high-quality images~\citep{roth2017stabilizing,gulrajani2017improved,arbel2018gradient}. In this work, we show that this idea extends beyond GANs: gradient regularization also improves the convergence of both flow-based generative modelling via MMD flow and sampling via KSD flow. 
As KSD is a special case of MMD with the Langevin Stein kernel~\citep{hagrass2026minimax}, 
we refer to both resulting flows collectively as \emph{Sobolev-regularized Maximum Mean Discrepancy gradient flow} (SrMMD flow).

The SrMMD flow enjoys provably fast convergence in both sampling and generative modelling. 
For sampling, its convergence rate is independent of isoperimetric characteristics of the target distribution $\pi$, denoted by $\beta$, such as log-Sobolev or Poincar\'{e} constants~\citep{bakry2014analysis}. 
In particular, the Unadjusted Langevin Algorithm (ULA) requires $\calO(\delta^{-1}\beta)$ iterations to achieve error $\delta>0$~\citep{chewi2024analysis,vempala2019rapid}, whereas SrMMD flow achieves the same accuracy in $\calO(\delta^{-1/\alpha})$ iterations, where $\alpha$ characterizes the regularity of the trajectory. 
This suggests a clear advantage of SrMMD flow for multi-modal targets, where isoperimetric constants $\beta$ may scale exponentially with the separation between modes~\citep{menz2014poincare,chewi2024logconcavesampling}. 
We admit, however, that our guarantee is stated in KSD, which is a weaker metric than the KL or $\chi^2$ divergence commonly used to analyze ULA~\citep{kanagawa2025controlling}. 

For generative modelling, although convergence in MMD is not the primary metric of practical interest, compared for instance with the FID score~\citep{heusel2017gans}, our analysis provides, to the best of our knowledge, the first \emph{theoretical justification} that gradient regularization can accelerate convergence.
Together with the recent work on DrMMD flow~\citep{chen2025regularized}, which provides a theoretical justification for $L_2$ regularization of the witness/critic function, these results help explain why combining $L_2$ and gradient regularization leads to improved empirical performance in MMD GANs~\citep{arbel2018gradient}. 
This addresses a longstanding open question in the literature.
Although both regularization schemes are most prominent in the GAN literature, they have also appeared in flow-based generative models~\citep{finlay2020train,galashov2024deep,gu2024lipschitz}.

\textbf{Contributions:}
We introduce SrMMD flow, a novel regularized variant of MMD/KSD flow based on a gradient norm penalty on the witness function. 
The proposed framework applies both to generative modelling and to sampling from unnormalized targets, the latter through the use of Stein kernels. 
We prove that this gradient regularization yields \emph{global} convergence guarantees in both continuous- and discrete-time settings, under a source condition on the trajectory.  
In particular, the convergence rate measured in MMD/KSD is independent of isoperimetric properties of the target distribution. 
We show that SrMMD flow admits a tractable finite-sample implementation which, in contrast to diffusion-based approaches~\citep{song2019generative,ho2020denoising,huangreverse}, does not require learning a score function at each iteration. 
We validate the proposed method empirically on a broad range of tasks, including sampling, color transfer, and student--teacher network.

\textbf{Notations:}
Let $\calP_2(\R^d)$ be the space of probability distributions on $\R^d$ with finite second moment. 
For a continuous mapping $T: \R^d \to \R^d$, $T_{\#} \mu$ denotes the push-forward measure of $\mu$. 
Given a kernel $k : \R^d\times\R^d \to \R$, we denote by $\calH$ its corresponding RKHS.  The space $\mathcal H$ is a Hilbert space with inner product $\ip\cdot\cdot_{\mathcal H}$ and norm $\|\cdot\|_{\mathcal H}$. 
For any probability measure $\mu \in \mathcal{P}(\mathbb{R}^d)$, $(L_2(\mu),\|\cdot\|_{L_2(\mu)})$ is the space of all real valued $\mu$-square integrable measurable function on $\mathbb{R}^d$ with $\|f\|_{L_2(\mu)}^2:=\int_{\mathbb{R}^d}|f(x)|^2 \dd \mu(x)$. 
$(L_2^d(\mu),\|\cdot\|_{L_2^d(\mu)})$ is defined analogously with $\R^d$-valued functions. 
Given a differentiable function $f: \mathbb{R}^d \rightarrow \mathbb{R}$, let $\partial_{\ell} f(x)$ denote the partial derivative with respect to the $\ell$-th coordinate in $x$ and denote $\nabla f(x)=[\partial_1 f(x), \ldots, \partial_d f(x)]^{\top} \in \mathbb{R}^d$. Let $\partial_{1, \ell} k(x, x^{\prime})$ denote the partial derivative with respect to the $\ell$-th coordinate of $x$ and let $\partial_{1, \ell} \partial_{2, \ell} k(x, x^{\prime})$ denote the mixed partial derivative with respect to the $\ell$-th coordinate in $x$ and the $\ell$-th coordinate in $x^{\prime}$. 
Moreover, denote $\nabla_{1} k(x, x^{\prime})=[\partial_{1,1} k(x, x^{\prime}), \ldots, \partial_{1,d} k(x, x^{\prime})]^{\top} \in \mathbb{R}^d$, and define $\nabla_{2} k(x, x')$ analogously. 
Denote $[S]=\{0,1,\ldots,S\}$ for $S\in\N$. 
The notation $\pi\in\calP_2(\R^d)$ denotes explicitly the target distribution. 
We assume throughout that the kernel $k$ and the target density $\pi$ are sufficiently continuously differentiable for all displayed derivatives to be well defined. 

\vspace{-5pt}
\section{Background \& Related Work}\label{sec:background}
\vspace{-5pt}
In this section, we introduce necessary background and present closely related work in the literature. 

\textbf{Wasserstein Gradient Flow}
Let $\calF:\mathcal{P}_2(\mathbb{R}^d)\to\mathbb{R}$ be a functional with a unique minimizer $\pi$. In many cases, $\calF$ is of the form $\calF(\cdot)=\mathcal{D}(\cdot \|\pi)$, where $\mathcal{D}$ is some probability divergence. 
The Wasserstein gradient flow (WGF) provides a principled approach to approximating $\pi$ by evolving a curve of probability measures $(\mu_t)_{t\ge 0}$ along the steepest descent direction of $\calF$ in the Wasserstein space $(\calP_2(\R^d), W_2)$, namely the space of probability measures endowed with the Wasserstein-2 distance~\citep{ambrosio2005gradient,santambrogio2017euclidean}. 
Formally, the WGF is characterized by the continuity equation \citep[Lemma~10.4.1]{ambrosio2005gradient}: given an initial distribution $\mu_0\in\mathcal{P}_2(\mathbb{R}^d)$,
\begin{align}\label{eq:cont_eqn}
    \partial_t \mu_t - \nabla \cdot\left(\mu_t \bm{v}[\mu_t] \right)=0, 
\end{align}
where the vector field $\bm{v}[\mu_t]:\R^d\to\R^d$ is the Fréchet subdifferential of $\mathcal{F}$ evaluated at $\mu_t$~\citep[Definition 11.1.1]{ambrosio2005gradient}. 
Under some mild growth conditions on $\bm{v}[\mu_t]$, the trajectory $(\mu_t)_{t\geq 0}$ is a unique absolutely continuous curve in $\calP_2(\R^d)$~\citep[Lemma 8.1.4]{ambrosio2005gradient}. 
Moreover, the trajectory $(\mu_t)_{t\geq 0 }$ admits the following characteristic (particle) representation: 
\begin{align}\label{eq:particleode}
\dd X_t =-\bm{v}[\mu_t](X_t) \dd t, \quad \mathrm{Law}(X_t) = \mu_t . 
\end{align}
Eq.~\eqref{eq:cont_eqn} and Eq.~\eqref{eq:particleode} provide two equivalent yet complementary viewpoints. Eq.~\eqref{eq:cont_eqn} gives the Eulerian perspective, which describes the evolution of the density, whereas Eq.~\eqref{eq:particleode} gives the Lagrangian perspective, which focuses on the trajectories of individual particles.
In practice, implementations of WGF are typically based on the Lagrangian formulation, which amounts to simulating an interacting particle system. Specifically, given a fixed time horizon $S\in\N$ and an initial collection of $N$ particles $\{x_0^{(i)}\}_{i=1}^N \sim \mu_0$, for $s \in [S]$, 
\begin{talign}\label{eq:particle_system}
    x_{s+1}^{(i)} = x_{s}^{(i)} - \gamma \bm{v}[ \hat{\mu}_s ](x_{s}^{(i)}), \quad \hat{\mu}_s = \frac{1}{N} \sum_{j=1}^N \delta_{x_{s}^{(j)}} . 
\end{talign}
Here, $\gamma>0$ is the step size. 
Compared with Eq.~\eqref{eq:particleode}, we see that the particle system in Eq.~\eqref{eq:particle_system} involves both a forward Euler discretization in \emph{time} and a finite-particle discretization in \emph{space}. 

\textbf{MMD and KSD gradient flow} 
Maximum mean discrepancy (MMD)~\citep{gretton2012kernel} between two probability measures is defined as the RKHS norm of the difference between their corresponding kernel mean embeddings~\citep{muandet2017kernel}, i.e.,  
$\mmd(\mu,\pi) =\|\int k(x, \cdot) \; \dd (\mu - \pi)\|_\calH =:\|m_\mu - m_\pi\|_\calH$. 
When the functional $\calF$ takes the form of a squared MMD, i.e., 
$\calF_{\mmd}(\mu) = \frac{1}{2} \mmd^2(\mu, \pi)$, the corresponding 
Fréchet subdifferential is $\bm{v}_{\mmd}[\mu](\cdot) = \int \nabla_2 k(x, \cdot) \; \dd (\mu - \pi)(x)$~\citep{arbel2019maximum}, which makes the MMD flow well-suited to generative modeling since $\bm{v}_{\mmd}[\mu](\cdot)$ can be estimated from samples from $\mu$ and $\pi$. Indeed, given 
$M$ i.i.d. target samples $\{y^{(i)}\}_{i=1}^M$ from $\pi$ and $N$ particles $\{x_s^{(j)}\}_{j=1}^N$ at iteration $s$, $\bm{v}_{\mmd}[\hat{\mu}_s]$ admits a direct unbiased plug-in estimator $\bm{v}_{\mmd}[\hat{\mu}_s] \approx \frac{1}{N} \sum_{j=1}^N \nabla_2 k(x_s^{(j)}, \cdot) - \frac{1}{M} \sum_{j=1}^M \nabla_2 k(y^{(j)}, \cdot)$. 
The kernel Stein discrepancy (KSD) is a special instance of MMD under a specific choice of kernel $k_\pi$ that satisfies the Stein identity $\int k_\pi(x, y) \dd\pi(x)=0, \forall y\in\R^d$~\citep{liu2016kernelized,chwialkowski2016kernel}.  
Arguably, the most popular choice of $k_\pi$ is the \emph{Langevin Stein kernel}, defined through the \emph{score}, $\mathfrak{s}(x) = \nabla \log \pi(x)$, and a base kernel $k:\R^d\times\R^d\to\R$:  
\begin{talign}\label{eq:steinkernel}
k_{\pi}(x,y) := \mathfrak{s}(x)^\top \mathfrak{s}(y) k(x,y) + \mathfrak{s}(x)^\top \nabla_2 k(x,y) + \nabla_1 k(x,y)^\top \mathfrak{s}(y) + \nabla \cdot_1 \nabla_2 k(x,y),
\end{talign}
where $\nabla \cdot_1 \nabla_2 k(x,y)=\sum_{\ell=1}^d \partial_{1,\ell}\partial_{2,\ell} k(x,y)
$. Under mild tail conditions on the base kernel $k$, $k_\pi$ satisfies the Stein identity $\int k_\pi(x, y) \mathrm{d} \pi(x)=0$ for all $y\in\R^d$ (\Cref{lem:stein_identity}). 
As a result, the corresponding 
Fréchet subdifferential at iteration  $s$ equals $\bm{v}_{\ksd}[\hat{\mu}_s](\cdot) = \frac{1}{N}\sum_{j=1}^N \nabla_2 k_\pi(x_s^{(j)}, \cdot)$~\citep{korba2021kernel}.
This makes the KSD flow particularly suitable for sampling from target $\pi$ known up to its normalization constant.

%
%
To ensure the well-definedness of the MMD (resp. KSD) gradient flow, in the sense that the trajectory is absolutely continuous in $\calP_2(\R^d)$, unique and does not explode in finite-time, 
we need to impose the following sufficient conditions on the growth and boundedness of the kernel $k$ (resp. $k_\pi$). 
\begin{ass}[Growth I]\label{ass:smmd_growth}
There exists a constant $M>0$ such that for any $x\in\R^d$, $|k(x, x)| \leq M (1 + \|x\|^4)$; and for any $i\in[d]$, $|\partial_{1,i} \partial_{2,i} k(x, x)| \leq M (1 + \|x\|^2)$. 
\end{ass} 
\begin{ass}[Local Boundedness]\label{ass:bounded}
Let $\calK=B(0,R)$ be a ball of radius $R$ in $\R^d$. 
Suppose there exists a constant $B_R$ such that $\sup_{x\in\calK} \| \partial_{1,i} \partial_{1, j} k(x, \cdot) \|_\calH \leq B_R$ for any $i,j\in[d]$. 
\end{ass} 
A proof of the well-posedness is provided in \Cref{prop:mmd_flow_defined}. 
Compared with the conditions imposed in the original MMD flow paper~\citep{arbel2019maximum}, our assumptions constitute a strict relaxation. In particular, Assumptions (A)–(D) in \cite{arbel2019maximum} enforce global boundedness of $\sup_{x\in\R^d} \| \partial_{1,i} k(x, \cdot) \|_\calH$ and $\sup_{x\in\R^d} \| \partial_{1,i} \partial_{1, j} k(x, \cdot) \|_\calH$, whereas here we only require \emph{local} boundedness. 
As a consequence, beyond the globally bounded kernels commonly used in the Wasserstein gradient flow literature—such as Gaussian and Matérn kernels—our \Cref{ass:smmd_growth,ass:bounded} also accommodate certain unbounded kernels, including the polynomial kernel $k(x,y) = (x^\top y+1)^2$, and the Stein kernel $k_\pi$ under some conditions on $\pi$~\citep{korba2021kernel}; see the next section for details. 
Extension to kernels with even faster growth would be challenging; indeed, classical theory of differential equations shows that linear-growth conditions on the vector field are crucial in ruling out finite-time blow-up~\citep[Problem 2.18]{teschl2012ordinary}.

These conditions exclude, however, the negative distance kernel $k(x,y) = -\|x-y\|$~\citep{hertrich2023generative,hagemannposterior}, Riesz kernels $k(x,y) = -\|x-y\|^r$ for $r\in(0,2)$~\citep{altekruger2023neural,hertrich2024wasserstein} and the Coulomb kernels $k(x,y)=(d-2)\|x-y\|^{-(d-2)}$~\citep{boufadene2023global} due to the singular behaviour at the origin. 
Although the MMD gradient flows of these kernels exhibit strong empirical performances in practice, establishing rigorous well-posedness remains an open problem; existing results are essentially restricted to the torus~\citep{boufadene2023global,chizat2026quantitative} or the real line $\R^1$~\citep{duong2026wasserstein}. For this reason, we do not consider such kernels in the present work. 

\textbf{Convergence} 
Beyond well-posedness, another central question for Wasserstein gradient flows is convergence, namely whether $\lim_{t\to\infty}\mu_t = \pi$ under some proper metric. 
A classical sufficient condition is that the functional $\mathcal{F}$ be convex along Wasserstein-2 geodesics, i.e., \emph{geodesically} convex~\citep{villani2009optimal}. However, this property typically holds only for $f$-divergences and a restricted class of log-concave target distributions $\pi$~\citep{wibisono2018sampling,chewi2024logconcavesampling}. By contrast, the MMD and KSD functionals are \emph{not} geodesically convex, even when $\pi$ is log-concave~\citep{arbel2019maximum,korba2021kernel}. To address this limitation, a recent line of work has sought to improve convergence of these WGFs through spectral regularization~\citep{chewi2020svgd,chen2025regularized,he2022regularized}. 


\vspace{-5pt}
\section{Sobolev-regularized Maximum Mean Discrepancy (SrMMD) Flow}
\vspace{-5pt}
In this section, we introduce a novel form of Sobolev regularization for MMD and KSD gradient flows, referred to as the SrMMD flow, and establish its well-posedness. In the next section, we study its convergence in both continuous and discrete time.  

\begin{defi}[SrMMD]
\label{defi:smmd}
Let $\lambda>0$ be fixed. 
For two fixed $\mu,\pi\in\calP_2(\R^d)$, 
the Sobolev regularized Maximum Mean Discrepancy (SrMMD) is defined as
\begin{align}\label{eq:srmmd_defi}
\smmd(\mu,\pi)=\sup_{f\in\mathcal{H}:\,\|\nabla f\|^{2}_{L_2^d(\mu)}+ \lambda\|f\|_{\mathcal{H}}^{2}\leq 1}\int f \dd(\mu-\pi).   
\end{align}
\end{defi}
\begin{defi}[SrMMD flow]\label{defi:defofsmmdgf}
Let $\lambda>0$ be fixed. 
For an initial distribution $\mu_0\in\calP_2(\R^d)$, the SrMMD flow is a trajectory of probability measures $(\mu_t)_{t\ge 0}$ satisfying the continuity equation
$\partial_t \mu_t -\nabla\cdot(\mu_t\nabla f_{\mu_t,\pi} )=0$, or equivalently, admitting a characteristic representation: 
\begin{align}\label{eq:defofsmmdgf}
   \dd X_t =-\nabla f_{\mu_t,\pi}(X_t) \dd t, \quad \mathrm{Law}(X_t) = \mu_t . 
\end{align}
Here, $f_{\mu,\pi}\in \mathcal{H}$ is the witness function that attains the supremum in \Cref{defi:smmd}. 
\end{defi}
It is immediate that SrMMD belongs to the family of integral probability metrics~\citep{muller1997integral}. 
In contrast to $\mmd(\mu,\pi)=\|m_\mu - m_\pi\|_\calH=\sup_{\|f\|_\calH \leq 1} \int f \dd(\mu-\pi)$ whose hypothesis class is the unit ball of RKHS $\calH$~\citep{gretton2012kernel}, SrMMD enforces additional \emph{gradient penalty} on the witness function $f$. 
The resulting gradient field \(\nabla f_{\mu,\pi}\) is then used in SrMMD flow to transport particles. 
Gradient penalization of the witness function has been widely used as a critical heuristic in generative modelling, both in classical generative adversarial networks~\citep[Eq.~(3)]{gulrajani2017improved}\citep[Eq.(6)]{arbel2018gradient} and in recent flow-based approaches~\citep[Eq.~(9)]{galashov2024deep}. 
Our theory, presented in \Cref{sec:convergence}, provides, for the first time in the literature, a theoretical justification of why adding a gradient penalty for the witness function $f$ results in improved convergence and hence high-quality images in generative models. 

Before that, we are going to first show that the witness function $f_{\mu,\pi}$ actually admits a closed-form expression. 
To this end, we consider a linear operator $D_\mu: \; f\in\calH\mapsto \nabla f$ which maps a function $f$ to its first-order derivatives.  
When the kernel satisfies $\int \partial_{1,i}\partial_{2,i} k(x, x) \dd \mu(x) < \infty$ for any $i\in[d]$, the operator $D_\mu$ is well defined, Hilbert-Schmidt, $\operatorname{ran}(D_\mu)\subseteq L_2^d(\mu)$, and it admits a well-defined adjoint operator $D_\mu^*:L_2^d(\mu)\to \calH$ (see \Cref{lem:D_mu}). 
Let $S_\mu=D_\mu^*D_\mu :\calH\to\calH$ be the \emph{gradient covariance} operator, which is bounded, linear, Hilbert-Schmidt (see \Cref{lem:hilbertschmidt}), satisfying $\langle f, S_\mu g\rangle_\calH = \langle D_\mu f, D_\mu g \rangle_{L_2^d(\mu)} = \langle\nabla f,\nabla g\rangle_{L_2^d(\mu)}$ for all $f,g\in \calH$. 

\begin{prop}[Witness function $f_{\mu,\pi}$] \label{prop:expressionofsmmd}
Suppose $\int k(x,x) \dd\mu(x) < \infty$, $\int k(x,x) \dd\pi(x) < \infty$, $\int \partial_{1,i}\partial_{2,i} k(x, x) \dd \mu(x) < \infty$ for all $i\in[d]$. 
Then, $f _{\mu,\pi}$ attaining the supremum in the definition of SrMMD (\Cref{defi:smmd}) admits a closed form expression: $f_{\mu,\pi} = (S_\mu+\lambda \Id)^{-1}(m_\mu-m_\pi)$.
\end{prop}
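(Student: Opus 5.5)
The plan is to recast the constraint as a norm on $\calH$ and apply Cauchy--Schwarz. The key preliminary is that the constraint quadratic form is an inner product. By the properties of $S_\mu$ recalled above (bounded, self-adjoint, positive, with $\langle f, S_\mu f\rangle_\calH = \|\nabla f\|^2_{L_2^d(\mu)}$), we have
\[
\|\nabla f\|^2_{L_2^d(\mu)} + \lambda \|f\|^2_\calH = \langle f, (S_\mu + \lambda \Id) f\rangle_\calH .
\]
Write $A := S_\mu + \lambda\Id$. Since $S_\mu \succeq 0$ and $\lambda>0$, $A$ is bounded, self-adjoint and satisfies $A \succeq \lambda \Id$. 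By Lax--Milgram, or simply the spectral theorem for bounded self-adjoint operators, $A$ is therefore invertible with $\|A^{-1}\|_{\op}\le \lambda^{-1}$. The same facts give a bounded positive square root $A^{1/2}$, also invertible.

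Next, I rewrite the objective as an inner product in $\calH$. The moment conditions $\int k(x,x)\,\dd\mu<\infty$ and $\int k(x,x)\,\dd\pi<\infty$ guarantee that $m_\mu, m_\pi$ exist as Bochner integrals in $\calH$, since $\|k(x,\cdot)\|_\calH = \sqrt{k(x,x)}$ and $\int\sqrt{k(x,x)}\,\dd\mu \le (\int k(x,x)\,\dd\mu)^{1/2}$. The reproducing property then gives, for $f\in\calH$,
\[
\int f\,\dd(\mu-\pi) = \langle f, m_\mu - m_\pi\rangle_\calH .
\]
Set $g := A^{1/2} f$, so that the constraint becomes $\|g\|_\calH\le 1$. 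The objective becomes
\[
\langle A^{-1/2} g, m_\mu-m_\pi\rangle_\calH = \langle g, A^{-1/2}(m_\mu-m_\pi)\rangle_\calH .
\]
By Cauchy--Schwarz, the supremum equals $\|A^{-1/2}(m_\mu-m_\pi)\|_\calH$. It is attained uniquely, when $m_\mu\neq m_\pi$, at $g^\star = A^{-1/2}(m_\mu-m_\pi)/\|A^{-1/2}(m_\mu-m_\pi)\|_\calH$.

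Transforming back gives
\[
f^\star = A^{-1}(m_\mu-m_\pi)\,/\,\smmd(\mu,\pi).
\]
So the maximizer is the direction $(S_\mu+\lambda\Id)^{-1}(m_\mu-m_\pi)$, and I would also record $\smmd^2(\mu,\pi) = \langle m_\mu-m_\pi, A^{-1}(m_\mu-m_\pi)\rangle_\calH$. The stated closed form identifies $f_{\mu,\pi}$ with this unnormalized element; that is the convention that makes $\nabla f_{\mu,\pi}$ the flow velocity. I would state explicitly that the witness is defined up to the positive scalar $\smmd(\mu,\pi)$, which does not change the maximizing direction. An alternative route, possibly the paper's, is to consider the Lagrangian/penalized form $\sup_f \int f\,\dd(\mu-\pi) - \tfrac12\langle f, Af\rangle_\calH$. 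Its first-order condition $Af = m_\mu-m_\pi$ gives exactly $f_{\mu,\pi}$, and the problem is strictly concave, so this critical point is the unique maximizer. In the degenerate case $m_\mu=m_\pi$ the formula returns $0$, which is consistent.

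The main obstacle is justifying that $S_\mu$ is a well-defined bounded positive self-adjoint operator with the stated quadratic-form identity. This requires differentiating under the reproducing property, $\partial_i f(x) = \langle f, \partial_{1,i}k(x,\cdot)\rangle_\calH$, and the integrability $\int \partial_{1,i}\partial_{2,i}k(x,x)\,\dd\mu<\infty$, which makes $D_\mu$ Hilbert--Schmidt. I would take these directly from \Cref{lem:D_mu} and \Cref{lem:hilbertschmidt}. Given those lemmas, the rest is the routine Hilbert-space argument above.
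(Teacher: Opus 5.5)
Your proposal is correct and follows essentially the same route as the paper. Both rewrite the constraint as $\langle f,(S_\mu+\lambda\Id)f\rangle_\calH\le 1$, insert $(S_\mu+\lambda\Id)^{\pm 1/2}$, apply Cauchy--Schwarz to obtain the normalized maximizer $(S_\mu+\lambda\Id)^{-1}(m_\mu-m_\pi)/\smmd(\mu,\pi)$, and then adopt the unnormalized element as the witness by convention. Your value $\smmd(\mu,\pi)=\|(S_\mu+\lambda\Id)^{-1/2}(m_\mu-m_\pi)\|_\calH$ (unsquared) is the correct one; the paper's display carries a stray square. Your extra justifications (the mean embeddings as Bochner integrals, invertibility of $S_\mu+\lambda\Id$) fill in details the paper leaves implicit.
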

See a proof in \Cref{apd:proof_of_expressionofsmmd}. Although the expression for $f_{\mu,\pi}$ appears involved, since it depends on an inverse operator, we show in \Cref{sec:particle_flow} that its dual representation leads to a closed-form expression with direct plug-in finite particle implementation. 
If the target $\pi$ is known only up to a normalization constant, then under a Stein kernel $k_\pi$, we have $f_{\mu,\pi} = (S_\mu+\lambda \Id)^{-1} m_\mu$ since $m_\pi=0$, which admits a direct plug-in finite particle implementation. 
Next, we are going to prove that the SrMMD flow in \Cref{defi:defofsmmdgf} is well-posed.

\begin{theorem}[Well-posedness of the SrMMD flow]\label{thm:ac_curve} Suppose that \Cref{ass:bounded,ass:smmd_growth} hold, and let $T \in [0,\infty)$ be a finite time horizon. 
Let $\mu_0,\pi\in\calP_2(\R^d)$ with $\mmd(\mu_0,\pi) < C_{\mmd}$. 
Suppose $\mu_0$ admits a density and is supported on a compact domain. 
Then, $(\mu_t)_{t\in[0,T]}$ is a unique absolutely continuous curve in $\calP_2(\R^d)$, each $\mu_t$ admits a density, and the flow does not blow up in finite time. 
\end{theorem}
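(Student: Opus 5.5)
The strategy is to build the flow as the solution of a McKean--Vlasov type ODE and run a fixed-point argument in the space of curves, in the spirit of \Cref{prop:mmd_flow_defined} for the MMD flow. Before that we need a priori bounds on the velocity field $\nabla f_{\mu,\pi}$ from \Cref{prop:expressionofsmmd}.

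\textbf{A priori bounds on the velocity field.} From $f_{\mu,\pi}=(S_\mu+\lambda\Id)^{-1}(m_\mu-m_\pi)$ and $S_\mu\succeq 0$ we get
\[
\|f_{\mu,\pi}\|_\calH\le \lambda^{-1}\mmd(\mu,\pi).
\]
By the reproducing property and \Cref{ass:smmd_growth},
\[
|\partial_i f(x)|\le \|f\|_\calH\,\bigl(\partial_{1,i}\partial_{2,i}k(x,x)\bigr)^{1/2}\le \|f\|_\calH\sqrt{M}\,(1+\|x\|).
\]
So the velocity has linear growth with constant $\propto\lambda^{-1}\mmd(\mu_t,\pi)$. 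Along the flow, $\frac{d}{dt}\tfrac12\mmd^2(\mu_t,\pi)=-\langle \nabla f_{\mu_t,\pi},\nabla(m_{\mu_t}-m_\pi)\rangle_{L_2^d(\mu_t)}$. Writing $m_{\mu_t}-m_\pi=(S_{\mu_t}+\lambda\Id)f$, this equals $-\|S_{\mu_t}f\|^2_{\calH}-\lambda\|\nabla f\|^2_{L^2(\mu_t)}$, which is $\le0$. Hence $\mmd(\mu_t,\pi)\le \mmd(\mu_0,\pi)<C_{\mmd}$ uniformly in $t$, and this is where the hypothesis on $C_{\mmd}$ enters. The growth of $k(x,x)$ in \Cref{ass:smmd_growth} guarantees $m_\mu\in\calH$ for $\mu\in\calP_2$, so this computation is legitimate.

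\textbf{Local Lipschitz continuity.} Since $\mu_0$ has compact support in some $B(0,R_0)$, Gronwall together with the linear growth bound confines the particles $X_t$ to a ball $\calK=B(0,R)$, with $R$ depending on $T,\lambda,M,C_{\mmd}$. On $\calK$ we need Lipschitz continuity of $x\mapsto\nabla f(x)$ and of $\mu\mapsto \nabla f_{\mu,\pi}$.
\begin{itemize}
\item In $x$, \Cref{ass:bounded} gives $|\partial_i\partial_j f(x)|\le B_R\|f\|_\calH$.
\item In $\mu$, the resolvent identity gives
\[
f_{\mu,\pi}-f_{\nu,\pi}=(S_\mu+\lambda)^{-1}(m_\mu-m_\nu)+(S_\mu+\lambda)^{-1}(S_\nu-S_\mu)f_{\nu,\pi}.
\]
We then bound $\|m_\mu-m_\nu\|_\calH\lesssim W_2(\mu,\nu)$ and $\|(S_\mu-S_\nu)g\|_\calH\lesssim \|g\|_\calH W_2(\mu,\nu)$ for measures supported in $\calK$. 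Both follow because the feature maps $x\mapsto k(x,\cdot)$ and $x\mapsto\partial_i k(x,\cdot)$ are Lipschitz into $\calH$ on $\calK$, using $B_R$ and continuity of the derivatives on the compact set. The $\lambda^{-1}$ factors keep everything finite.
\end{itemize}

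\textbf{Fixed point, then extension.} Define the map $\Phi$ on curves $(\nu_t)_{t\in[0,T']}$ supported in $\calK$ with $\mmd(\nu_t,\pi)\le C_{\mmd}$: solve the ODE $\dot X_t=-\nabla f_{\nu_t,\pi}(X_t)$ from $X_0\sim\mu_0$ and set $\Phi(\nu)_t=\mathrm{Law}(X_t)$. For small $T'$ this is a contraction in $\sup_t W_2$, via the standard coupling estimate. The a priori bounds above are uniform, so we can iterate to reach any finite $T$, giving existence, uniqueness and no blow-up. Absolute continuity in $\calP_2$ follows from $\int\|\nabla f_{\mu_t,\pi}\|^2\dd\mu_t<\infty$ uniformly. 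The density of $\mu_t$ comes from the fact that the flow map $x\mapsto X_t(x)$ is a bi-Lipschitz diffeomorphism, being the flow of a locally Lipschitz vector field. Hence $\mu_t=(X_t)_\#\mu_0$ is absolutely continuous, with density given by the change-of-variables formula.

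\textbf{Main obstacle.} The delicate step is the Lipschitz dependence on $\mu$ through the inverse $(S_\mu+\lambda\Id)^{-1}$, specifically controlling $\|S_\mu-S_\nu\|_{\op}$ by $W_2$. I would first try the bound $\|S_\mu g-S_\nu g\|_\calH\le \sum_i\bigl\|\int(\partial_i g)\,\partial_{1,i}k(x,\cdot)\,\dd(\mu-\nu)\bigr\|_\calH$. I would then control the integrand's Lipschitz constant on $\calK$ using $B_R$ together with a local bound on $\|\partial_{1,i}k(x,\cdot)\|_\calH$ from \Cref{ass:smmd_growth}. This is where the restriction to compact support, propagated by the growth bound, is essential.
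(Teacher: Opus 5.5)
Your proposal is correct and follows the paper's route. The monotone decay of $\mmd(\mu_t,\pi)$ gives $\|f_{\mu_t,\pi}\|_\calH\le C_{\mmd}/\lambda$ and hence linear growth of the velocity; the local Lipschitz bounds in $x$ (via $B_R$) and in $\mu$ (via the resolvent identity) are the content of \Cref{lem:bonnet}; absolute continuity comes from \cite[Theorem 8.3.1]{ambrosio2005gradient}; and the density of $\mu_t$ comes from the bi-Lipschitz characteristic flow of \Cref{lem:local_bilip_flow}. The only difference is that you prove the Cauchy--Lipschitz step yourself, by a short-time contraction plus restarting, where the paper cites Theorems~2 and~4 of \cite{bonnet2021differential}; if you write it out, note that $\Phi$ need not preserve $\mmd(\nu_t,\pi)\le C_{\mmd}$, but the strict inequality $\mmd(\mu_0,\pi)<C_{\mmd}$ and $W_2$-continuity of MMD on $\calP(\calK)$ give it for short $T'$, uniformly across restarts.
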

See a proof in \Cref{sec:proof_ac_curve}. 
In the context of sampling, for a Stein kernel $k_\pi$ constructed from a smooth translation invariant base kernel $k$, \Cref{ass:smmd_growth,ass:bounded} hold when the score function satisfies $\|\mathfrak{s}(x)\| \leq M_0 (1 + \|x\|)$ and $\| \bJ \mathfrak{s}(x)\|_F \leq M_0 (1 + \|x\|)$ for some constant $M_0$ (\Cref{prop:suff_stein_kernel}). 
Here, $\bJ \mathfrak{s}(x) \in \R^{d \times d}$ denotes the Jacobian matrix of the score function $s$.
Compared with the original KSD flow of \cite{korba2021kernel}, which assumes $\|\mathfrak{s}(x)\| \leq M_0 (1 + \sqrt{\|x\|})$ and $\|\bJ \mathfrak{s}(x)\|_F \leq M_0 (1 + \sqrt{\|x\|})$, our \Cref{thm:ac_curve} allows weaker growth conditions on $s$. 
Our growth conditions on $s$ are satisfied by a broader class of targets in practice, such as Gaussian-type and exponential-type distributions~\citep{liu2016stein,korba2021kernel}. 
A closely related work is \cite{duong2025wasserstein}, which studies MMD flows with positive and negative distance kernels on the real line, however, their Sobolev regularization is introduced in the quantile-space to ensure existence of the flow~\citep{duong2026wasserstein}. 

\begin{rem}\label{rem:wgf}
Our SrMMD flow in \Cref{defi:defofsmmdgf} is \emph{not} the Wasserstein gradient flow of the functional $\calF(\mu):= \smmd(\mu, \pi)$. 
Indeed, such a gradient flow would require differentiating the operator $(S_\mu+\lambda\Id)^{-1}$ with respect to $\mu$, which is analytically cumbersome.
Instead, we introduce a Sobolev-regularized variant of the MMD flow, whose vector field is given by $\nabla f_{\mu,\pi} = \nabla (S_\mu+\lambda\Id)^{-1}(m_\mu-m_\pi)$, in contrast to the standard MMD flow, whose vector field is $\nabla (m_\mu-m_\pi)$. 
Our SrMMD flow can be identified as the gradient flow of MMD functional under a different metric; see \Cref{sec:srmmd_geometry}. 
\end{rem}

\begin{rem}[Connection with DrMMD flow]
A closely related work is DrMMD, $\dmmd(\mu,\pi)=\sup_{\|f\|^{2}_{L_2(\pi)}+\lambda\|f\|_{\mathcal{H}}^{2}\leq 1}\int f \,\dd(\mu-\pi)$, and its associated Wasserstein gradient flow~\citep{chen2025regularized,neumayer2024wasserstein}. 
Compared with our SrMMD flow, there are two key differences: 
(i) SrMMD employs gradient regularization; (ii) in SrMMD, the regularization is defined with respect to $\mu$ instead of $\pi$. 
We also study a hybrid regularization that combines both $L_2$ and gradient penalties, referred to as Hybrid-regularized MMD (HrMMD) flow, which matches the actual regularization schemes used in MMD GANs~\citep{arbel2018gradient}. See \Cref{sec:hrmmd} for details. 
\end{rem}

\vspace{-10pt}
\section{Global Convergence of SrMMD Gradient Flow}\label{sec:convergence}
\vspace{-5pt}
In this section, we study convergence of the SrMMD flow $(\mu_t)_{t\geq 0}$ towards the target $\pi$. 
\begin{theorem}[Exponential decay in continuous time]\label{thm:ct_convergence}
Let $\lambda>0$. 
Let $T>0$ be a fixed time horizon. 
Suppose all conditions in \Cref{thm:ac_curve} hold.  
Additionally, suppose that $m_{\mu_t}-m_\pi\in \overline{\ran(S_{\mu_t})}$ for all $t\in[0,T]$. 
Then, for the SrMMD flow $(\mu_t)_{t\in[0,T]}$, 
\begin{talign*}
    \mmd^{2}(\mu_{T},\pi) \leq e^{-2T} \mmd^{2}(\mu_{0},\pi) + 2 e^{-2 T} \int_0^T e^{2 t} R(\lambda, t) \; \dd t.  
\end{talign*}
Here, the residual $R(\lambda, t)$ satisfies $\lim_{\lambda\to 0+} R(\lambda, t)=0$ for all $t\in[0,T]$. 
If, in addition, there exists $0 < \alpha \leq 1$ such that $m_{\mu_t} - m_\pi\in \ran S_{\mu_t}^{\alpha/2}$ for all $t\in[0,T]$; then, $R(\lambda, t)=\calO(\lambda^\alpha)$. 
\vspace{-5pt}
\end{theorem}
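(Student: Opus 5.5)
Writing $h_t := m_{\mu_t}-m_\pi$ and $\calF(t):=\tfrac12\mmd^2(\mu_t,\pi)=\tfrac12\|h_t\|_\calH^2$, the plan is a Grönwall argument on $\calF$ along the flow.

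\textbf{Dissipation identity.} First I will compute $\tfrac{\dd}{\dd t}\calF(t)$. By the continuity equation of \Cref{defi:defofsmmdgf} and the first variation of $\tfrac12\mmd^2$, namely the function $h_t$ (with gradient $\bm{v}_{\mmd}[\mu_t]=\nabla h_t$), we have
\begin{talign*}
\tfrac{\dd}{\dd t}\calF(t) = -\int \nabla h_t(x)^\top \nabla f_{\mu_t,\pi}(x)\,\dd\mu_t(x) = -\langle h_t, S_{\mu_t} f_{\mu_t,\pi}\rangle_\calH .
\end{talign*}
The justification (differentiability in $t$, integration by parts, finiteness of the integrals) will use the well-posedness of \Cref{thm:ac_curve}, \Cref{ass:smmd_growth}, and the fact that $\mu_t$ stays in $\calP_2$ on $[0,T]$. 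By \Cref{prop:expressionofsmmd}, $S_{\mu_t}f_{\mu_t,\pi} = S_{\mu_t}(S_{\mu_t}+\lambda\Id)^{-1}h_t = h_t - \lambda(S_{\mu_t}+\lambda\Id)^{-1}h_t$. Hence
\begin{talign*}
\tfrac{\dd}{\dd t}\calF(t) = -2\calF(t) + R(\lambda,t),\qquad R(\lambda,t):=\lambda\langle h_t,(S_{\mu_t}+\lambda\Id)^{-1}h_t\rangle_\calH = \|r_\lambda(S_{\mu_t})^{1/2}h_t\|_\calH^2 ,
\end{talign*}
where $r_\lambda(s)=\lambda/(s+\lambda)\in(0,1]$. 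Grönwall's lemma (or multiplying by $e^{2t}$ and integrating) gives $\calF(T)\le e^{-2T}\calF(0)+e^{-2T}\int_0^T e^{2t}R(\lambda,t)\,\dd t$. Multiplying by 2 yields the stated bound.

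\textbf{Residual as $\lambda\to0$.} Since $S_{\mu_t}$ is self-adjoint, positive and compact (\Cref{lem:hilbertschmidt}), I will use its spectral decomposition $S_{\mu_t}=\sum_i\sigma_i e_i\otimes e_i$. Then $R(\lambda,t)=\sum_i \frac{\lambda}{\sigma_i+\lambda}\langle h_t,e_i\rangle^2$ plus a contribution from $\ker S_{\mu_t}$. The kernel contribution vanishes because $h_t\in\overline{\ran(S_{\mu_t})}=(\ker S_{\mu_t})^\perp$. Each summand tends to $0$ as $\lambda\to0^+$ and is dominated by $\langle h_t,e_i\rangle^2$, which is summable, so $R(\lambda,t)\to0$ by dominated convergence. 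The range assumption is needed exactly to kill the null-space part, which would otherwise contribute $\|P_{\ker}h_t\|^2$ for every $\lambda$.

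\textbf{Rate under the source condition.} If $h_t=S_{\mu_t}^{\alpha/2}g_t$, then $R(\lambda,t)=\sum_i\frac{\lambda\sigma_i^\alpha}{\sigma_i+\lambda}\langle g_t,e_i\rangle^2$. The elementary bound $\sup_{s\ge0}\frac{\lambda s^\alpha}{s+\lambda}\le\lambda^\alpha$ for $\alpha\in(0,1]$ (by Young's inequality, $s^\alpha\lambda^{1-\alpha}\le s+\lambda$) gives $R(\lambda,t)\le\lambda^\alpha\|g_t\|_\calH^2$.

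\textbf{Main obstacle.} I expect the hardest step to be the rigorous dissipation identity. This requires differentiating $t\mapsto\|m_{\mu_t}-m_\pi\|^2$ along an absolutely continuous curve, as in the chain rule of \citet{ambrosio2005gradient}, when the kernel is only locally bounded and of polynomial growth. I would handle it by using the compact support of $\mu_0$ together with the non-explosion in \Cref{thm:ac_curve}, which keeps $\mu_t$ supported in a fixed ball $B(0,R)$ on $[0,T]$. There \Cref{ass:bounded} gives uniform bounds on $\nabla h_t$ and $\nabla f_{\mu_t,\pi}$. With these bounds, differentiation under the integral along characteristics is legitimate, and it remains to check that $\int\nabla h_t\cdot\nabla f\,\dd\mu_t=\langle D_{\mu_t}h_t,D_{\mu_t}f\rangle$. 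To get $\|g_t\|$ uniform in $t$, if that is needed, I would simply absorb it into the $\calO(\cdot)$.
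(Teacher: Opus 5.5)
Your proposal is correct and follows essentially the same route as the paper: the dissipation identity obtained by differentiating along characteristics, the splitting $S_{\mu_t}(S_{\mu_t}+\lambda\Id)^{-1} = \Id - \lambda(S_{\mu_t}+\lambda\Id)^{-1}$ (which the paper carries out in the eigenbasis), an integrating factor, dominated convergence for $R(\lambda,t)\to 0$, and the bound $\sup_{s\ge 0}\lambda s^\alpha/(s+\lambda)\le\lambda^\alpha$ under the source condition. The only minor difference is in justifying the time derivative: you confine $\mu_t$ to a fixed ball using the compact support of $\mu_0$ and the uniform linear-growth bound on the velocity, whereas the paper combines linear-growth bounds on $\nabla(m_{\mu_t}-m_\pi)$ and $\nabla f_{\mu_t,\pi}$ with finite second moments; either justification works.
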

Proof is given in \Cref{sec:proof_ct_convergence}. The condition $m_{\mu_t}-m_\pi\in\overline{\ran(S_{\mu_t})}$ is mild: it holds when $\mu_t$ has full support on $\R^d$; see \Cref{lem:full_support}. 
\Cref{thm:ct_convergence} shows that the SrMMD flow converges exponentially fast to the target distribution $\pi$, as measured either by the MMD (or KSD with a Stein kernel $k_\pi$), up to a residual barrier term. 
This barrier term is of explicit order $\calO(\lambda^\alpha)$ under a stronger condition 
$m_{\mu_t} - m_\pi\in \ran S_{\mu_t}^{\alpha/2}$, which quantifies the smoothness of $m_{\mu_t} - m_\pi$ relative to the operator $S_{\mu_t}$. Such source conditions are standard in nonparametric regression~\citep{cucker2007learning,fischer2020sobolev,chennested,chen2025regularized}. See \Cref{rem:source} for a detailed discussion. 
Our \Cref{thm:ct_convergence} shall be contrasted with the convergence of the original MMD flow (c.f. Theorem 6 of \cite{arbel2018gradient}) where the barrier term cannot be explicitly characterized. 

In the context of sampling from an unnormalized target $\pi$, most of the existing convergence rates are established under various tail conditions on $\pi$, such as Poincaré (sub-exponential tail) or log-Sobolev (sub-Gaussian tail)~\citep{wibisono2018sampling,vempala2019rapid,mousavi2023towards,chewi2024analysis}. 
By contrast, \Cref{thm:ct_convergence} relies only on conditions on $\pi$ that are implicitly encoded through the Stein kernel $k_\pi$ in \Cref{ass:bounded,ass:smmd_growth}. 
As a result, the SrMMD flow enjoys an explicit decay rate $\exp(-2T)$ independent of the tail conditions of $\pi$.
A likely explanation is that our convergence is stated with the weaker MMD/KSD metric compared with KL or $\chi^2$-divergence metrics as in ULA~\citep{chewi2024analysis,vempala2019rapid} or SVGD~\citep{korba2020non}. 
We postpone a detailed iteration-complexity analysis in this context to the discrete-time setting next, after establishing the corresponding discrete-time convergence.


\begin{rem}[Source condition]\label{rem:source}
    To motivate the condition that $m_{\mu} - m_\pi\in \ran S_{\mu}^{\alpha/2}$, consider a periodic Mat\'ern kernel on a $d$-dimensional torus $\calX=\mathbb T^d=[0,2\pi)^d$ whose RKHS is norm equivalent to a Sobolev space $H^r(\calX)$ for $r\geq1$~\citep[Corollary 10.48]{wendland2004scattered}. 
    Then, according to \Cref{lem:sobolevequivalent},  $$\ran(S_\mu^{\alpha/2}) \cong H^{r(\alpha+1) - \alpha}(\calX) \cap \mathrm{span}\{1\}^{\perp_\calH}.$$ 
    Since $\langle m_\mu - m_\pi, 1\rangle_\calH = 0$, so the $\mathrm{span}\{1\}^{\perp_\calH}$ part is satisfied. 
    Next, we consider the $H^{r(\alpha+1) - \alpha}(\calX)$ part. 
    Since $r>1$, this implies that $m_\mu-m_\pi$ belongs to a smaller Sobolev space $H^{r(\alpha+1) - \alpha}(\calX) \subset H^r(\calX) \cong \calH$.  
    A sufficient condition for this to hold is when $\frac{\dd\pi}{\dd \mu}-1\in H^{r\alpha-r-\alpha}(\calX)$ (\Cref{lem:suff_range}), which is weaker than the condition imposed in DrMMD flow that $\frac{\dd\mu}{\dd\pi}-1 \in H^{r\alpha}(\calX)$. 
\end{rem}
\vspace{-5pt}


\begin{table}[t]
\centering
\begin{tabular}{lll}
\toprule
Algorithm & Metric & Iteration complexity \\
\midrule 
LMC~\citep{chewi2024analysis,vempala2019rapid} & KL / $\chi^2$ & $\tilde{\calO}(\delta^{-1} \beta)$ \\
SVGD~\citep{balasubramanian2024improved} & $\mathrm{KSD}^2$ & $\calO(\delta^{-1})$ \\
R-SVGD~\citep{he2022regularized} & Fisher & $\calO(\delta^{-1})$ \\
DrMMD~\citep{chen2025regularized} & $\chi^2$ & $\tilde{\calO}(\delta^{-(\alpha+1)/\alpha} \beta)$ \\
SrMMD (ours) & $\mmd^2$ / $\mathrm{KSD}^2$ & $\tilde{\calO}(\delta^{-1/\alpha})$ \\
\bottomrule
\end{tabular}
\caption{Iteration complexity of several sampling methods. Here, $\beta$ denotes the isoperimetric constant of $\pi$, and $\alpha$ denotes the regularity parameter of the flow. 
The guarantees are stated in different metrics: $\chi^2$ divergence controls KL divergence and both typically control KSD~\citep{gorham2017measuring,kanagawa2025controlling}.}
\label{tab:iteration_complexity}
\vspace{-15pt}
\end{table}


SrMMD descent is the forward Euler discretization of the SrMMD flow. Specifically, given an initial distribution $\mu_0\in\calP_2(\R^d)$, a learning rate $\gamma >0$, and a fixed time horizon $S\in\N$, SrMMD descent is a sequence of probability measures $(\mu_s)_{s\in [S]}$ defined by: 
\begin{talign}\label{eq:srmmd_euler}
    \mu_{s+1} = ( \Id - \gamma \nabla f_{\mu_s,\pi})_{\#} \mu_s .
\end{talign}
Similarly as in continuous time, the SrMMD descent scheme is well-posed under \Cref{ass:bounded,ass:smmd_growth}. 
We now turn to the convergence analysis of the SrMMD descent. 

\begin{ass}[Growth II]\label{ass:growth_2}
There exists a constant $M>0$ such that for any $x, y\in\R^d$, $\|\bH_1 k(x, y)\|_{\mathrm{op}} \leq M (1 + \|y\|^2)$.  
\end{ass}
The additional condition controls the growth of the Hessian of the kernel $k$. For a Langevin Stein kernel $k_\pi$, this condition is satisfied when the score function $\mathfrak{s}$ is Lipschitz and has a bounded Hessian (See \Cref{prop:suff_stein_kernel}). Such additional conditions on $\mathfrak{s}$ are commonly used for \emph{discrete-time} convergence analysis of sampling, e.g., \cite[Assumption 2]{balasubramanian2024improved}, \cite[Assumption A2]{korba2020non}, \cite{chewi2024analysis}, etc.

\begin{theorem}[Exponential decay in discrete time]\label{thm:dt_convergence}
Suppose \Cref{ass:smmd_growth,ass:bounded,ass:growth_2} hold. 
Let $S\in\N$ be a fixed time horizon. 
Let $\gamma\in(0,\frac{1}{2})$ be a fixed step size. 
Let $(\mu_s)_{s\in[S]}$ be the SrMMD descent defined in Eq.~\eqref{eq:srmmd_euler}. 
Suppose that $(\mu_s)_{s\in[S]}$ has uniformly bounded second moment $C>0$. 
Suppose there exists $0 < \alpha \leq 1$ such that $m_{\mu_s} - m_\pi = S_{\mu_s}^{\alpha/2} q_s$ with $q_s\in\calH$ for all $s\in[S]$. 
Then, 
\begin{talign*}
    \mmd^2(\mu_S, \pi) \leq \exp(-2 S\gamma) \mmd^2(\mu_0, \pi) + 4 (d C_0)^\alpha \gamma^{1+\alpha} \sum_{s=0}^{S-1} (1-2 \gamma)^{S-1-s} \|q_s\|_{\cal H}^2.
\end{talign*}
Here, the constant $C_0$ only depends on $C$ and $M$. 
If additionally there exists a constant $Q$ independent of $S$ such that $\|q_s\|_\calH^2\le Q$ for all $s\in[S]$, then with $\lambda = dC_0\gamma$, we have
\begin{talign*}
    \mmd^2(\mu_S, \pi) \le\exp(-2 S \gamma)\mmd^2(\mu_0, \pi) + 4 (dC_0)^\alpha \gamma^{\alpha}Q  . 
\end{talign*}
\end{theorem}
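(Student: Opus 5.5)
The plan is a one-step descent inequality for $F(\mu):=\tfrac12\mmd^2(\mu,\pi)=\tfrac12\|h_\mu\|_\calH^2$, where $h_\mu:=m_\mu-m_\pi$, followed by a standard unrolling. Fix $s$ and write $h=h_{\mu_s}$, $S=S_{\mu_s}$, $f=f_{\mu_s,\pi}=(S+\lambda\Id)^{-1}h$ (by \Cref{prop:expressionofsmmd}), and $\Delta m=m_{\mu_{s+1}}-m_{\mu_s}$. Then
\begin{talign*}
F(\mu_{s+1})-F(\mu_s)=\langle h,\Delta m\rangle_\calH+\tfrac12\|\Delta m\|_\calH^2 .
\end{talign*}
By the reproducing property and the push-forward definition \eqref{eq:srmmd_euler}, $\langle h,\Delta m\rangle_\calH=\int [h(x-\gamma\nabla f(x))-h(x)]\,\dd\mu_s(x)$.

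\textbf{Step 1 (first-order term and Taylor remainder).} Expand $h$ to second order along the segment from $x$ to $x-\gamma\nabla f(x)$. The linear term is
\begin{talign*}
-\gamma\langle\nabla h,\nabla f\rangle_{L_2^d(\mu_s)}=-\gamma\langle h,Sf\rangle_\calH=-\gamma\bigl(\|Sf\|_\calH^2+\lambda\langle Sf,f\rangle_\calH\bigr),
\end{talign*}
using $h=(S+\lambda\Id)f$. For the remainder, note $\bH h(z)=\int \bH_1k(z,y)\,\dd(\mu_s-\pi)(y)$. \Cref{ass:growth_2} together with the uniform second-moment bound $C$ (and $\pi\in\calP_2$) gives $\sup_z\|\bH h(z)\|_{\op}\le M(2+C+\int\|y\|^2\dd\pi)$. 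Hence the remainder is at most $c_1\gamma^2\|\nabla f\|_{L_2^d(\mu_s)}^2=c_1\gamma^2\langle Sf,f\rangle_\calH$.

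\textbf{Step 2 (quadratic term).} Write $\Delta m=\int[k(x-\gamma\nabla f(x),\cdot)-k(x,\cdot)]\,\dd\mu_s(x)$. By the mean value theorem in $\calH$, $\|\Delta m\|_\calH\le\gamma\int\sup_{t\in[0,1]}\|\nabla_1k(x_t,\cdot)\|\,\|\nabla f(x)\|\,\dd\mu_s$, where $x_t$ lies on the segment. The derivative feature norms satisfy $\|\partial_{1,i}k(z,\cdot)\|_\calH^2=\partial_{1,i}\partial_{2,i}k(z,z)\le M(1+\|z\|^2)$ by \Cref{ass:smmd_growth}. With Cauchy--Schwarz and the moment bound, this gives $\|\Delta m\|_\calH^2\le c_2\,d\,\gamma^2\langle Sf,f\rangle_\calH$.

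This step is the main obstacle. The points $x_t$ are displaced by $\gamma\nabla f$, so one must control $\|\nabla f(x)\|$ pointwise, via $\|\nabla f(x)\|\lesssim\|f\|_\calH\sqrt{d(1+\|x\|^2)}$ and $\|f\|_\calH\le\lambda^{-1}\|h\|_\calH$. One must also check that the resulting moment factors are absorbed into a constant $C_0$ depending only on $C$ and $M$; the condition $\gamma<1/2$ should help keep the displaced moments controlled. I would first try to bound these crudely by $\|x\|$ and $\|x-\gamma\nabla f(x)\|$ and then use the uniform moment bound on $\mu_{s+1}$.

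\textbf{Step 3 (choice of $\lambda$ and source condition).} Combining Steps 1--2 gives
\begin{talign*}
F(\mu_{s+1})\le F(\mu_s)-\gamma\|Sf\|_\calH^2-\gamma(\lambda-dC_0\gamma)\langle Sf,f\rangle_\calH .
\end{talign*}
With $\lambda=dC_0\gamma$ the last term vanishes. Since $\|Sf\|_\calH^2=\|h-\lambda f\|_\calH^2\ge\|h\|_\calH^2-2\lambda\langle h,f\rangle_\calH$, it remains to bound $\lambda\langle h,f\rangle_\calH$. Using $h=S^{\alpha/2}q_s$ and spectral calculus, $\lambda\langle h,(S+\lambda)^{-1}h\rangle_\calH\le\sup_{t\ge0}\frac{\lambda t^\alpha}{t+\lambda}\|q_s\|_\calH^2\le\lambda^\alpha\|q_s\|_\calH^2$. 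Therefore
\begin{talign*}
\mmd^2(\mu_{s+1},\pi)\le(1-2\gamma)\mmd^2(\mu_s,\pi)+4\gamma\lambda^\alpha\|q_s\|_\calH^2
=(1-2\gamma)\mmd^2(\mu_s,\pi)+4(dC_0)^\alpha\gamma^{1+\alpha}\|q_s\|_\calH^2 .
\end{talign*}

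\textbf{Step 4 (unrolling).} Iterating over $s=0,\dots,S-1$ and using $(1-2\gamma)^S\le e^{-2S\gamma}$ gives the first bound. Under $\|q_s\|_\calH^2\le Q$, the geometric sum satisfies $\sum_s(1-2\gamma)^{S-1-s}\le 1/(2\gamma)$, which yields the second bound (with room to spare in the constant).
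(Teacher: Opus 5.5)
Your proof is correct and produces the same one-step recursion as the paper, $\mmd^2(\mu_{s+1},\pi)\le(1-2\gamma)\mmd^2(\mu_s,\pi)+4(dC_0)^\alpha\gamma^{1+\alpha}\|q_s\|_\calH^2$ (up to the value of $C_0$), followed by the same unrolling. The route to it differs in two respects.

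First, the paper Taylor-expands $t\mapsto\frac12\mmd^2(\rho_t,\pi)$ along $\rho_t=(\Id-t\nabla f_s)_{\#}\mu_s$. Its second derivative contains $\|\frac{\dd}{\dd t}m_{\rho_t}\|_\calH^2$ and a Hessian term built from $m_{\rho_t}-m_\pi$, so the paper must control the second moments of every intermediate $\rho_t$. You instead use the exact quadratic identity $F(\mu_{s+1})-F(\mu_s)=\langle h,\Delta m\rangle_\calH+\frac12\|\Delta m\|_\calH^2$ and Taylor-expand the \emph{fixed} function $h=m_{\mu_s}-m_\pi$ pointwise. Because Growth II is uniform in the first argument, your Hessian remainder involves only the moments of $\mu_s$ and $\pi$, and no second time derivative of the MMD has to be justified.

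Second, the paper bounds $\lambda\langle h,(S+\lambda\Id)^{-1}h\rangle_\calH\le\lambda^\alpha\|q_s\|_\calH^2$ and $\langle S(S+\lambda\Id)^{-1}h,(S+\lambda\Id)^{-1}h\rangle_\calH\le\lambda^{\alpha-1}\|q_s\|_\calH^2$ separately, then balances them with $\lambda=dC_0\gamma$. You observe that the discretization error is at most $dC_0\gamma^2\langle Sf,f\rangle_\calH$, while the descent term contains $-\gamma\lambda\langle Sf,f\rangle_\calH$, so the two cancel outright. This needs only the first spectral inequality. It also shows that $\mmd^2(\mu_{s+1},\pi)\le(1-2\gamma)\mmd^2(\mu_s,\pi)+4\gamma\lambda^\alpha\|q_s\|_\calH^2$ holds for every $\lambda\ge dC_0\gamma$, not only at the balancing point.

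The step you flagged as the main obstacle is not one, and the crude route you suggest is the right one; the paper does the same via convexity of $\|\cdot\|^2$ along the interpolation. Since $x_\tau$ lies on the segment from $x$ to $x-\gamma\nabla f(x)$, we have $\|x_\tau\|^2\le\|x\|^2+\|x-\gamma\nabla f(x)\|^2$. Its $\mu_s$-integral is at most $2C$ because $\mu_{s+1}=(\Id-\gamma\nabla f)_{\#}\mu_s$. Hence $\|\Delta m\|_\calH^2\le dM(1+2C)\gamma^2\|\nabla f\|_{L_2^d(\mu_s)}^2$.

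Avoid the pointwise alternative through $\|f\|_\calH\le\lambda^{-1}\|h\|_\calH$. It would bring $\lambda^{-1}$ and $\mmd(\mu_s,\pi)$ into $C_0$, which is circular once $\lambda$ is set to $dC_0\gamma$. The restriction $\gamma<\frac12$ plays no role in this step; it is only needed so that $1-2\gamma\in(0,1)$ when unrolling.

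Finally, your $C_0$ also depends on $\int\|y\|^2\,\dd\pi(y)$. The paper's constant carries the same implicit dependence, so take $C$ to bound the second moment of $\pi$ as well.
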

\vspace{-5pt}
Proof is given in \Cref{sec:proof_dt_convergence}. 
Now we are ready to analyze the iteration complexity of SrMMD descent. To reach final error $\mmd^2(\mu_S, \pi) \leq \delta$ (or equivalently $\ksd^2(\mu_S, \pi) \leq \delta$), it suffices to pick a step size $\gamma = \calO(\delta^{1/\alpha})$ and iteration number $S = \calO(\delta^{-1/\alpha} \log(\delta^{-1}))$. 
Among existing results, the one closest in spirit to ours is Stein variational gradient descent (SVGD)~\citep{liu2016stein}, which has recently been shown to admit an iteration complexity of $\calO(\delta^{-1})$~\citep{balasubramanian2024improved} also under the KSD metric. 
Our iteration complexity is constrained by $0 < \alpha \leq 1$ and does not improve for further smoothness $\alpha > 1$. 
This limitation arises from us essentially using Tikhonov regularization $S_\mu(S_\mu+\lambda\Id)^{-1}$ (see Eq.~\eqref{eq:source}) ---a phenomenon known as the saturation effect~\citep{neubauer1997converse,li2024towards,meunier2024optimal}. 
Such saturation can be avoided by using other spectral regularization strategies, 
such as Showalter, Landweber, or cutoff regularization~\citep{engl1996regularization,BAUER200752,hagrass2022spectral,hagrass2023spectralgof}. 
Moreover, the KSD guarantee of SVGD in \cite{balasubramanian2024improved} is \emph{ergodic}, in the sense that it controls the averaged quantity
$\frac{1}{S}\sum_{s=1}^S \ksd^2(\mu_s,\pi)$ in contrast to our last-iterate guarantee.

Beyond this, most existing iteration-complexity results in the sampling literature rely on structural constants  $\beta$ of the targets $\pi$, such as Poincaré or log-Sobolev constants~\citep{vempala2019rapid,chewi2020svgd,chewi2024analysis}. These constants, however, are exponential in the separation between modes when the distribution $\pi$ is multi-modal~\citep{menz2014poincare,chewi2024logconcavesampling}, making convergence too slow to occur in a reasonable number of iterations.  
In contrast, our convergence result does not rely on such constants, and therefore does not suffer from the multi-modality of $\pi$. 
The price we way here is the extra regularity condition $m_\mu-m_\pi\in\ran(S_\mu^{\alpha/2})$ which we have discussed in \Cref{rem:source}.  
We admit that this condition is hard to verify in practice. 
See \Cref{tab:iteration_complexity} for a comparison on the iteration complexities of all methods. 
The original KSD flow is shown to converge under a Stein-Poincaré inequality~\citep{korba2020non}---a condition that is proved never to hold~\citep{duncan2019geometry}. 
This highlights another advantage of our Sobolev regularization. 


\begin{rem}[Connection with Sobolev descent~\citep{mroueh2019sobolev}]
A closely related work is \cite{mroueh2019sobolev} and its follow-ups~\cite{mroueh2021convergence,mroueh2020unbalanced}, which first introduced the idea of regularizing the MMD witness function using the operator $S_\mu$, referred to therein as the Kernel Derivative Gramian Embedding. The motivation came from GANs, and the resulting method was proposed as a tractable proxy for GAN training. In the present work, we revisit and develop this idea in the context of flow-based methods, which have since become a more popular paradigm for generative modeling. 
Our work makes several new theoretical and algorithmic contributions. First, we establish well-posedness of the flow under milder kernel conditions. 
Second, we provide a rigorous convergence analysis in both continuous and discrete time. 
Third, we extend the approach to sampling from unnormalized targets via Stein kernels. Fourth, we derive an efficient and tractable particle implementation. 
\end{rem}

\begin{figure}[t]
    \centering

    \includegraphics[width=0.99\linewidth, trim=40 0 0 0,
            clip]{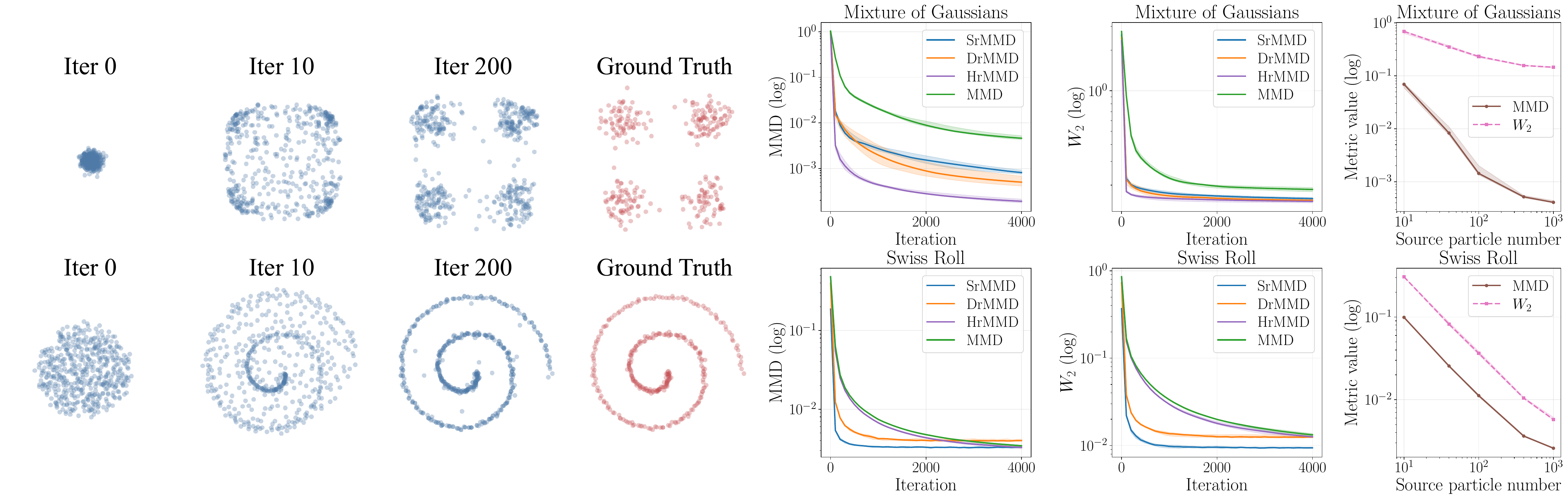}
    \vspace{-8pt}
    \caption{\textbf{Top row:} Mixture of Gaussians. \textbf{Bottom row:} Swiss roll. From left to right: particle evolution during training; MMD and \(W_2\) versus iteration; and final MMD and \(W_2\) versus the number of source particles \(N\). Results are aggregated over 10 random seeds; solid curves show the median, and shaded regions show the 25\%--75\% percentile. 
    }
    \label{fig:all_experiments}
    \vspace{-15pt}
\end{figure}
\vspace{-10pt}
\section{Particle Implementation of SrMMD Descent}\label{sec:particle_flow}
\vspace{-5pt}
In practice, the SrMMD descent scheme in Eq.~\eqref{eq:srmmd_euler} is implemented by a system of interacting particles. 
Let $\{x_0^{(i)}\}_{i=1}^N$ be $N$ i.i.d samples from the initial distribution $\mu_0$ on $\R^d$. 
For a learning rate $\gamma >0$, and a fixed time horizon $S\in\N$, 
the SrMMD particle descent is defined as: 
\begin{talign}\label{eq:particledescent}
    x_{s+1}^{(i)} = x_s^{(i)}-\gamma\nabla f_{\hat \mu_s, \pi}(x_s^{(i)}), \quad \forall s \in[S] . 
\end{talign}
where $\hat\mu_s = \frac1N\sum_{i=1}^N \delta_{x_s^{(i)}}$ 
denote the empirical distribution of the particles at time step $s$. 
\Cref{prop:closeformofsmmd} below shows that witness function $f_{\hat \mu_n, \pi}$ admits a tractable closed-form expression. 
\begin{prop}\label{prop:closeformofsmmd}
Let $\hat\mu =\frac1N \sum_{i=1}^N \delta_{x^{(i)}}$ be an empirical distribution. 
Define $D_{XX}\in\R^{(Nd)\times N}$ with $[D_{XX}]_{(i,l), j}=\partial_{1,l} k(x^{(i)}, x^{(j)})$, and $H_{XX}\in\R^{(Nd) \times (Nd)}$ with $[H_{XX}]_{(i,l), (j,m)} = \partial_{1,l} \partial_{2,m} k(x^{(i)}, x^{(j)})$\footnote{Strictly speaking, $D_{XX}$ and $H_{XX}$ here are higher-order tensors. To simplify notation, we implicitly identify them with their vectorized matrix representations, so that all subsequent computations can be written using standard matrix operations.}. 
Also, denote $\mathbb K_X: \R^d \to \R^N$ and $\mathbb D_X: \R^d \to \R^{Nd}$: 
\begin{talign*}
    \mathbb K_X(z) = [ k(x^{(1)},z), \ldots, k(x^{(N)},z)]^\top, \quad 
    \mathbb D_X(z) = [ \partial_{1, 1} k(x^{(1)},z), \ldots, \partial_{1, d} k(x^{(N)},z)]^\top . 
\end{talign*}
Let $\1_N\in\R^N$ be a vector of all $1$s. 
Then, the witness function $f_{\hat\mu, \pi}\in \calH$ is given by 
\begin{talign*}
    f_{\hat \mu, \pi}(z) &= \frac{\1_N^\top \mathbb K_X(z)}{N \lambda}  -\frac{\E_{\pi}[k(Y, z)]}{\lambda} - \mathbb D_X(z)^\top (H_{XX} + N\lambda\Id)^{-1} \pth{\frac{D_{XX}\1_N}{N \lambda} - \frac{\E_{\pi}[\mathbb D_X(Y)]}{\lambda} }.
\end{talign*}
\end{prop}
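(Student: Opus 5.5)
The plan is to start from \Cref{prop:expressionofsmmd}, which gives $f_{\hat\mu,\pi}=(S_{\hat\mu}+\lambda\Id)^{-1}(m_{\hat\mu}-m_\pi)$. The inverse will be evaluated with a Woodbury-type (push-through) identity that exploits the finite-rank structure of $S_{\hat\mu}$.

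\textbf{Step 1: factorize $S_{\hat\mu}$.} For the empirical measure, the operator $D_{\hat\mu}:\calH\to L_2^d(\hat\mu)$ can be identified with the sampling operator $A:\calH\to\R^{Nd}$, $Af=N^{-1/2}[\partial_l f(x^{(i)})]_{(i,l)}$. By the derivative reproducing property, $\partial_l f(x)=\langle f,\partial_{1,l}k(x,\cdot)\rangle_\calH$, so the adjoint is
\[
A^*v=N^{-1/2}\sum_{i,l} v_{(i,l)}\,\partial_{1,l}k(x^{(i)},\cdot),
\]
and $S_{\hat\mu}=A^*A$. The same reproducing property gives $AA^*=N^{-1}H_{XX}$, since $\langle \partial_{1,l}k(x^{(i)},\cdot),\partial_{1,m}k(x^{(j)},\cdot)\rangle_\calH=\partial_{1,l}\partial_{2,m}k(x^{(i)},x^{(j)})$.

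\textbf{Step 2: apply the push-through identity.} The identity $(A^*A+\lambda\Id)^{-1}=\lambda^{-1}\big(\Id-A^*(AA^*+\lambda\Id)^{-1}A\big)$ is verified directly: multiply on the left by $A^*A+\lambda\Id$ and use $A(A^*A+\lambda)^{-1}=(AA^*+\lambda)^{-1}A$. Since $AA^*+\lambda\Id$ is positive definite on $\R^{Nd}$, all inverses exist. Applying this with $h=m_{\hat\mu}-m_\pi$ gives
\[
f_{\hat\mu,\pi}=\lambda^{-1}h-\lambda^{-1}A^*\big(N^{-1}H_{XX}+\lambda\Id\big)^{-1}Ah .
\]

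\textbf{Step 3: evaluate at $z$.} The first term is $\lambda^{-1}h(z)=\frac{\1_N^\top\mathbb K_X(z)}{N\lambda}-\frac{\E_\pi[k(Y,z)]}{\lambda}$. For the second term, compute the entries of $Ah$:
\[
[Ah]_{(i,l)}=N^{-1/2}\partial_l h(x^{(i)})=N^{-1/2}\Big(N^{-1}\sum_j \partial_{1,l}k(x^{(i)},x^{(j)})-\E_\pi[\partial_{1,l}k(x^{(i)},Y)]\Big),
\]
after interchanging differentiation and expectation. Evaluating $A^*$ at $z$ gives $(A^*v)(z)=N^{-1/2}\mathbb D_X(z)^\top v$.

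\textbf{Step 4: collect factors.} The two $N^{-1/2}$ factors combine with $(N^{-1}H_{XX}+\lambda\Id)^{-1}=N(H_{XX}+N\lambda\Id)^{-1}$, so the $N$'s cancel. This leaves
\[
-\,\mathbb D_X(z)^\top(H_{XX}+N\lambda\Id)^{-1}\Big(\frac{D_{XX}\1_N}{N\lambda}-\frac{\E_\pi[\mathbb D_X(Y)]}{\lambda}\Big).
\]
This step requires matching index conventions: the vector with entries $\sum_j\partial_{1,l}k(x^{(i)},x^{(j)})$ is exactly $D_{XX}\1_N$, and the vector with entries $\E_\pi[\partial_{1,l}k(x^{(i)},Y)]$ is what the statement denotes $\E_\pi[\mathbb D_X(Y)]$. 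Here $\mathbb D_X(y)$ is read with entries $\partial_{1,l}k(x^{(i)},y)$, where the derivative is taken in the particle argument.

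\textbf{Main obstacle.} The delicate part is rigorously justifying the derivative reproducing property and the interchange of $\partial_l$ with $\E_\pi$ inside $m_\pi$. For this I would invoke the smoothness and integrability hypotheses of \Cref{prop:expressionofsmmd} together with \Cref{lem:D_mu}, and use dominated convergence for the interchange. The remaining bookkeeping of the $N$-scalings is routine.
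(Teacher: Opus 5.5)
Your proposal is correct and matches the paper's proof step by step: the same sampling operator with $S_{\hat\mu}=D_X^*D_X$ and $D_XD_X^*=\frac1N H_{XX}$, the same Woodbury (push-through) identity, and the same bookkeeping of the $N$ factors. The interchange you flag as the main obstacle is immediate, and the paper simply asserts it: since $\partial_l m_\pi(x)=\langle m_\pi,\partial_{1,l}k(x,\cdot)\rangle_\calH$, and the Bochner integral $m_\pi=\int k(y,\cdot)\,\dd\pi(y)$ (which exists because $\int k(y,y)\,\dd\pi(y)<\infty$) commutes with this bounded functional, no dominated-convergence argument is needed.
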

\vspace{-10pt}
Proof is in \Cref{sec:proof_closed_form}. 
In the context of generative modeling when $\pi$ is known up to $M$ i.i.d samples $\{y^{(j)}\}_{j=1}^M$, the above expectation with respect to $\pi$ is replaced by an empirical average of these $M$ samples. 
In the context of sampling when $\pi$ is known up to its normalization constant, the above expectation with respect to $\pi$ vanishes under a Stein kernel $k_\pi$. 
The additional gradient $\nabla f_{\hat{\mu},\pi}$ required in Eq.~\eqref{eq:particledescent} can be obtained with automatic differentiation libraries such as JAX~\citep{jax2018github}. 

We do not analyze convergence of the particle implementation in Eq.~\eqref{eq:particledescent} to the population scheme in Eq.~\eqref{eq:srmmd_euler}. The main difficulty is controlling the convergence of the empirical gradient covariance operator $S_{\hat{\mu}_s}$ to $S_{\mu_s}$ at each iterate, since the particles are not i.i.d. and standard concentration arguments do not apply. One could instead use propagation-of-chaos arguments~\citep{kac1956foundations}; see, e.g., \cite[Theorem 9]{arbel2019maximum} and \cite[Theorem 6.1]{chen2025regularized}, but the resulting bounds grow poly-exponentially in $S$ and are therefore not informative in practice.
\vspace{5pt}

\begin{figure}
    \centering
    \includegraphics[width=0.85\linewidth]{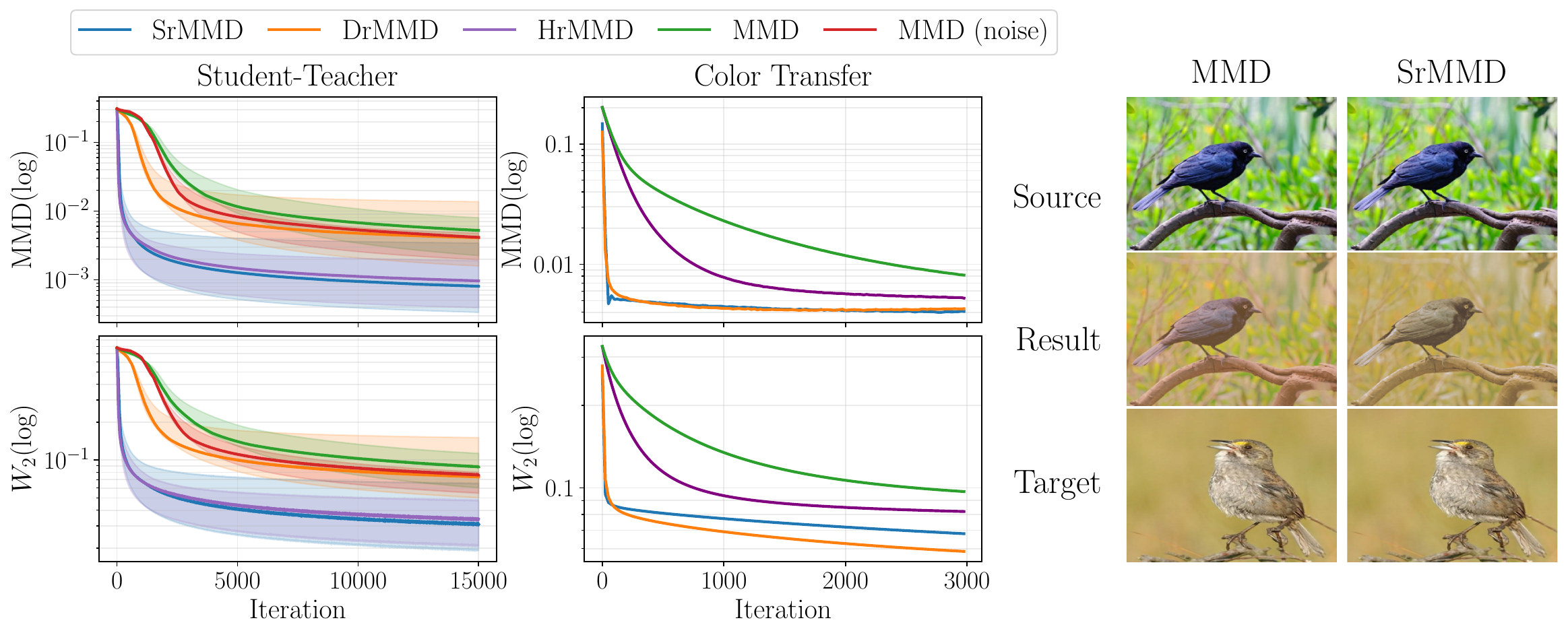}
    \vspace{-10pt}
    \caption{
    \textbf{Left:} Student-teacher. \textbf{Middle \& Right:} Color transfer. 
    Results are aggregated over 10 random seeds; solid curves show the median, and shaded regions show the 25\%--75\% percentile.}
    \label{fig:st_and_color}
    \vspace{-20pt}
\end{figure}
\vspace{-5pt}
\section{Numerical Experiments}\label{sec:experiments}
\vspace{-5pt}
In this section, we conduct extensive experiments to evaluate the empirical performance of SrMMD in the context of both generative modelling ($\pi$ is known from i.i.d. samples) and sampling ($\pi$ is known up to a normalization constant). 
The code to reproduce our experiments is available at \url{https://github.com/FerdTian/SrMMD}. 

\textbf{Toy Examples}
We consider two synthetic settings in which 1) the target distribution $\pi$ is a mixture of two-dimensional Gaussian distributions, and 2) $\pi$ has the shape of a Swiss roll. 
The Gaussian mixture provides a standard multimodal benchmark for assessing whether a sampling method can recover all modes of the target, whereas the Swiss roll is a simple example of a distribution supported on a manifold. 
We compare the proposed SrMMD flow with the vanilla MMD flow~\citep{arbel2019maximum}, the DrMMD flow~\citep{chen2025regularized}, and HrMMD flow, a hybrid of DrMMD and SrMMD that penalizes both the $L_2$ norm and the gradient norm of the witness function; see \Cref{sec:hrmmd} for details. 
For the Gaussian mixture experiments, we use the Gaussian kernel $k(x,y)=\exp(-(2\sigma^2)^{-1} \|x-y\|^2)$ with lengthscale $\sigma>0$, so that the kernel and the target $\pi$ satisfy all  \Cref{ass:bounded,ass:smmd_growth,ass:growth_2}. 
For the Swiss roll, we use the Riesz kernel $k(x,x')=-\|x-x'\|$; this choice is intentional and is meant to test whether the proposed SrMMD flow continues to perform well even when the conditions on kernels are not perfectly met. 

For all flows considered here, we use regularization parameter $\lambda = 0.1$ and step size $\gamma = 0.1$, except for HrMMD where $\gamma = 1.0$.
An ablation study on different values of $\lambda$ and $\sigma$ is given in 
\Cref{sec:more_toy}. All flows are simulated for $4{,}000$ iterations when convergence is empirically observed. 
From \Cref{fig:all_experiments}, SrMMD flow substantially outperforms the vanilla MMD flow. 
This is consistent with the theory: MMD flow has only polynomial convergence under restrictive conditions~\citep{arbel2019maximum}, whereas SrMMD flow enjoys exponential convergence; see \Cref{thm:ct_convergence,thm:dt_convergence}. 
The performance of SrMMD flow is comparable to that of DrMMD flow, suggesting that either $L_2$ or gradient regularization improves convergence. 
Finally, the hybrid HrMMD flow achieves the best performance on Gaussian mixtures, which is expected because it finds a best combination of both the regularization schemes.



\textbf{Student-Teacher network}
Next, we consider mean-field student-teacher networks, a more challenging and high-dimensional benchmark for MMD flows~\citep{arbel2019maximum,chen2025regularized}. 
The teacher network is $\Psi_\pi^\mathsf{T}(z)=\int\psi(z,\theta)\,\dd\pi(\theta)$,
where $\pi$ is the target teacher distribution, and the student network is $\Psi_\mu^\mathsf{S}(z) = \int\psi(z,\theta)\,\dd \mu(\theta)$,
where $\mu$ is the student distribution. 
Training the student network amounts to the optimization problem: 
$\min_{\mu} \E_{z\sim\mathbb{P}_{\mathrm{data}}}[\|\Psi_\pi^\mathsf{T}(z)-\Psi_\mu^\mathsf{S}(z)\|^2]$,
where the data distribution $\mathbb{P}_{\mathrm{data}}$ is a uniform spherical distribution on $\mathbb{R}^{50}$. 
This is equivalent to MMD flow under a new kernel given 
$k(\theta, \theta') = \E_{z\sim\mathbb{P}_{\mathrm{data}}}[\psi(z,\theta)^\top\psi(z,\theta')]$, which is bounded and hence satisfies all \Cref{ass:bounded,ass:smmd_growth,ass:growth_2}. 
See \Cref{sec:more_student} for details.  

We compare our SrMMD flow with the vanilla MMD flow (both with and without noise injection), DrMMD flow, and the hybrid version of HrMMD flow. 
All of them are simulated for $15,000$ iterations on 1000 training samples and then evaluated on a held-out 1000 validation samples.  
We set the regularization parameter to $\lambda = 0.1$ and step size $\gamma=0.1$ for all the flows. 
An ablation study on different values of $\lambda$ is given in 
\Cref{sec:more_student}. 
In the left panel of \Cref{fig:st_and_color}, our SrMMD flow and HrMMD flow achieve both faster convergence and better final performance.






\begin{figure}[t]
    \centering
    \begin{subfigure}[t]{0.24\linewidth}
        \centering
        \includegraphics[
            width=\linewidth,
            height=3.5cm,
            keepaspectratio,
            trim=0 0 0 0,
            clip
        ]{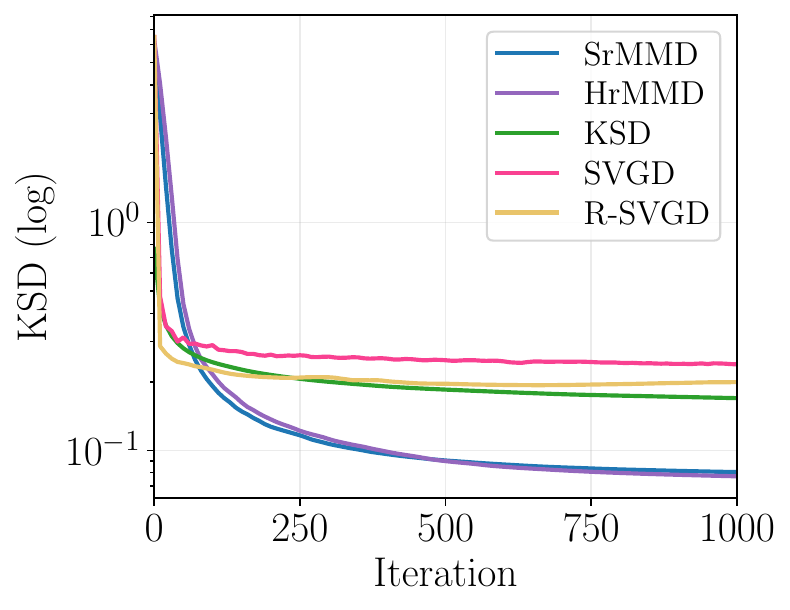}
    \end{subfigure}
    \begin{subfigure}[t]{0.75\linewidth}
        \centering
        \includegraphics[
            width=\linewidth,
            height=3.5cm,
            keepaspectratio,
            trim=0 0 0 0,
            clip
        ]{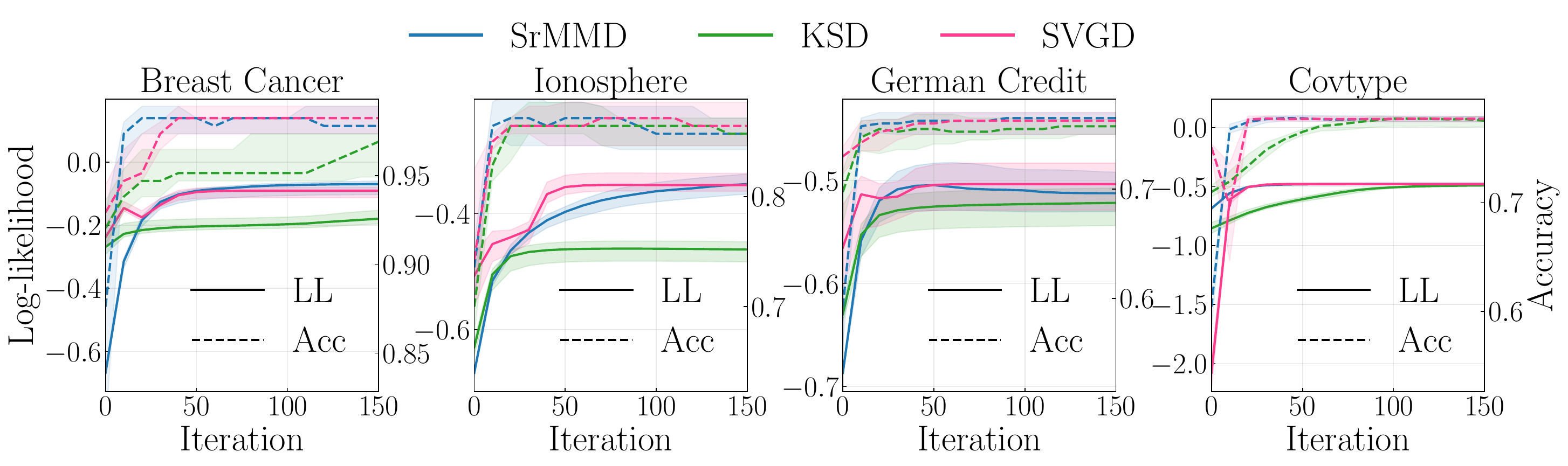}
    \end{subfigure}
    \vspace{-15pt}
    \caption{
    \textbf{Left:} Sampling from Gaussian mixtures. \textbf{Rest:} Bayesian logistic regression. Solid curves show the median, and shaded regions show the 25\%--75\% percentile. 
    }
    \label{fig:sampling}
    \vspace{-15pt}
\end{figure}
\textbf{Image color transfer}
We consider an image color transfer task following \cite{mroueh2019sobolev}, where the goal is to modify a source image $A$ so that its color distribution matches that of a target image $B$.
The task can be formulated as the alignment of the empirical color distributions of the source and target image in the RGB space $[0,1]^3$. 
To this end, we sample $N$ color particles from the source image and also $N$ color particles from the target image. 
We then evolve the source particles toward the target particles using our SrMMD flow, and compare it against the vanilla MMD flow, DrMMD flow, and HrMMD flow. After obtaining the transported particles $Y_T$, we reconstruct the recolored image by nearest-neighbor assignment; see \Cref{sec:more_color} for additional details. 
As shown in the middle panel of \Cref{fig:st_and_color}, SrMMD, DrMMD, and HrMMD all exhibit faster decay in both MMD and $W_2$ distance than the vanilla MMD flow, demonstrating the positive effect of regularization on convergence. This improvement is also visible in the final recolored images; see result figures in the right panel of \Cref{fig:st_and_color}, where SrMMD more faithfully transfers the target color to the source image.

\textbf{Sampling}
The previous experiments concern the ability of SrMMD flow as a generative model, here we evaluate the performance of SrMMD as a sampling method, under a Stein kernel $k_\pi$. 
The considered baselines are KSD flow~\citep{korba2021kernel} and Stein variational gradient descent (SVGD)~\citep{liu2016stein}. DrMMD flow is no longer applicable here since its regularization requires access to the unknown $\|f\|_{L_2(\pi)}$.  

First, we consider sampling from a two-dimensional mixture of 10 Gaussian distributions, a challenging multi-modal setting. 
The Stein kernel here is constructed from a base Gaussian kernel $k(x,x')=\exp(-(2\sigma^2)^{-1} \|x-x'\|^2)$ with bandwidth $\sigma=0.3$. 
In the left panel of \Cref{fig:sampling}, we perform SrMMD flow with $N=500$ source particles. 
More experiments with other values of $N$ can be found in \Cref{sec:more_sampling}. 
Beyond KSD flow, SVGD, we also compare against regularized SVGD (R-SVGD)~\citep{he2022regularized}, a spectral regularized version of SVGD.  
For SrMMD, HrMMD, and KSD flows, we use the best tuned step sizes $\gamma=0.01$ for KSD and $\gamma=0.1$ for SrMMD and HrMMD. 
For SVGD and R-SVGD, we use the best-tuned step size $\gamma=0.01$. 
The regularization parameters are $\lambda=0.5$ for SrMMD and HrMMD, and $\lambda=0.01$ for R-SVGD, each selected from a pre-specified grid according to final performance. 
The left panel of \Cref{fig:sampling} shows that SrMMD attains the fastest KSD decay and the lowest final KSD value. 

Next, we consider the task of Bayesian logistic regression, following \cite{liu2016stein,korba2021kernel}. Given a dataset $\calD$ with a Bernoulli likelihood, the target distribution $\pi$ is the posterior
$
\pi(x)\propto p(\calD \mid x)\,p(x),
$
which is available up to an intractable normalizing constant. 
This benchmark is considerably more challenging than the Gaussian mixture example above, since the resulting posterior distributions are higher-dimensional, with dimensions corresponding to \textsc{Breast Cancer} ($d=30$), \textsc{Ionosphere} ($d=34$), \textsc{German Credit} ($d=24$), and \textsc{Covtype} ($d=54$). 
We simulate SrMMD flow, KSD flow, and SVGD, each with $N=20$ particles, and report the evolution of test accuracy and test log-likelihood.
From \Cref{fig:sampling}, we see that SrMMD flow significantly improves upon the original KSD flow, while matching the performance of SVGD across all tasks. 
Since KSD flow is known to underperform relative to SVGD~\citep{korba2021kernel}, our Sobolev regularization provides a promising alternative.  


\vspace{-10pt}
\section{Conclusions, Limitations and Future Work}
\vspace{-5pt}
We propose SrMMD flow, a new flow with a gradient penalty to the MMD/KSD witness function for globally convergent continuous- and discrete-time schemes, applicable for both generative modelling and sampling. 
A primary limitation of SrMMD flow is its computational cost: each iteration requires inverting an $Nd\times Nd$ matrix, leading to a cost of $\calO(N^3 d^3)$. 
Nystr\"om approximations~\citep{williams2000using}, random features~\citep{rahimi2007random} and caching of vector field~\citep{galashov2024deep} are standard approaches for reducing such computational burdens. 
Since the present paper is primarily theoretical, our empirical evaluation of SrMMD as a sampling method is restricted to small-scale benchmarks, where it achieves performance comparable to SVGD. 
A next step is therefore to assess at larger-scale tasks where SVGD has proved effective, e.g., earthquake modelling~\citep{corrales2025annealed}.

\bibliographystyle{alpha}
\bibliography{main}

\newpage
\appendix

\begin{appendices}

\crefalias{section}{appendix}
\crefalias{subsection}{appendix}
\crefalias{subsubsection}{appendix}

\makeatletter
\@addtoreset{equation}{section} 
\makeatother
\setcounter{equation}{0}
\renewcommand{\theequation}{\thesection.\arabic{equation}}
\renewcommand{\theHequation}{\thesection.\arabic{equation}} 

\newcommand{\appsection}[1]{
  \refstepcounter{section}
  \section*{Appendix \thesection: #1}
  \addcontentsline{toc}{section}{Appendix \thesection: #1}
}

\onecolumn

{\hrule height 1mm}
\vspace*{-0pt}
\section*{\LARGE\bf \centering Supplementary Material
}
\vspace{8pt}
{\hrule height 0.1mm}
\vspace{24pt}



\section{Proof of Main Results}
\paragraph{Additional notations}
Denote a $d$-fold tensor product Hilbert space $\calH^d = \calH \otimes \ldots \otimes \calH$ and we say $\mbf \in \mathcal{H}^d$ if $\mbf = \left[f_1, \cdots, f_d\right]^{\top}$ such that $f_i \in \mathcal{H}$ for all $i \in[d]$. Denote $(L_2^d(\mu),\|\cdot\|_{L_2^d(\mu)})$ as the space of all vector valued $\mu$-square integrable measurable function with $\|f\|_{L_2^d(\mu)}^2:=\int_{\mathbb{R}^d}\|f(x)\|^2 \dd\mu(x)$. 
Define $\mathcal{L}^d$ as the Lebesgue measure on $\R^d$. $\mu \ll \calL^d$ means that a measure $\mu$ is absolutely continuous with respect to the Lebesgue measure. 

\subsection{Proof of \Cref{prop:expressionofsmmd}}\label{apd:proof_of_expressionofsmmd}
By definition of $S_\mu$, there is $\langle f, S_\mu f\rangle_\calH = \|\nabla f\|_{L_2^d(\mu)}^2$. 
By a direct calculation,
\begin{align*}
    \smmd(\mu,\pi) &= \sup_{f\in\mathcal{H}: \; \langle f, ({S}_{\mu} + \lambda\Id) f \rangle_\calH \leq 1}\langle f, m_\mu - m_\pi \rangle_\calH \\
    &= \sup_{f\in\mathcal{H}: \; \langle f, ({S}_{\mu} + \lambda\Id) f \rangle_\calH \leq 1} \left\langle ({S}_{\mu} + \lambda\Id)^{-1/2} ({S}_{\mu} + \lambda\Id)^{1/2} f, m_\mu - m_\pi \right\rangle_\calH \\
    &= \sup_{f\in\mathcal{H}: \; \langle f, ({S}_{\mu} + \lambda\Id) f \rangle_\calH \leq 1} \left\langle  ({S}_{\mu} + \lambda\Id)^{1/2} f, ({S}_{\mu} + \lambda\Id)^{-1/2} (m_\mu - m_\pi) \right\rangle_\calH \\
    &\leq \sup_{f\in\mathcal{H}: \; \langle f, ({S}_{\mu} + \lambda\Id) f \rangle_\calH \leq 1} \|({S}_{\mu} + \lambda\Id)^{1/2} f \|_\calH \cdot \|({S}_{\mu} + \lambda\Id)^{-1/2} (m_\mu - m_\pi) \|_\calH \\
    &= \|({S}_{\mu} + \lambda\Id)^{-1/2} (m_\mu - m_\pi) \|_\calH^2 .
\end{align*}
The maximum is admitted at 
$$
f=\bar{f}_{\mu,\pi}=\left( \|({S}_{\mu} + \lambda\Id)^{-1/2} (m_\mu - m_\pi) \|_\calH \right)^{-1} ({S}_{\mu} + \lambda\Id)^{-1}(m_\mu-m_\pi) . 
$$
For reasons of presentation and computational convenience, we instead work in the paper with the unnormalized witness
$$
f_{\mu,\pi}=(S_{\mu}+\lambda\Id)^{-1}(m_\mu-m_\pi).
$$
This is without loss for our purposes, since SrMMD itself is not used elsewhere in the paper. 

\subsection{Proof of \Cref{thm:ac_curve}}\label{sec:proof_ac_curve}
\begin{proof}
The proof of the first claim is a direct application of \cite[Theorem 8.3.1]{ambrosio2005gradient}, where we only need to show that $\int_0^T \| \nabla f_{\mu_t,\pi} \|_{L_2^d(\mu_t)} \dd t < \infty $. 
From Eq.~\eqref{eq:d_dt_mmd}, we have that 
$\frac{\dd}{\dd t}\mmd^2(\mu_t,\pi) < 0$. This indicates that $\mmd^2(\mu_t, \pi)$ is monotonically decreasing in time along the trajectory. 
Hence, since $\mmd(\mu_0,\pi) < C_{\mmd}$, there is $\sup_{t\in[0,\infty)} \mmd(\mu_t,\pi) < C_{\mmd} $. 

By definition of $f_{\mu_t,\pi}$ at any $t\in[0,T]$, we have 
\begin{align*}
    \| \nabla f_{\mu_t,\pi} \|_{L_2^d(\mu_t)}^2 &= \left\langle S_{\mu_t}(S_{\mu_t}  +\lambda\Id)^{-1} (m_{\mu_t} -m_\pi), (S_{\mu_t}  +\lambda\Id)^{-1} (m_{\mu_t} -m_\pi) \right\rangle_{\calH} \\
    &\leq \lambda^{-1} \left\| m_{\mu_t} -m_\pi \right\|_\calH^2 = \lambda^{-1} \mmd(\mu_t,\pi)^2 \leq \frac{C_{\mmd}^2}{\lambda}.  
\end{align*}
Therefore, we have proved that $\int_0^T \| \nabla f_{\mu_t,\pi} \|_{L_2^d(\mu_t)} \dd t \leq \frac{TC_{\mmd}}{\sqrt{\lambda}}$. By the converse part of \cite[Theorem 8.3.1]{ambrosio2005gradient}, $(\mu_t)_{t\in[0,T]}$ is an absolutely continuous curve with respect to $W_2$ in $\calP_2(\R^d)$. 

Next, to show uniqueness, we resort to \cite{bonnet2021differential}. To this end, we need to verify that hypothesis D1 in \cite{bonnet2021differential} are satisfied. 
Let $\calK=B(0,R)$ be a ball of radius $R$ in $\R^d$. 
Take any sequence $x_n \rightarrow x$ in $\calK$. Then, for any $t\in[0, T]$, 
\begin{align*}
    \left\| \nabla f_{\mu_t,\pi}(x_n) - \nabla f_{\mu_t,\pi}(x) \right\| \leq \|f_{\mu_t,\pi}\|_\calH \cdot \left\| \nabla_1 k(x, \cdot) - \nabla_1 k(x_n, \cdot) \right\|_{\calH^d} \to 0. 
\end{align*}
The last step holds from $\|f_{\mu_t,\pi}\|_\calH \leq\frac{\| m_{\mu_t} -m_\pi\|_\calH}{\lambda} < \infty$, and from \Cref{lem:local_lip}. Hence, D1(i) is satisfied. 
D1(iii) and D1(iv) are proved in \Cref{lem:bonnet}. 
Finally, D1(ii) is satisfied by the following derivations:
\begin{align}
    \| \nabla f_{\mu_t,\pi}(x) \| &= \left\| \nabla (S_{\mu_t}  +\lambda\Id)^{-1} (m_{\mu_t} -m_\pi) (x) \right\| \nonumber \\
    &\leq \sum_{j=1}^d \left| \left\langle \partial_{1, j}k(x,\cdot) , \; (S_{\mu_t}+\lambda\Id)^{-1} (m_{\mu_t} -m_\pi) \right\rangle_\calH \right| \nonumber \\
    &\leq \sum_{j=1}^d \| \partial_{1, j}k(x,\cdot)\|_\calH \cdot \frac{\| m_{\mu_t} -m_\pi\|_\calH}{\lambda} \leq d \sqrt{M} (\|x\| + 1)  \cdot \frac{C_{\mmd}}{\lambda} \label{eq:bound_f_mu_pi}. 
\end{align}
The last step holds by the upper bound on $|\partial_{1, j}\partial_{2, j} k(x,x)|$ in \Cref{ass:smmd_growth}, as well as the upper bound on $\mmd(\mu_t,\pi)$.  

Hence, by Theorem 4 of \cite{bonnet2021differential}, there exists a trajectory solving the continuity inclusion on any finite time interval $[0,T]$. 
In our SrMMD flow setting, once existence is established, uniqueness follows separately from the Cauchy--Lipschitz theory for continuity equations, for instance from Theorem 2 of \cite{bonnet2021differential}, since the velocity field satisfies the corresponding growth and local Lipschitz conditions as proved above. 
As the argument applies on every finite interval $[0,T]$, the trajectory cannot explode in finite time.

Finally, we are going to show that $\mu_t$ is absolutely continuous with respect to Lebesgue for all $t$. 
For any compact domain $B\subset \R^d$, since the vector field $\nabla f_{\mu_t,\pi}$ is locally Lipschitz and locally bounded as proved above, by \cite[Lemma 8.1.4]{ambrosio2005gradient}, the following ordinary differential equation admits admits a unique maximal solution in $[0,T]$. 
\begin{align*}
    \mathbf{T}(0, x) = x, \quad \frac{\dd}{\dd t} \mathbf{T}(t, x) = \nabla f_{\mu_t,\pi} \left(\mathbf{T}(t, x)\right), 
\end{align*}
Moreover, the SrMMD flow iterate at time $t$ has $\mu_t=\left(\mathbf{T}(t, \cdot) \right)_{\#} \mu_0$. Under the local Lipschitz and linear-growth assumptions, the mapping $\mathbf{T}(t, \cdot)$ is locally bi-Lipschitz on bounded sets proved in \Cref{lem:local_bilip_flow}; hence $\mathbf{T}(t, \cdot)^{-1}$ maps Lebesgue-null sets to Lebesgue-null sets. Therefore, if $\mu_0 $ is absolutely continuous with respect to Lebesgue measure $ \mathcal{L}^d$ on $\R^d$, then $\mu_t$ is also absolutely continuous with respect to Lebesgue measure $ \mathcal{L}^d$ on $\R^d$. 

\end{proof}

\subsection{Proof of \Cref{thm:ct_convergence}}\label{sec:proof_ct_convergence}
\begin{proof}
First, we consider the time-derivative, $\frac12\frac{\dd}{\dd t}\mmd^2(\mu_t,\pi)$.  
Let $(X_t)_{t\ge 0}$ be the stochastic process defined in Eq.~\eqref{eq:defofsmmdgf} which satisfies $\dd X_t =-\nabla f_{\mu_t,\pi}(X_t) \dd t$. 
Then, consider any fixed test function $h\in \calH$ such that $\sup_t \langle m_{\mu_t}, h\rangle_{\calH} = \sup_t \E[h(X_t)] < \infty$. 
Differentiating along the characteristic and using the chain rule yields
\begin{align*}
    \left\langle \frac{\dd}{\dd t} m_{\mu_t}, h \right\rangle_{\calH}
    = \frac{\dd}{\dd t}\E[h(X_t)]
    = \E\left[\nabla h(X_t)^\top \frac{\dd X_t}{\dd t}\right]
    = - \E\left[\nabla h(X_t)^\top \nabla f_{\mu_t,\pi}(X_t)\right].
\end{align*}
The first equality holds from the continuity of the inner product in $\calH$.  
The second equality holds by the following additional conditions on $h$ (\Cref{lem:differentiation_along_characteristics}): 
\begin{align}\label{eq:uniform_intergrable}
    \mathbb{E} \left[ \int_0^T\left|\nabla h(X_t)^{\top} \frac{\dd}{\dd t} X_t \right| \mathrm{d} t \right] = \mathbb{E} \left[ \int_0^T\left|\nabla h(X_t)^{\top} \nabla f_{\mu_t,\pi}(X_t) \right| \mathrm{d} t \right] < \infty .
\end{align}
Since $\mathrm{Law}(X_t)=\mu_t$, this becomes
\begin{align}\label{eq:weak_time_derivative_embedding}
    \left\langle \frac{\dd}{\dd t} m_{\mu_t}, h \right\rangle_{\calH}
    = - \int \nabla h(x)^\top \nabla f_{\mu_t,\pi}(x)\,\dd \mu_t(x).
\end{align}
Next, we will apply Eq.~\eqref{eq:weak_time_derivative_embedding} with the fixed test function
$h = m_{\mu_t}-m_\pi$ at time $t$.
Since Eq.~\eqref{eq:weak_time_derivative_embedding} holds for any test function in $\calH$, it also holds for $h=m_{\mu_t}-m_\pi$. 
To see why $m_{\mu_t}-m_\pi$ satisfies the conditions outlined in  Eq.~\eqref{eq:uniform_intergrable}, note that $\|\nabla(m_{\mu_t}-m_\pi)(x)\| = \|\bm{v}_{\mmd}[\mu_t](x)\|\leq d C_{\mmd} (\|x\| + 1)$ proved in Eq.~\eqref{eq:bound_v_mmd}, and also $ \|\nabla f_{\mu_t,\pi}(x) \| \leq d \sqrt{M} (\|x\| + 1)  \cdot \frac{C_{\mmd}}{\lambda}$ proved in Eq.~\eqref{eq:bound_f_mu_pi}. 
Therefore, since $(\mu_t)_{t\in[0,T]}$ is an absolutely continuous curve in $\calP_2(\R^d)$, we have verified Eq.~\eqref{eq:uniform_intergrable}. 
So we obtain
\begin{align*}
    &\quad \frac12 \frac{\dd}{\dd t}\mmd^2(\mu_t,\pi)
    = \frac12 \frac{\dd}{\dd t}\|m_{\mu_t}-m_\pi\|_{\calH}^2
    = \left\langle \frac{\dd}{\dd t} m_{\mu_t}, m_{\mu_t}-m_\pi \right\rangle_{\calH} \\
    &= - \int \nabla (m_{\mu_t}-m_\pi)(x)^\top \nabla f_{\mu_t,\pi}(x)\,\dd \mu_t(x).
\end{align*}
Therefore, the same dissipation identity can be derived directly from the characteristic representation, without invoking integration by parts in the continuity equation.
By definition of the gradient covariance operator $S_{\mu_t}$ and by definition of $f_{\mu_t,\pi}$ in \Cref{prop:expressionofsmmd}, we obtain
\begin{align}\label{eq:d_dt_mmd}
\frac{\dd}{\dd t} \frac{1}{2} \,\mmd^{2}(\mu_{t},\pi) &= -\Big\langle\nabla(m_{\mu}-m_{\pi}),\nabla\left({S}_{\mu_{t}}+\lambda \Id\right)^{-1}(m_{\mu}-m_{\pi})\Big\rangle_{L_{2}^d(\mu_{t})} \nonumber \\
&= -\Big\langle {S}_{\mu_{t}}(m_{\mu_{t}}-m_{\pi}),({S}_{\mu_{t}}+\lambda \Id)^{-1}\left(m_{\mu_{t}}-m_{\pi}\right)\Big\rangle_{\mathcal{H}} < 0. 
\end{align}
From Lemma~\ref{lem:hilbertschmidt}, the operator $S_{\mu_t}:\calH\to\calH$ is trace-class, Hilbert-Schmidt, and compact. 
Let $\{\varrho_{i,t}, e_{i,t}\}_{i\in\N}$ be  the countable eigensystem of the operator ${S}_{\mu_t}$ and $\{e_{i,t}\}_{i\in\N}$ is an orthonormal system of $\calH$. 

Since $m_{\mu_t}-m_\pi\in \overline{\ran(S_{\mu_t})}$, we may let $m_{\mu_t}-m_\pi = \sum_{i=0}^\infty m_i e_i$ with $\mmd^2(\mu_t, \pi)=\sum_{i=0}^\infty m_{i,t}^2$, which is finite by the proof of \Cref{thm:ac_curve}. 
Then, we have
\begin{align*}
    \frac{\dd}{\dd t} \frac{1}{2} \,\mmd^{2}(\mu_{t},\pi) &= -\sum_{i=0}^\infty \frac{\varrho_{i,t}}{\varrho_{i,t} + \lambda} m_{i,t}^2 =-\sum_{i=0}^\infty m_{i,t}^2+\lambda\sum_{i=0}^\infty \frac{m_{i,t}^2}{\varrho_{i,t}+\lambda} \\
    &= -\mmd^{2}(\mu_{t},\pi) + \lambda\sum_{i=0}^\infty \frac{m_{i,t}^2}{\varrho_{i,t}+\lambda} := -\mmd^{2}(\mu_{t},\pi) + R(\lambda, t) . 
\end{align*}
Since $|\frac{\lambda m_{i,t}^2}{\rho_{i,t}+\lambda}|\le m_{i,t}^2$, and $\sum_{i=0}^\infty m_{i,t}^2<\infty$,
by dominated convergence theorem (See, e.g., ~\citep[Theorem 1.19]{evans2015measure}), 
we have $\lim_{\lambda\to 0+} R(\lambda, t)  = \lim_{\lambda\to 0+}\sum_{i=0}^\infty \frac{\lambda m_{i,t}^2}{\varrho_{i,t}+\lambda}=0$.
Since this holds for all $t\in[0,T]$, we have $\sup_{t\in[0,T]} \lim_{\lambda\to 0+} R(\lambda, t)=0$. 
Integrating the above differential equation from $0$ to $T$ yields 
\begin{align*}
    \mmd^{2}(\mu_{T},\pi) = e^{-2T} \mmd^{2}(\mu_{0},\pi) + 2 e^{-2 T} \int_0^T e^{2 t} R(\lambda, t) . 
\end{align*}
Under additional assumptions that $m_{\mu_t} - m_\pi\in \ran S_{\mu_t}^{\alpha/2}$ for all $t\in[0,T]$, i.e., 
there exists $q_t\in \calH$ such that $m_{\mu_t} - m_\pi=  S_{\mu_t}^{\alpha/2} q_t$. 
Recall that $m_{\mu_t}-m_\pi = \sum_{i=0}^\infty m_{i,t} e_{i,t}$. 
Then, we can write the above equation as
\begin{align*}
    R(\lambda, t) = \lambda \sum_{i=0}^\infty \frac{m_{i,t}^2 }{\varrho_{i,t} + \lambda} = \lambda \sum_{i=0}^\infty \left( \frac{ \varrho_{i,t}^{\alpha}}{\varrho_{i,t} + \lambda} \right) \varrho_{i,t}^{-\alpha}m_{i,t}^2 \leq \lambda \lambda^{\alpha-1} \sum_{i=0}^\infty  \varrho_{i,t}^{-\alpha}m_{i,t}^2 = \lambda^{\alpha} \|q_t\|_\calH^2 . 
\end{align*}
\end{proof}

\subsection{Proof of \Cref{thm:dt_convergence}}\label{sec:proof_dt_convergence}

\begin{proof}
We know that $\mu_{s+1}=(\Id-\gamma \nabla f_{\mu_s,\pi})_{\#} \mu_s$ where $f_s$ is the witness function $f_{\mu_s,\pi}=({S}_{\mu_s}+\lambda \Id)^{-1}\,(m_{\mu_s}-m_{\pi})$ and $\gamma >0$ is the step size. 
In the proof, we use $f_s := f_{\mu_s,\pi}$ to simplify notation. 
Consider $\rho_t=(\mathrm{I}-t \nabla f_s)_{\#} \mu_s$ which is an interpolation between the distributions within two time steps, such that $\rho_0=\mu_s$, and $\rho_\gamma=\mu_{s+1}$. First we know that,
\begin{align*}
    &\quad \mmd^2 (\rho_t, \pi) = \iint k(x, z) \dd \rho_t(x) \dd \rho_t(z) - 2 \iint k(x, z) \dd \rho_t(x) \dd \pi(z) + \text{const} \\
    &= \iint k(x- t \nabla f_\mathfrak{s}(x), z - t \nabla f_s(z)) \dd \mu_\mathfrak{s}(x) \dd \mu_s(z) - 2 \iint k(x- t \nabla f_\mathfrak{s}(x), z) \dd \mu_\mathfrak{s}(x) \dd \pi(z) + \text{const} . 
\end{align*}
Here, const represents the term that is independent of time $t$. 
So we have, for the first order derivative, 
\begin{align*}
\begin{aligned}
& \left.\frac{\dd}{\dd t}\right|_{t=0} \frac{1}{2} \mmd^2(\rho_t, \pi) = -\int \nabla f_\mathfrak{s}(x)^{\top}\left(\int \nabla_1 k(x, z) \dd \mu_s(z)-\int \nabla_1 k(x, z) \dd \pi(z)\right) d \mu_\mathfrak{s}(x) \\
= & -\left\langle\nabla\left(m_{\mu_s}-m_\pi\right), \nabla\left({S}_{\mu_s}+\lambda \Id\right)^{-1}\left(m_{\mu_s}-m_\pi\right)\right\rangle_{L_2^d(\mu_s)} \\
= & -\left\langle {S}_{\mu_s} \left(m_{\mu_s}-m_\pi\right),\left({S}_{\mu_s}+\lambda \Id\right)^{-1}\left(m_{\mu_s}-m_\pi\right)\right\rangle_{\mathcal{H}} .  
\end{aligned}
\end{align*}
This is the first order term which is the same as the derivations in the continuous time analogue. 
Next, we define $\phi_t(x)=x-t \nabla f_\mathfrak{s}(x)$.  
We consider the second order derivative, 
\begin{align*}
&\quad \;\frac{\dd^2}{\dd t^2} \frac{1}{2} \mmd^2(\rho_t, \pi) \\
& =\underbrace{\left\|\int \nabla f_\mathfrak{s}(x)^{\top} \nabla_1 k(x-t\nabla f_\mathfrak{s}(x), \cdot) \dd \mu_\mathfrak{s}(x)\right\|_{\mathcal{H}}^2}_{(i)} \\
&+\underbrace{\int \nabla f_\mathfrak{s}(x)^{\top}\left(\int \mathbf{H}_1 k(x-t\nabla f_\mathfrak{s}(x), z) \dd \rho_t(z)-\int \mathbf{H}_1 k(x-t\nabla f_\mathfrak{s}(x), z) \dd \pi(z)\right) \nabla f_\mathfrak{s}(x) \dd \mu_\mathfrak{s}(x)}_{(i i)} . 
\end{align*}
For the first term $(i)$, we have
\begin{align*}
&(i)= \iint \nabla f_\mathfrak{s}(x)^\top \nabla_1\nabla_2 k(\phi_t(x), \phi_t(y)) \nabla f_\mathfrak{s}(y) \; \dd \mu_\mathfrak{s}(x) \dd \mu_\mathfrak{s}(y)  \\
&= \iint \left\langle \nabla f_\mathfrak{s}(x) \nabla f_\mathfrak{s}(y)^\top, \;  \nabla_1\nabla_2 k(\phi_t(x), \phi_t(y)) \right\rangle_F \; \dd \mu_\mathfrak{s}(x) \dd \mu_\mathfrak{s}(y) \\
&\leq \iint \left\| \nabla f_\mathfrak{s}(x) \nabla f_\mathfrak{s}(y)^\top \right\|_F \cdot \left\| \nabla_1\nabla_2 k(\phi_t(x), \phi_t(y)) \right\|_F \; \dd \mu_\mathfrak{s}(x) \dd \mu_\mathfrak{s}(y) \\
&\leq \left( \iint \left\| \nabla f_\mathfrak{s}(x) \nabla f_\mathfrak{s}(y)^\top \right\|_F^2 \dd \mu_\mathfrak{s}(x) \dd \mu_\mathfrak{s}(y) \right)^{\frac{1}{2}} \cdot \left( \iint \left\| \nabla_1\nabla_2 k(\phi_t(x), \phi_t(y)) \right\|_F^2 \dd \mu_\mathfrak{s}(x) \dd \mu_\mathfrak{s}(y) \right)^{\frac{1}{2}} \\
&= \| \nabla f_s\|_{L_2^d(\mu_s)}^2 \cdot \left( \iint \left\| \nabla_1\nabla_2 k(x, y) \right\|_F^2 \dd \rho_t(x) \dd \rho_t(y) \right)^{\frac{1}{2}} .
\end{align*}
Since for any $i,j\in[d]$, 
\begin{align*}
    &\quad \iint \left| \partial_{1, i} \partial_{2,j} k(x, y) \right|^2 \dd \rho_t(x) \dd \rho_t(y) \leq \iint \left| \partial_{1, i} \partial_{2, i} k(x, x) \right| \cdot \left| \partial_{1, j} \partial_{2, j} k(y, y) \right| \dd \rho_t(x) \dd \rho_t(y) \\
    &\leq \left( \int M^2 (1 + \|x\|)^2\dd \rho_t(x)  \right)^{\frac{1}{2}} \cdot \left( \int M^2 (1 + \|y\|)^2 \dd \rho_t(y) \right)^{\frac{1}{2}} \leq M^2 4C . 
\end{align*}
The second last inequality holds by the growth condition in \Cref{ass:smmd_growth}, and the last inequality holds by the convexity of $\|\cdot\|^2$: 
\begin{align*}
    \int\|z\|^2 \dd \rho_t(z) \leq(1-t\gamma^{-1}) \int\|x\|^2 \dd \mu_\mathfrak{s}(x) + t\gamma^{-1} \int\|y\|^2 \dd \mu_{s+1}(y) \leq C.
\end{align*} 
The last inequality holds by the uniform bound on the second moment assumed in the statement of the theorem. 
Therefore, we have proved that
\begin{align*}
    (i) \leq d M^2 4C \| \nabla f_s\|_{L_2^d(\mu_s)}^2 . 
\end{align*}
For the second term $(ii)$, we have
\begin{align*}
&\quad \iint \nabla f_\mathfrak{s}(x)^{\top} \mathbf{H}_1 k(x-\gamma \nabla f_\mathfrak{s}(x), z)  \nabla f_\mathfrak{s}(x)\dd \rho_t(z) \dd \mu_\mathfrak{s}(x) \\
&\leq \iint \left\| \nabla f_\mathfrak{s}(x) \right\|^2 \cdot \left\| \mathbf{H}_1 k(x-\gamma \nabla f_\mathfrak{s}(x), z) \right\|_{op} \; \dd \mu_\mathfrak{s}(x) \dd \rho_t(z) \\
&\leq \left( \int \| \nabla f_\mathfrak{s}(x)\|^2\; \dd \mu_\mathfrak{s}(x)  \right) \cdot \left( \int M(1 + \|z\|^2) \rho_t(z) \right) \\
&= \| \nabla f_s\|_{L_2^d(\mu_s)}^2 (M + MC). 
\end{align*}
The second last line holds by the growth conditions in \Cref{ass:growth_2}. 
We can repeat the same derivations above to obtain an upper bound on the second term for $(ii)$ as well. 
Combine the above upper bounds on $(i)$ and $(ii)$, then we can obtain
\begin{align*}
    \;\frac{\dd^2}{\dd t^2} \frac{1}{2} \mmd^2(\rho_t, \pi) &\leq d (4M^2 C + 2 (M + MC)) \| \nabla f_s\|_{L_2^d(\mu_s)}^2 =: d C_0 \| \nabla f_s\|_{L_2^d(\mu_s)}^2 \\
    &= d C_0 \langle {S}_{\mu_s} ({S}_{\mu_s}+\lambda \Id)^{-1}\,(m_{\mu_s} -m_{\pi}), ({S}_{\mu_s}+\lambda \Id)^{-1}\,(m_{\mu_s} -m_{\pi}) \rangle_\calH . 
\end{align*}
The constant $C_0$ only depends on the other constants $M, C$ in the assumptions. 
Putting the above together,
\begin{align*}
    &\quad \frac{1}{2} \mmd^2(\mu_{s+1}, \pi)-\frac{1}{2} \mmd^2(\mu_s, \pi)\\
    &=\left.\frac{d}{d t}\right|_{t=0} \frac{1}{2}\mmd^2\left(\rho_t, \pi\right) \gamma + \int_0^\gamma(\gamma-t) \frac{\dd^2}{\dd t^2} \frac{1}{2} \mmd^2\left(\rho_t, \pi\right) \dd t \\
    &\leq -\gamma \left\langle {S}_{\mu_s} \left(m_{\mu_s}-m_\pi\right), \left({S}_{\mu_s}+\lambda \Id\right)^{-1} \left(m_{\mu_s}-m_\pi\right)\right\rangle_{\mathcal{H}} \\
    &\qquad + \gamma^2 dC_0 \langle {S}_{\mu_s} ({S}_{\mu_s}+\lambda \Id)^{-1}\,(m_{\mu_s} -m_{\pi}), ({S}_{\mu_s}+\lambda \Id)^{-1}\,(m_{\mu_s} -m_{\pi}) \rangle_\calH . 
\end{align*}
Since $m_{\mu_s}-m_\pi\in \ran  S_{\mu_s}^{\alpha/2}$, there exists $q_s\in \calH$ such that $m_{\mu_s}-m_\pi=  S_{\mu_s}^{\alpha/2} q_s$. Recall that  $m_{\mu_s}-m_\pi = \sum_{i=0}^\infty m_{i,s} e_{i,s}$, where $\{\varrho_{i,s}, e_{i,s} \}_{i\in\N}$ is the eigensystem of the operator ${S}_{\mu_s}$. Then, we can write the above equation as
\begin{align}
    &\ \ -\sum_{i=0}^\infty \frac{\gamma \varrho_{i,s}}{\varrho_{i,s} + \lambda} m_{i,s}^2 + dC_0 \sum_{i=0}^\infty \frac{\gamma^2 \varrho_{i,s}}{(\varrho_{i,s} + \lambda)^2} m_{i,s}^2 \nonumber \\
    &=-\gamma \sum_{i=0}^\infty m_{i,s}^2 + \gamma\sum_{i=0}^\infty \frac{\lambda \varrho_{i,s}^{\alpha}}{\varrho_{i,s} + \lambda} \varrho_{i,s}^{-\alpha} m_{i,s}^2 + dC_0 \gamma^2\sum_{i=0}^\infty \pth{\frac{ \varrho_{i,s}^{\frac{1+\alpha}2}}{\varrho_{i,s} + \lambda}}^2 \rho_{i,s}^{-\alpha}m_{i,s}^2 \nonumber \\
    &\stackrel{\text{(a)}}\le -\gamma \mmd^2(\mu_s ,\pi) +\gamma \lambda^{\alpha} \sum_{i=0}^\infty \rho_{i,s}^{-\alpha}m_{i,s}^2 + dC_0 \gamma^2\lambda^{\alpha-1}\|q_s\|_{\cal H}^2 \label{eq:source} \\
    &= -\gamma \mmd^2(\mu_s ,\pi) +\gamma \lambda^{\alpha} \|q_s\|_{\cal H}^2 + dC_0\gamma^2\lambda^{\alpha-1}\|q_s\|_{\cal H}^2, \nonumber
\end{align}
where (a) follows from Lemma~\ref{lem:younginequality}.
The second term goes to $0$ as $\lambda\to0$ and the third term goes to $\infty$ as $\lambda\to0$ because $\alpha < 1$. To balance these two terms, we take $\lambda$ such that $\gamma \lambda^{\alpha} \|q_s\|_{\cal H}^2 = dC_0\gamma^2\lambda^{\alpha-1}\|q_s\|_{\cal H}^2$, which gives $\lambda = dC_0\gamma$. So we have
\begin{align*}
    \mmd^2(\mu_{s+1}, \pi)\le (1-2 \gamma) \mmd^2(\mu_s ,\pi) + 4 (dC_0)^\alpha \gamma^{1+\alpha} \|q_s\|_{\cal H}^2  . 
\end{align*}
Since $0<\gamma<\frac{1}{2}$, after iterating $s$ from $1$ to $S$, we obtain 
\begin{align*}
    \mmd^2(\mu_S, \pi)\le (1-2\gamma)^S \mmd^2\left(\mu_0, \pi\right) + 4 (dC_0)^\alpha \gamma^{1+\alpha} \sum_{s=0}^{S-1} (1-2 \gamma)^{S-1-s} \|q_s\|_{\cal H}^2.
\end{align*}
Furthermore, if $\|q_s\|^2\le Q$ holds for all $0\le s\le S$, then
\begin{align*}
    \mmd^2(\mu_S, \pi) &\le  \exp(-2S \gamma) \mmd^2\left(\mu_0, \pi\right) + 2 (dC_0)^\alpha \gamma^{1+\alpha} Q \sum_{s=0}^{\infty} (1-2 \gamma)^{S-1-s}\\
    &=\exp(-2 S \gamma)\mmd^2\left(\mu_0, \pi\right) + 4 (dC_0)^\alpha \gamma^{\alpha} Q . 
\end{align*}
\end{proof}

\subsection{Proof of \Cref{prop:closeformofsmmd}}\label{sec:proof_closed_form}
\begin{proof}
Define the following operators: 
\begin{align*}
    &D_X:\calH\to \R^{Nd},\quad 
    f\mapsto \frac1{\sqrt N} \bigl[\partial_{1} f(x^{(1)}),\cdots, \partial_{d} f(x^{(N)})\bigr]^\top,\\
    &D_X^*:\R^{Nd}\to \calH, \quad 
    v\mapsto \frac1{\sqrt N}\sum_{i=1}^N\sum_{l=1}^d v_{(i,l)}\,\partial_{1,l} k(x^{(i)},\cdot).
\end{align*}
Then $S_{\hat\mu}=D_X^*D_X$ and $D_XD_X^*=\frac1N H_{XX}$.
The witness function, by definition, admits the representation
\begin{align*}
    f_{\hat\mu,\pi}
    = (S_{\hat\mu}+\lambda \Id)^{-1}(m_{\hat\mu}-m_{\pi}) =(D_X^*D_X+\lambda \Id)^{-1}\left(\frac1N\sum_{i=1}^N k(x^{(i)},\cdot)- \E_{\pi}[k(Y, \cdot)] \right).
\end{align*}
By the Woodbury identity,
$(D_X^*D_X+\lambda \Id)^{-1}
=\frac1\lambda \Id-\frac1\lambda D_X^*(D_XD_X^*+\lambda \Id)^{-1}D_X$, 
we obtain
\begin{align*}
    f_{\hat\mu,\pi}
    &=
    \frac1\lambda \left(\frac1N\sum_{i=1}^N k(x^{(i)},\cdot) - \E_{\pi}[k(Y, \cdot)]  \right)
    -\frac1\lambda D_X^*(D_XD_X^*+\lambda \Id)^{-1}D_X(m_{\hat\mu} - m_\pi).
\end{align*}
Next, note that
$(D_XD_X^*+\lambda \Id)^{-1}
= N\,(H_{XX}+N\lambda \Id)^{-1}$. 
Moreover, for the mean embeddings,
\begin{align*}
    D_X m_{\hat\mu} = \frac{1}{N} D_X [\mathbb{K}_X \1_N^\top ] = \frac{1}{N\sqrt{N}} D_{XX}\1_N, \quad
    D_X m_{\pi} = \frac{1}{\sqrt{N}} \E_{\pi}[\mathbb D_X(Y)] .
\end{align*}
Putting these pieces together,
\begin{align*}
    f_{\hat\mu,\pi}
    &=
    \frac1{N\lambda}\1_N^\top \mathbb K_X
    -\frac1{\lambda} \E_{\pi}[k(Y, \cdot)]
    -\frac1\lambda \mathbb D_X^\top (H_{XX}+N\lambda \Id)^{-1}
    \left(\frac1N D_{XX}\1_N-\E_{\pi}[\mathbb D_X(Y)] \right),
\end{align*}
which is the desired expression.
\end{proof}

\paragraph{Additional notations for \Cref{lem:full_support}}: 
All orthogonal complements and closures are taken with respect to the RKHS $\calH$. 
In particular, $\calG^{\perp_{\calH}}$ denotes the orthogonal complement of a subspace $\calG\subseteq\calH$, and $\overline{\calG}$ denotes its closure in $\calH$.

\begin{lem}
\label{lem:full_support}
Let $\mu\in\calP_2(\R^d)$ have full support on $\R^d$. 
Let $D_\mu:\calH\to L_2^d(\mu)$ and $D_\mu^\ast:L_2^d(\mu)\to\calH$ be the operators defined in the main text. 
Let $S_\mu:=D_\mu^*D_\mu$. 
If the RKHS $\calH$ contains constant functions, then $\overline{\ran(S_\mu)}=\ker(S_\mu)^{\perp_\calH}=\mathrm{span}\{1\}^{\perp_\calH}$; otherwise $\overline{\ran(S_\mu)}=\{0\}^{\perp_\calH}$. 
In both cases, $m_{\mu}-m_\pi\in\overline{\ran(S_{\mu})}$. 
\end{lem}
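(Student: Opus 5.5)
The plan has three steps: identify the kernel of $S_\mu$, use self-adjointness to pass to the closure of the range, and then locate $m_\mu-m_\pi$ inside that closure.

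\textbf{Step 1: the kernel of $S_\mu$.} Since $S_\mu=D_\mu^*D_\mu$ is a bounded, self-adjoint, positive operator on $\calH$ (\Cref{lem:hilbertschmidt}), we have
\[
\langle f,S_\mu f\rangle_\calH=\|\nabla f\|_{L_2^d(\mu)}^2 .
\]
Hence $f\in\ker(S_\mu)$ if and only if $\nabla f=0$ $\mu$-almost everywhere. Every $f\in\calH$ is continuously differentiable under our standing smoothness assumption on $k$, so $\nabla f$ is continuous. Because $\mu$ has full support, the set $\{x:\nabla f(x)\neq 0\}$ is open and $\mu$-null, so it must be empty. Thus $\nabla f\equiv 0$ on $\R^d$. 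As $\R^d$ is connected, $f$ is constant. It follows that $\ker(S_\mu)=\calH\cap\{\text{constants}\}$. This equals $\mathrm{span}\{1\}$ when $\calH$ contains the constants, and $\{0\}$ otherwise, since a nonzero constant in $\calH$ would force $1\in\calH$.

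\textbf{Step 2: closure of the range.} For any bounded self-adjoint operator on a Hilbert space, $\overline{\ran(S_\mu)}=\ker(S_\mu^*)^{\perp_\calH}=\ker(S_\mu)^{\perp_\calH}$. Combined with Step 1, this gives both displayed identities in the statement. In the second case the closure of the range is $\{0\}^{\perp_\calH}=\calH$.

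\textbf{Step 3: membership of $m_\mu-m_\pi$.} If $\calH$ does not contain constants, then $\overline{\ran(S_\mu)}=\calH$ and there is nothing to prove. If $1\in\calH$, we need $\langle m_\mu-m_\pi,1\rangle_\calH=0$. By the reproducing property,
\[
\langle m_\mu,1\rangle_\calH=\int\langle k(x,\cdot),1\rangle_\calH\,\dd\mu(x)=\int 1\,\dd\mu=1,
\]
where Bochner integrability of $x\mapsto k(x,\cdot)$ under $\int\sqrt{k(x,x)}\,\dd\mu<\infty$ lets us exchange the inner product and the integral. The same computation gives $\langle m_\pi,1\rangle_\calH=1$, so the difference is orthogonal to $1$.

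The main point of care is Step 1: passing from "$\nabla f=0$ $\mu$-a.e." to "$\nabla f\equiv 0$". This step is where full support is used, and it needs $\nabla f$ to be continuous, which I will justify from the derivative reproducing property $\partial_\ell f(x)=\langle f,\partial_{1,\ell}k(x,\cdot)\rangle_\calH$ together with the continuity of $x\mapsto\partial_{1,\ell}k(x,\cdot)$ in $\calH$ (\Cref{lem:local_lip}). The remaining steps are standard functional analysis.
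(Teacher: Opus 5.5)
Your proposal is correct and follows the paper's proof essentially step for step: you characterize $\ker(S_\mu)$ via $\langle f,S_\mu f\rangle_\calH=\|\nabla f\|^2_{L_2^d(\mu)}$ together with full support, continuity of $\nabla f$ and connectedness of $\R^d$, then use self-adjointness to get $\overline{\ran(S_\mu)}=\ker(S_\mu)^{\perp_\calH}$, and finally check $\langle m_\mu-m_\pi,1\rangle_\calH=0$. One small remark: citing \Cref{lem:local_lip} for continuity of $\nabla f$ brings in \Cref{ass:bounded}, which is not needed here, since the standing smoothness of $k$ already gives continuous derivatives of RKHS functions (the reproducing property used in \Cref{lem:D_mu}), and the paper relies on that.
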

\begin{proof}
Since $S_\mu=D_\mu^*D_\mu$, for any $f\in\calH$ we have
\begin{align*}
    \langle f,S_\mu f\rangle_{\calH}
    =
    \|D_\mu f\|_{L_2^d(\mu)}^2
    =
    \E_{X\sim\mu}\bigl[\|\nabla f(X)\|^2\bigr].
\end{align*}
Hence $f\in\ker(S_\mu)$ if and only if $\nabla f=0$ $\mu$-almost surely. Since $\nabla f$ is continuous and $\mu$ has full support on $\R^d$, this implies $\nabla f(x)=0$ for all $x\in\R^d$. Indeed, if $\nabla f(x_0)\neq 0$ for some $x_0\in\R^d$, then by continuity of the derivative of $f$ there exists an open ball $B(x_0,r)$ on which $\|\nabla f\|>0$. Since $\mu$ has full support, $\mu(B(x_0,r))>0$, contradicting $\nabla f=0$ $\mu$-almost surely. Therefore $\nabla f=0$ everywhere, and since $\R^d$ is connected, $f$ is constant. 

By the preceding argument, under the full-support assumption on $\mu$, the null space of $S_\mu$ consists precisely of the constant functions that belong to $\calH$. 
Thus, if $1\in\calH$, then $\ker(S_\mu)=\mathrm{span}\{1\}$, whereas if $1\notin\calH$, then $\ker(S_\mu)=\{0\}$. 
The latter case occurs, for instance, for the Gaussian kernel on $\R^d$ \citep[Corollary 4.44]{steinwart2008support}. 
Since $S_\mu$ is self-adjoint, $\overline{\ran(S_\mu)}=\ker(S_\mu)^{\perp_{\calH}}$. 
If $1\notin\calH$, this gives $\overline{\ran(S_\mu)}=\calH$, and the claim is immediate. 
If $1\in\calH$, then $\langle m_{\mu}-m_\pi,1\rangle_{\calH}=\int 1\,\dd\mu-\int 1\,\dd\pi=1-1=0$, and hence $m_\mu-m_\pi\in \mathrm{span}\{1\}^{\perp_{\calH}}=\overline{\ran(S_\mu)}$.
\end{proof}

\paragraph{Additional notations for \Cref{lem:sobolevequivalent}}: 
For two normed vector spaces $A, B, A \cong B$ means that $A$ and $B$ are norm equivalent, i.e. their sets coincide and the corresponding norms are equivalent. In other words, there are constants $c_1, c_2>0$ such that $c_1\|h\|_A \leq \|h\|_B \leq c_2\|h\|_A$ holds for all $h \in A$. For a function $f:\Tb^d\to\R$ defined on the d-dimensional torus, we write its Fourier series as $\widehat{f}_n = \int_{\Tb^d} f(x)\exp(-\mathrm i n^\top x) \dd x$ for any $n\in\Z^d$. 

\begin{lem}\label{lem:sobolevequivalent}
    Let $\mathbb T^d=[0,2\pi)^d$ be a $d$-dimensional torus, and let $\mu$ be a probability measure on $\mathbb T^d$ with Lebesgue density $p$ satisfying $0<c<p(x)<C<\infty\text{ a.e. on }\mathbb T^d$ for some universal constants $c, C$. Let $k:\R^d\times\R^d\to \R$ be a periodic translation-invariant Mat\'ern kernel, i.e., $k(x,x')=q(x-x')$, whose Fourier series satisfies $\widehat{q}_n=(1+\|n\|^2)^{-r}$ for some $r>\max\{d/2,1\}$. Let $\mathcal H=\mathcal H(\mathbb T^d)$ denote the RKHS induced by $k$ on $\mathbb T^d$. Then, for $0<\alpha\le 1$,
    $$\ran(S_\mu^{\alpha/2})\cong H^{r(\alpha+1)-\alpha}(\mathbb T^d)\,\cap \mathrm{span}\{1\}^{\perp_\calH},$$
where $H^\ell(\mathcal X)$ denotes the Sobolev space of order $\ell$ on $\mathcal X$.
\end{lem}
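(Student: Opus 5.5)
We work on the torus, where the RKHS $\calH$ is the space of $f$ with $\sum_n (1+\|n\|^2)^{r}|\widehat f_n|^2<\infty$, so $\calH\cong H^r(\Tb^d)$ with the Fourier basis orthogonal in $\calH$. The idea is to compare $S_\mu$ with the reference operator $S_0:=S_{\mathrm{Leb}}$, the gradient covariance operator for the (normalized) Lebesgue measure, which is diagonal in the Fourier basis. Then we transfer the range characterization from $S_0$ to $S_\mu$ using the density bounds $c<p<C$.

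\textbf{Step 1 (the reference operator).} For $f=\sum_n \widehat f_n e^{\mathrm i n^\top x}$ we have $\langle f,S_0 f\rangle_\calH=\|\nabla f\|_{L_2^d}^2\propto \sum_n \|n\|^2|\widehat f_n|^2$. Writing this as $\sum_n \|n\|^2(1+\|n\|^2)^{-r}\cdot (1+\|n\|^2)^r|\widehat f_n|^2$ shows that $S_0$ is diagonal in the $\calH$-orthonormal basis $\{(1+\|n\|^2)^{-r/2}e^{\mathrm i n^\top x}\}$, with eigenvalues $\varrho_n\asymp \|n\|^2(1+\|n\|^2)^{-r}$. These eigenvalues vanish exactly at $n=0$, consistent with $\ker S_0=\mathrm{span}\{1\}$ as in \Cref{lem:full_support}. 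Hence $g=S_0^{\alpha/2}q$ with $q\in\calH$ if and only if $\widehat g_0=0$ and
\[
\sum_{n\neq0}\varrho_n^{-\alpha}(1+\|n\|^2)^{r}|\widehat g_n|^2<\infty .
\]
Since $\varrho_n^{-\alpha}(1+\|n\|^2)^{r}\asymp (1+\|n\|^2)^{r(1+\alpha)-\alpha}$ for $n\ne0$, this condition says exactly $g\in H^{r(\alpha+1)-\alpha}\cap\mathrm{span}\{1\}^{\perp_\calH}$, and the norms are equivalent.

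\textbf{Step 2 (comparison).} From $c<p<C$ we get $c\,\langle f,S_0f\rangle_\calH\le\langle f,S_\mu f\rangle_\calH\le C\,\langle f,S_0f\rangle_\calH$, i.e. $cS_0\preceq S_\mu\preceq CS_0$ in Loewner order. We now invoke the Douglas range-inclusion lemma, $\ran A\subseteq\ran B$ iff $AA^*\preceq \kappa BB^*$. Applied with $A=S_\mu^{1/2}$ and $B=S_0^{1/2}$ (and with the roles swapped), it gives $\ran S_\mu^{1/2}=\ran S_0^{1/2}$ with equivalent norms. This settles the case $\alpha=1$.

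\textbf{Step 3 (fractional powers; main obstacle).} For $0<\alpha<1$ we use the Löwner--Heinz inequality: operator monotonicity of $t\mapsto t^{\alpha}$ on $[0,\infty)$ turns $cS_0\preceq S_\mu\preceq CS_0$ into $c^\alpha S_0^\alpha\preceq S_\mu^\alpha\preceq C^\alpha S_0^\alpha$. Applying Douglas's lemma again to $S_\mu^{\alpha/2}$ and $S_0^{\alpha/2}$ yields $\ran S_\mu^{\alpha/2}=\ran S_0^{\alpha/2}$ with norm equivalence.

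This is the step we expect to require the most care. We must check that the kernels of the two operators agree, which holds since both equal $\mathrm{span}\{1\}$ by \Cref{lem:full_support}. We must also check that the range norms $\|g\|=\inf\{\|q\|_\calH: g=S^{\alpha/2}q\}$ are the ones that Douglas's lemma compares. Together with Step 1, this gives the stated identification.
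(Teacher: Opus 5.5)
Your proposal is correct and follows the paper's proof essentially step for step. You diagonalize the Lebesgue gradient-covariance operator in the Fourier basis $\phi_n=\widehat q_n^{1/2}e_n$, which identifies $\ran(S_{\mathfrak m}^{\alpha/2})$ with $H^{r(\alpha+1)-\alpha}(\Tb^d)\cap\mathrm{span}\{1\}^{\perp_\calH}$. You then transfer this to $S_\mu$ via $cS_{\mathfrak m}\preceq S_\mu\preceq CS_{\mathfrak m}$, the L\"owner--Heinz inequality, and Douglas's lemma.

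The differences are cosmetic. The paper restricts to $\{1\}^{\perp_\calH}$ before applying L\"owner--Heinz and Douglas, whereas you work on all of $\calH$; there, the agreement of kernels you worry about follows automatically from the two-sided Loewner bound. You also treat $\alpha=1$ as a separate step.
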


\begin{proof}
Denote $e_n(x)=e^{\mathrm i n^\top x}$ for all $n\in \mathbb Z^d$. Since $k(x,x')=q(x-x')$ is periodic and translation-invariant, we know that $\calH\cong H^r(\mathbb T^d)$; (See, e.g., \cite[Example on Sobolev spaces on the torus]{Koltchinskii2010SPARSITYIM}). 
Moreover,
$$q(x)=\sum_{n\in \mathbb Z^d}\hat q_n e_n(x),\qquad \hat q_n=(1+\|n\|^2)^{-r}>0.$$
Therefore, the RKHS can be written as
$$\calH=\Bigl\{f=\sum_{n\in \mathbb Z^d} \hat{f}_n e_n(x): \|f\|_\calH^2=\sum_{n\in \mathbb Z^d}\frac{|\hat{f}_n|^2}{\hat q_n}<\infty\Bigr\},$$
and $\phi_n=\hat q_n^{1/2}e_n,\,n\in \mathbb Z^d$ is an orthonormal basis of RKHS $\calH$. 

We first consider the Lebesgue measure $\mathfrak{m}$ on $\mathbb T^d$. Since $r\geq 1$, for any $f=\sum_n \hat{f}_ne_n$, $g=\sum_n\hat{g}_ne_n\in \calH$, we have
$$\langle f,S_{\mathfrak{m}} g\rangle_\calH=\int_{\mathbb T^d}\nabla f(x)^\top \nabla g(x) \dd x=\sum_n \|n\|^2\hat{f}_n\overline{\hat{g}_n}.$$
This also implies $S_{\mathfrak{m}} \phi_n=\tau_n\phi_n$, and $\tau_n=\|n\|^2\hat q_n$. 
Therefore, $f\in \ran(S_{\mathfrak{m}}^{\alpha/2})$ if and only if there exists $g=\sum_n b_n\phi_n$ such that
$$f=S_{\mathfrak{m}}^{\alpha/2}g=\sum_n \tau_n^{\alpha/2} b_n\phi_n=\sum_n\tau_n^{\alpha/2}b_n\hat q_n^{1/2}e_n=\sum_n \|n\|^\alpha \hat q_n^{\frac{1+\alpha}2}b_ne_n.$$
By the uniqueness of Fourier expansion, we have $\hat{f}_n=\|n\|^\alpha \hat q_n^{\frac{1+\alpha}2}b_n.$
Since $g\in \calH$, it requires $\sum_n|b_n|^2<\infty$, i.e.,
$$\infty>\sum_{n\ne 0}\frac{|\hat{f}_n|^2}{\|n\|^{2\alpha}|\hat q_n|^{1+\alpha}}
=\sum_{n\ne 0}\|n\|^{-2\alpha}(1+\|n\|^2)^{r(1+\alpha)}|\hat{f}_n|^2,$$
and $\hat{f}_0=0$. Since for $n\neq 0$,
$$\|n\|^{-2\alpha}(1+\|n\|^2)^{r(1+\alpha)}\asymp (1+\|n\|^2)^{r(1+\alpha)-\alpha},$$
we obtain
$$\ran(S_{\mathfrak{m}}^{\alpha/2})=\Bigl\{f\in \calH: \hat{f}_0=0,\, \sum_{n\ne 0}(1+\|n\|^2)^{r(1+\alpha)-\alpha}|\hat{f}_n|^2<\infty\Bigr\}.$$

Since
$$H^\ell(\mathbb T^d)=\Bigl\{f\in L_2(\mathbb T^d): \sum_n(1+\|n\|^2)^{\ell}|\hat{f}_n|^2<\infty\Bigr\},$$
and $\hat{f}_0=0$ if and only if $\langle f,1\rangle_\calH=\hat q_0^{-1}\hat{f}_0=0$, if and only if $f\in \mathrm{span}\{1\}^{\perp_\calH}$, therefore
$$\ran(S_{\mathfrak{m}}^{\alpha/2})\cong H^{r(\alpha+1)-\alpha}(\mathbb T^d)\cap \mathrm{span}\{1\}^{\perp_\calH}.$$

Finally, we consider a general probability measure $\mu$ on $\mathbb T^d$. Recall that
$$\langle f, S_\mu g\rangle_\calH=\int_{\mathbb T^d}\nabla f(x)^\top \nabla g(x) p(x)\dd x.$$
Since $c\le p\le C$, we have
$$c\langle f,S_{\mathfrak{m}} f\rangle_{\mathcal H}
\le\langle f,S_\mu f\rangle_{\mathcal H} \le C\langle f, S_{\mathfrak{m}} f\rangle_{\mathcal H},
\quad f\in \mathcal H.$$ 
Now consider the restrictions of $S_\mu$ and $S_{\mathfrak{m}}$ on $\mathcal H_0:=\{1\}^{\perp_\mathcal H}$. On $\mathcal H_0$, we have $c\,S_{\mathfrak{m}}\preceq S_\mu\preceq C\,S_{\mathfrak{m}}$.
By the Löwner--Heinz inequality, one obtains $c^\alpha S_{\mathfrak{m}}^\alpha \preceq S_\mu^\alpha \preceq C^\alpha S_{\mathfrak{m}}^\alpha,\; \forall\,0<\alpha\le 1$, 
still on $\mathcal H_0$. Then, an application of Douglas' lemma~\citep{douglas1966majorization} to $S_\mu^{\alpha/2}$ and $S_{\mathfrak{m}}^{\alpha/2}$ on $\mathcal H_0$ shows that
\begin{align}\label{eq:s_equivalence}
    \ran(S_\mu^{\alpha/2})\cong \ran(S_{\mathfrak{m}}^{\alpha/2}),\quad \forall\,0<\alpha\le 1 .
\end{align}
Combining this with the previous result for $S_{\mathfrak{m}}$, we obtain
$$\ran(S_\mu^{\alpha/2})
\cong
H^{r(\alpha+1)-\alpha}(\mathbb T^d)\cap \mathrm{span}\{1\}^{\perp_{\mathcal H}}.$$
\end{proof}

\begin{lem}\label{lem:suff_range}
Let $\mathbb T^d=[0,2\pi)^d$ be a $d$-dimensional torus. Let $k$ be a periodic Mat\'ern kernel on a $d$-dimensional torus $\mathbb T^d$ whose RKHS is norm equivalent to a Sobolev space $H^r(\Tb)$. 
Let $\mu$ be a distribution on $\Tb$ that is equivalent to the Lebesgue measure $\mathfrak{m}$; and also $\frac{\dd \mu}{\dd \mathfrak{m}}\in C^\infty(\Tb)$. 
If $\frac{\dd \pi}{\dd \mu} - 1\in H^{r\alpha-r-\alpha}(\Tb)$, then $m_\mu-m_\pi\in \ran(S_\mu^{\alpha/2})$. 
\end{lem}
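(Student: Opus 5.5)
The plan is to compute the Fourier coefficients of $m_\mu-m_\pi$ explicitly, show that they decay fast enough to place $m_\mu-m_\pi$ in $H^{r(\alpha+1)-\alpha}(\Tb^d)$ with vanishing zeroth coefficient, and then invoke \Cref{lem:sobolevequivalent} to identify this set with $\ran(S_\mu^{\alpha/2})$.

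\textbf{Step 1: rewrite $m_\mu-m_\pi$ as a convolution.} Write $p:=\frac{\dd\mu}{\dd\mathfrak m}$ and $g:=1-\frac{\dd\pi}{\dd\mu}$. Then $\dd(\mu-\pi)=g\,\dd\mu=p\,g\,\dd\mathfrak m$, so
\[
(m_\mu-m_\pi)(z)=\int_{\Tb^d} q(z-x)\,p(x)g(x)\,\dd x=(q*h)(z),\qquad h:=p\,g .
\]
Passing to Fourier series gives $\widehat{(m_\mu-m_\pi)}_n=\hat q_n\hat h_n=(1+\|n\|^2)^{-r}\hat h_n$, up to the normalising constant of the torus.

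\textbf{Step 2: regularity of $h$ and of $m_\mu-m_\pi$.} Set $s:=r\alpha-r-\alpha$, which may be negative, so $g\in H^s(\Tb^d)$ is read in the distributional sense. Multiplication by the $C^\infty$ function $p$ is bounded on $H^s(\Tb^d)$ for every real $s$. For $s\ge 0$ this is the standard Leibniz/Fourier estimate; for $s<0$ it follows by duality from the case $-s>0$. Hence $h\in H^s$, and therefore
\[
\sum_n(1+\|n\|^2)^{s+2r}\,\bigl|\widehat{(m_\mu-m_\pi)}_n\bigr|^2=\sum_n(1+\|n\|^2)^{s}|\hat h_n|^2<\infty ,
\]
that is, $m_\mu-m_\pi\in H^{s+2r}=H^{r(\alpha+1)-\alpha}(\Tb^d)$. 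In addition, $\hat h_0=\int \dd(\mu-\pi)=0$, so the zeroth coefficient of $m_\mu-m_\pi$ vanishes. Equivalently $\langle m_\mu-m_\pi,1\rangle_\calH=0$, i.e.\ $m_\mu-m_\pi\in\mathrm{span}\{1\}^{\perp_\calH}$.

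\textbf{Step 3: conclude via \Cref{lem:sobolevequivalent}.} To apply that lemma I need $c<p<C$. The upper bound follows from smoothness of $p$ on the compact torus. The lower bound should come from the equivalence of $\mu$ and $\mathfrak m$ together with continuity of $p$.

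I expect this lower bound to be the main obstacle. Mutual absolute continuity alone gives only $p>0$ almost everywhere, and a smooth density could still vanish at isolated points. My first attempt is to read the hypotheses as including a strictly positive density, which is the setting of \Cref{lem:sobolevequivalent}. If that reading is not available, I would instead redo the last part of that lemma's argument (the comparison $cS_{\mathfrak m}\preceq S_\mu$) only in the direction actually needed, namely $\ran(S_{\mathfrak m}^{\alpha/2})\subseteq\ran(S_\mu^{\alpha/2})$, and check whether it survives weaker assumptions on $p$.

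A secondary technical point is the duality argument behind the negative-order multiplier bound in Step 2. Once $c<p<C$ is in place, the lemma gives $\ran(S_\mu^{\alpha/2})\cong H^{r(\alpha+1)-\alpha}(\Tb^d)\cap\mathrm{span}\{1\}^{\perp_\calH}$, and Step 2 shows that $m_\mu-m_\pi$ lies in this set, which proves the claim.
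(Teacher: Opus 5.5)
Your argument is essentially the paper's. The paper also writes $m_\mu-m_\pi=\iota_{\mathfrak m}^*\bigl(p\,(\tfrac{\dd\pi}{\dd\mu}-1)\bigr)$ up to sign (your convolution computation is the same Fourier check), uses that multiplication by the smooth density $p$ preserves $H^{r\alpha-r-\alpha}(\Tb)$, and concludes via the range equivalence \eqref{eq:s_equivalence} from \Cref{lem:sobolevequivalent}.

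Your worry about $\inf p>0$ is justified, and the paper's proof makes the same silent assumption, since \eqref{eq:s_equivalence} was derived under $0<c<p<C$. Your fallback would not remove it, however: the inclusion you need, $\ran(S_{\mathfrak m}^{\alpha/2})\subseteq\ran(S_\mu^{\alpha/2})$, is exactly the direction obtained from $S_{\mathfrak m}\preceq c^{-1}S_\mu$, i.e.\ from $p\ge c$. For $\alpha=1$, Douglas' lemma shows this inclusion fails outright when $p$ has a zero.
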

\begin{proof}
Following the same reasoning as in the lemma above, it suffices to first consider the case where $\mu$ is the Lebesgue measure $m$ on $\mathbb T^d$. Write $e_n(x)=e^{\mathrm i n^\top x}$ for $n\in\mathbb Z^d$. Since $k(x,y)=q(x-y)$ is a periodic Mat\'ern kernel whose RKHS is norm equivalent to $H^r(\mathbb T^d)$, we have
\[
q(x)=\sum_{n\in\mathbb Z^d}\hat q_n e_n(x),
\quad
\hat q_n \asymp (1+\|n\|^2)^{-r}.
\]
We have that $\phi_n=\hat q_n^{1/2}e_n,\,n\in \mathbb Z^d$ is an orthonormal basis of RKHS $\calH$. 
Define the following two operators: 
\begin{align*}
    \iota_{\mathfrak{m}}: \mathcal{H} \rightarrow L_2(\mathfrak{m}), \quad f \mapsto f, \quad \text{and} \quad \iota_{\mathfrak{m}}^*: L_2(\mathfrak{m}) \rightarrow \mathcal{H}, \quad \iota_{\mathfrak{m}}^* f(\cdot)=\int_{\Tb} k(x, \cdot) f(x) \mathrm{d}x .
\end{align*}
We thus have
\begin{align*}
    [\widehat{m_\mu-m_\pi}]_n &= \langle m_\mu-m_\pi, \phi_n \rangle_\calH = \left\langle \iota_{\mathfrak{m}}^\ast \left(\frac{\dd \pi}{\dd m} - 1\right) , \phi_n \right\rangle_\calH = \left\langle \frac{\dd \pi}{\dd \mathfrak{m}} - 1, \iota_{\mathfrak{m}} \phi_n \right\rangle_{L_2(\Tb)} \\
    &= \hat{q}_n^{1/2} \left\langle \frac{\dd \pi}{\dd \mathfrak{m}} - 1, e_n \right\rangle_{L_2(\Tb)} . 
\end{align*}
By the proof of \Cref{lem:sobolevequivalent}, we know that a sufficient condition for $m_\mu-m_\pi\in\ran(S_\mu^{\alpha/2})$ is that
\begin{align*}
    \sum_n |[\widehat{m_\mu-m_\pi}]_n|^2 \left( \|n\|^2 \hat{q}_n \right)^{-\alpha} < \infty \iff \sum_n \left\langle \frac{\dd \pi}{\dd \mathfrak{m}} - 1, e_n \right\rangle_{L_2(\Tb)}^2 (1 + \|n\|^2)^{-r(1-\alpha) -\alpha} < \infty. 
\end{align*}
By definition of the Sobolev space $H^{r\alpha-r-\alpha}(\Tb)$, and the proof is concluded. 

Next, we consider the more general case where $\mu$ is a distribution on $\Tb$ that is equivalent to the Lebesgue measure $\mathfrak{m}$; and also $\frac{\dd \mu}{\dd \mathfrak{m}}\in C^\infty(\Tb)$. Following the same argument above, we can show that, $m_\mu - m_\pi = \iota_{\mathfrak{m}}^\ast (\frac{\dd \mu}{\dd \mathfrak{m}} \cdot (\frac{\dd \pi}{\dd \mu}-1))$. 
Since $\frac{\dd \pi}{\dd \mu}-1\in H^{r\alpha-r-\alpha}(\Tb)$ and $\frac{\dd \mu}{\dd \mathfrak{m}}\in C^\infty(\Tb)$, there is $(\frac{\dd \mu}{\dd \mathfrak{m}} \cdot (\frac{\dd \pi}{\dd \mu}-1))\in H^{r\alpha-r-\alpha}(\Tb)$ and hence $m_\mu - m_\pi\in\ran(S_{\mathfrak{m}}^{\alpha/2})$. The proof is concluded by that $\ran(S_\mu^{\alpha/2})\cong \ran(S_{\mathfrak{m}}^{\alpha/2})$ proved in Eq.~\eqref{eq:s_equivalence} above. 
\end{proof}

\subsection{Geometry of SrMMD flow}\label{sec:srmmd_geometry}
In this section, we expand on \Cref{rem:wgf} and discuss the geometry underlying the SrMMD flow. 

First, we recall that, in the formal Riemannian interpretation of Wasserstein gradient flows, the tangent space $T_\mu \calP_2(\R^d)$ at a measure $\mu\in\calP_2(\R^d)$ is identified with a subspace of vector fields in $L_2^d(\mu)$. 
More precisely, it is commonly taken as the $L_2^d(\mu)$-closure of gradients of smooth compactly supported functions~\citep{villani2009optimal}. 
The Wasserstein metric tensor is then given by the kinetic-energy inner product $g_\mu(v,w) := \langle v,w\rangle_{L_2^d(\mu)}$. 
Indeed, for a smooth functional $\calF:\calP_2(\R^d)\to\R$ with first variation $\frac{\delta \calF}{\delta \mu}$, an absolutely continuous curve satisfying
$\partial_t \mu_t+\nabla\cdot(\mu_t v_t)=0$ obeys the formal chain rule~\citep{chewi2024logconcavesampling}: 
\[
    \frac{\dd}{\dd t}\calF(\mu_t)
    =
    \int_{\R^d}
    \nabla \frac{\delta \calF}{\delta \mu_t}(x)^\top v_t(x)
    \,\dd\mu_t(x).
\]
Thus the gradient of $\calF$ with respect to the optimal transport geometry, $g_\mu$ defined above, is represented by $\mathrm{grad}_W \calF(\mu)=\nabla \frac{\delta \calF}{\delta \mu}$, and the steepest descent velocity is
$v_t=-\nabla \frac{\delta \calF}{\delta \mu_t}$. 

Now, we come to our SrMMD flow. 
Recall the operator $D_\mu^\ast:L_2^d(\mu)\to \calH$ defined in the main text. 
Define a Sobolev modified Riemannian metric tensor $g_\mu^\lambda$ with $\lambda>0$: 
for two tangent vector fields $v, w \in T_\mu \mathcal{P}_2(\mathbb{R}^d)$, 
\begin{align}
    g_\mu^\lambda(v, w):=\left\langle D_\mu^* v, D_\mu^* w\right\rangle_{\mathcal{H}}+\lambda\langle v, w\rangle_{L_2^d(\mu)} .
\end{align}
\begin{theorem}
    Suppose that \Cref{ass:bounded,ass:smmd_growth} hold. 
    Suppose $m_\mu-m_\pi \in \overline{\operatorname{Ran}(S_\mu)}$. The steepest descent of the MMD functional $\mu\mapsto \frac{1}{2}\mmd^2(\mu, \pi)$ under the metric $g_\mu^\lambda$ with $\lambda>0$ yields the SrMMD velocity field $v=-\nabla\left(S_\mu+\lambda \Id \right)^{-1}\left(m_\mu-m_\pi\right)$.
\end{theorem}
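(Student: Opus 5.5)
The plan is to compute the Riemannian gradient of $\calF(\mu)=\frac12\mmd^2(\mu,\pi)$ with respect to $g_\mu^\lambda$, by identifying the representer of the differential of $\calF$ in the $g_\mu^\lambda$ inner product. The first variation of $\calF$ is $\frac{\delta\calF}{\delta\mu}=m_\mu-m_\pi$. By the chain rule recalled above, along any curve with velocity $v$ the derivative is
\[
\dd\calF(\mu)[v]=\langle \nabla(m_\mu-m_\pi), v\rangle_{L_2^d(\mu)}=\langle D_\mu(m_\mu-m_\pi),v\rangle_{L_2^d(\mu)}=\langle m_\mu-m_\pi, D_\mu^*v\rangle_\calH .
\]
The last equality uses the adjoint $D_\mu^*$ from \Cref{lem:D_mu}, which is well defined under \Cref{ass:smmd_growth} for $\mu\in\calP_2(\R^d)$. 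The steepest descent direction is then $-\mathrm{grad}^\lambda\calF(\mu)$, where $w:=\mathrm{grad}^\lambda\calF(\mu)$ solves $g_\mu^\lambda(w,v)=\dd\calF(\mu)[v]$ for all tangent $v$. Equivalently, it minimizes $\dd\calF(\mu)[v]+\frac12 g_\mu^\lambda(v,v)$.

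Next I would write $g_\mu^\lambda(w,v)=\langle (D_\mu D_\mu^*+\lambda\Id)w, v\rangle_{L_2^d(\mu)}$. The representer equation then becomes $(D_\mu D_\mu^*+\lambda\Id)w=D_\mu(m_\mu-m_\pi)$ in $L_2^d(\mu)$, at least after projecting onto the tangent space. The operator $D_\mu D_\mu^*+\lambda\Id$ is bounded, self-adjoint and $\succeq\lambda\Id$, so it is invertible. The push-through identity
\[
(D_\mu D_\mu^*+\lambda\Id)^{-1}D_\mu=D_\mu(D_\mu^*D_\mu+\lambda\Id)^{-1}=D_\mu(S_\mu+\lambda\Id)^{-1}
\]
follows from $D_\mu(S_\mu+\lambda\Id)=(D_\mu D_\mu^*+\lambda\Id)D_\mu$. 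It gives $w=\nabla (S_\mu+\lambda\Id)^{-1}(m_\mu-m_\pi)=\nabla f_{\mu,\pi}$, and hence $v=-\nabla f_{\mu,\pi}$.

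The main obstacle is tangent-space consistency. The representer must lie in $T_\mu\calP_2(\R^d)$, and the equation need only hold against tangent $v$, not against all of $L_2^d(\mu)$. I would handle this directly. Since $f_{\mu,\pi}\in\calH$ is $C^1$ with $\nabla f_{\mu,\pi}\in L_2^d(\mu)$ (by the growth bound Eq.~\eqref{eq:bound_f_mu_pi}), a standard cutoff and mollification argument places $\nabla f_{\mu,\pi}$ in the $L_2^d(\mu)$-closure of gradients of $C_c^\infty$ functions, so it is tangent. Because the identity $(D_\mu D_\mu^*+\lambda\Id)\nabla f_{\mu,\pi}=D_\mu(m_\mu-m_\pi)$ holds on all of $L_2^d(\mu)$, it holds in particular against tangent $v$. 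Uniqueness follows because $g_\mu^\lambda$ is coercive ($\geq\lambda\|\cdot\|^2$) on the closed tangent subspace, so the Riesz representer there is unique.

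The hypothesis $m_\mu-m_\pi\in\overline{\ran(S_\mu)}$ is not needed for the computation itself. At most I would use it to note that the descent direction is nontrivial whenever $\mmd(\mu,\pi)>0$.
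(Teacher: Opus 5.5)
Your proof is correct, and it takes a genuinely different route from the paper's.

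\textbf{The paper's route.} The paper assumes from the outset that the gradient has the form $\nabla f$ with $f\in\calH$. It tests only against directions $\nabla\phi$ with $\phi\in C_c^\infty(\R^d)\cap\calH$ and moves everything into $\calH$. Both sides then read $\langle (S_\mu+\lambda\Id)f, S_\mu\phi\rangle_\calH=\langle m_\mu-m_\pi, S_\mu\phi\rangle_\calH$. The hypothesis $m_\mu-m_\pi\in\overline{\ran(S_\mu)}$, together with the choice $f\in\overline{\ran(S_\mu)}$, is what lets the paper cancel $S_\mu\phi$ and conclude $(S_\mu+\lambda\Id)f=m_\mu-m_\pi$.

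\textbf{Your route.} You solve the Riesz problem directly in $L_2^d(\mu)$. You write $g_\mu^\lambda$ through $D_\mu D_\mu^*+\lambda\Id$ and use the push-through identity $(D_\mu D_\mu^*+\lambda\Id)^{-1}D_\mu=D_\mu(S_\mu+\lambda\Id)^{-1}$. You then verify tangency by cutoff and mollification, and obtain uniqueness from coercivity of $g_\mu^\lambda$ on the closed tangent space. The cutoff step does go through: the linear growth of $\nabla f_{\mu,\pi}$ gives $|f_{\mu,\pi}|^2|\nabla\chi_R|^2\lesssim 1+\|x\|^2$ on the annulus $R\le\|x\|\le 2R$, and $\mu\in\calP_2(\R^d)$ then sends this term to zero.

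\textbf{What your route buys.}
\begin{itemize}
\item It needs no ansatz on the form of the gradient.
\item It does not rely on the test class $C_c^\infty\cap\calH$, which can be trivial. For example, the Gaussian RKHS consists of real-analytic functions, so this class is $\{0\}$.
\item It shows that the velocity field is unique in $L_2^d(\mu)$ without the range hypothesis. Any $f\in\calH$ satisfying the paper's identity for all $\phi\in\calH$ has the form $f_{\mu,\pi}+h$ with $h\in\ker(S_\mu)=\ker(D_\mu)$, so its gradient agrees with $\nabla f_{\mu,\pi}$ $\mu$-a.e. The hypothesis only selects the canonical representative $f_{\mu,\pi}$ of the potential inside $\calH$. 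This confirms your remark that it is not needed for the computation.
\end{itemize}

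\textbf{What the paper's route buys.} It is shorter and stays within the $\calH$-operator calculus used throughout the paper. It also produces the potential $f_{\mu,\pi}$ itself, rather than only its gradient.
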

\begin{proof}
    Let $\phi\in C_c^\infty(\R^d)\cap\calH$ such that $\nabla \phi \in \calT_\mu\calP_2(\R^d)$. 
    Consider an absolutely continuous curve $t\mapsto (\Id + t\nabla \phi)_{\#} \mu=: \eta_t$. 
    Following the same derivation at the start of \Cref{sec:proof_ct_convergence}, we obtain that 
    \begin{align*}
        \left.\frac{\dd}{\dd t}\right|_{t=0}  \frac{1}{2}\mmd^2(\eta_t, \pi) = \int \nabla (m_{\mu}-m_\pi)(x)^\top \nabla \phi(x) \,\dd \mu(x) = \left\langle m_{\mu}-m_\pi, \, S_\mu \phi \right\rangle_\calH. 
    \end{align*}
    On the other hand, by definition in metric geometry, a gradient vector field $\operatorname{grad}_\mu \mathcal{E}=\nabla f$ (for some witness potential $f \in \mathcal{H}$ ) must satisfy the duality pairing: $g_\mu^\lambda(\nabla f, \nabla \phi) = \left.\frac{\dd}{\dd t}\right|_{t=0}  \frac{1}{2}\mmd^2(\eta_t, \pi)$. 
    From $g_\mu^\lambda$ defined above, we have
    \begin{align*}
g_\mu^\lambda(\nabla f, \nabla \phi) & =\left\langle S_\mu f, S_\mu \phi\right\rangle_{\mathcal{H}}+\lambda\left\langle f, S_\mu \phi\right\rangle_{\mathcal{H}} =\left\langle\left(S_\mu+\lambda \mathrm{Id}\right) f, S_\mu \phi\right\rangle_{\mathcal{H}} . 
\end{align*}
Comparing the two equations yields: $\left\langle\left(S_\mu+\lambda \mathrm{Id}\right) f, S_\mu \phi\right\rangle_{\mathcal{H}} = \left\langle m_\mu-m_\pi, S_\mu \phi\right\rangle_{\mathcal{H}}$. 
This equality must hold for all test functions $\phi \in \mathcal{H}$, meaning it holds for all elements $S_\mu \phi\in\overline{\operatorname{Ran}(S_\mu)}$. Since $m_\mu-m_\pi \in \overline{\operatorname{Ran}(S_\mu)}$ and we can freely pick $f\in\overline{\operatorname{Ran}(S_\mu)}$, we can drop the inner product with $S_\mu \phi$ to uniquely identify the operator equation:
$$
\left(S_\mu+\lambda \Id\right) f=m_\mu-m_\pi .
$$
Inverting the strictly positive-definite operator ($S_\mu+\lambda\Id$), we identify the optimal witness potential $f$ :
$
f=\left(S_\mu+\lambda \Id \right)^{-1}\left(m_\mu-m_\pi\right) .
$
Consequently, the steepest descent vector field is 
$
v=-\nabla f=-\nabla\left(S_\mu+\lambda \mathrm{Id}\right)^{-1}\left(m_\mu-m_\pi\right) .
$
\end{proof}

\section{Auxiliary Results}
\begin{lem}\label{lem:younginequality}
    For any $\lambda>0$ and $s\in[0,1]$, we have 
    $$\sup_{t\ge 0}\frac{t^s}{t+\lambda}\le \lambda^{s-1}.$$
\end{lem}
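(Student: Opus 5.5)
The plan is to reduce the supremum to an elementary one-variable calculus problem, splitting off the endpoint values of $s$ first. Set $g(t)=\frac{t^s}{t+\lambda}$ for $t\ge 0$.

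\emph{Endpoint cases.} If $s=0$, then $g(t)=\frac{1}{t+\lambda}\le \lambda^{-1}$, with equality at $t=0$. If $s=1$, then $g(t)=\frac{t}{t+\lambda}<1=\lambda^{0}$ for every $t\ge 0$. So it suffices to treat $s\in(0,1)$.

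\emph{Main argument via weighted AM--GM.} For $s\in(0,1)$ I would avoid differentiation and use the weighted arithmetic--geometric mean (Young) inequality with weights $s$ and $1-s$:
\begin{align*}
t+\lambda = s\cdot\frac{t}{s}+(1-s)\cdot\frac{\lambda}{1-s}\ \ge\ \Big(\frac{t}{s}\Big)^{s}\Big(\frac{\lambda}{1-s}\Big)^{1-s}.
\end{align*}
This gives
\begin{align*}
\frac{t^s}{t+\lambda}\le s^s(1-s)^{1-s}\lambda^{s-1}.
\end{align*}
Since $s^s\le 1$ and $(1-s)^{1-s}\le 1$ for $s\in(0,1)$, the claimed bound follows.

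\emph{Alternative route.} One could instead check that $g$ attains its maximum at $t^*=\frac{s\lambda}{1-s}$, by solving $g'(t)=0$, and then evaluate $g(t^*)$. This produces the same constant $s^s(1-s)^{1-s}\lambda^{s-1}$.

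There is no real obstacle here. The only point needing care is the boundary cases $s\in\{0,1\}$, where the AM--GM weights degenerate; that is exactly why I treat them separately before running the main argument.
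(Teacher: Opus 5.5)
Your proof is correct, and it takes a sharper route than the paper's. The paper rescales $a=t/\lambda$ and uses the elementary bound $a^s\le 1+a$ for $a\ge 0$ and $s\in[0,1]$. This gives $(t/\lambda)^s\le (t+\lambda)/\lambda$ in one line, uniformly in $s$ including the endpoints (with $0^0=1$), so no case split is needed.

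Your weighted AM--GM step is a refinement of the same idea. The paper's inequality is itself AM--GM applied to $a$ and $1$, namely $a^s\le sa+(1-s)\le 1+a$. By splitting $t+\lambda$ as $s\cdot\frac{t}{s}+(1-s)\cdot\frac{\lambda}{1-s}$, you obtain the exact supremum $s^s(1-s)^{1-s}\lambda^{s-1}$, attained at $t^*=s\lambda/(1-s)$ as your alternative calculus route confirms. This is smaller than $\lambda^{s-1}$ by a factor $s^s(1-s)^{1-s}\in[1/2,1)$.

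The price is that your weights degenerate at $s\in\{0,1\}$, which forces the separate endpoint treatment you correctly supply. The sharper constant is not needed in the paper's applications, e.g.\ step (a) in Eq.~\eqref{eq:source}, where only the power of $\lambda$ matters.
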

\begin{proof}
    Since $a^s\le a+1$ for any $a\ge 0$ and $0\le s\le 1$, the lemma follows from 
    $$\pth{\frac t\lambda}^s\le 1+\frac{t}{\lambda}=\frac{t+\lambda}{\lambda}.$$
\end{proof}

\begin{lem}\label{lem:tc-hs-equivalence} Given two Hilbert spaces $\calH_1$, $\calH_2$.
    For any bounded linear operator $T: \calH_1\to \calH_2$, $T$ is Hilbert-Schmidt if and only if $T^*T$ is trace-class. Moreover, $\tr(T^*T)=\|T\|_{\text{HS}}^2$. 
\end{lem}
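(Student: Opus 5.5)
I will prove both directions through the spectral structure of $T^*T$, and get the trace identity along the way. First, $T^*T:\calH_1\to\calH_1$ is bounded, self-adjoint and positive, since $\langle x, T^*T x\rangle_{\calH_1} = \|Tx\|_{\calH_2}^2 \geq 0$. For positive operators the trace $\tr(A)=\sum_i \langle e_i, A e_i\rangle$ takes values in $[0,\infty]$ and does not depend on the orthonormal basis $\{e_i\}$. This is the standard fact I will invoke; it can also be verified directly through Parseval, as below. Trace-class for a positive operator then means exactly that this sum is finite.

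The key step is the identity, valid for any orthonormal basis $\{e_i\}_{i\in I}$ of $\calH_1$:
\begin{align*}
\sum_{i\in I} \langle e_i, T^*T e_i\rangle_{\calH_1} = \sum_{i\in I} \|T e_i\|_{\calH_2}^2 = \|T\|_{\text{HS}}^2 .
\end{align*}
The first equality is just the adjoint relation, and the second is the definition of the Hilbert--Schmidt norm. Both sides may be $+\infty$, and the equality is taken in $[0,\infty]$. To justify basis independence, I take a second orthonormal basis $\{f_j\}$ of $\calH_2$ and use Parseval twice together with Tonelli for nonnegative terms:
\begin{align*}
\sum_i \|Te_i\|^2 = \sum_i\sum_j |\langle Te_i, f_j\rangle|^2 = \sum_j \sum_i |\langle e_i, T^* f_j\rangle|^2 = \sum_j \|T^* f_j\|^2 .
\end{align*}
The right-hand side does not depend on $\{e_i\}$, so both quantities are intrinsic.

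Both implications then follow. If $T$ is Hilbert--Schmidt, the sum above is finite, so the positive operator $T^*T$ has finite trace. In fact $T^*T=|T^*T|$, so its trace norm equals this sum, which means $T^*T$ is trace class with $\tr(T^*T)=\|T\|_{\text{HS}}^2$. Conversely, if $T^*T$ is trace class, its trace norm $\tr|T^*T| = \tr(T^*T)$ is finite, so $\sum_i\|Te_i\|^2<\infty$ and $T$ is Hilbert--Schmidt.

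The only delicate point is matching the definitions. Trace class is usually defined through $\tr|A|=\sum_i\langle e_i,(A^*A)^{1/2}e_i\rangle<\infty$. For $A=T^*T\ge0$ we have $|A|=A$ by uniqueness of the positive square root, and this reduces everything to the computation above. Nonseparable spaces cause no problem: the sums run over arbitrary index sets of nonnegative terms, and only countably many terms can be nonzero when the sum is finite.
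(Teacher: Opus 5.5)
Your proposal is correct and follows essentially the same route as the paper: both rest on the identity $\sum_i\langle e_i, T^*Te_i\rangle_{\calH_1}=\sum_i\|Te_i\|_{\calH_2}^2$, combined with the positivity of $T^*T$, applied in each direction. Your version also spells out two points the paper takes for granted, namely basis independence (via Parseval and Tonelli) and the identification $|T^*T|=T^*T$ that matches the trace-class definition.
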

\begin{proof}
    We write $\mathbb N$ for either $\mathbb N=\{1,2,\cdots\}$ or $\mathbb N=\{1,2,\cdots, N\}$ for some fixed finite integer $N$.

First assume that $T$ is Hilbert--Schmidt. For any orthonormal basis $\{e_n\}_{n\in \mathbb N}$ of $\calH_1$,
\begin{align*}
    \sum_{n\in\mathbb N}\langle e_n, T^*T e_n\rangle_{\calH_1}
    =
    \sum_{n\in \mathbb N}\|T e_n\|_{\calH_1}^2
    <\infty .
\end{align*}
Since $T^*T$ is a positive operator, it follows that $T^*T$ is trace-class and
\begin{align*}
\tr(T^*T)=\sum_{n\in \mathbb N}\|T e_n\|_{\calH_1}^2<\infty .
\end{align*}

Conversely, assume that $T^*T$ is trace-class. Since $T^*T$ is positive, for any orthonormal basis $\{e_n\}_{n\in \mathbb N}$,
\begin{align*}
\tr(T^*T)
=
\sum_{n\in \mathbb N}\langle e_n,T^*T e_n\rangle_{\calH_1}
=
\sum_{n\in\mathbb N}\|T e_n\|_{\calH_1}^2
<\infty .
\end{align*}
This implies that
\begin{align*}
\|T\|_{\mathrm{HS}}^2=\sum_{n\in\mathbb N}\|T e_n\|^2<\infty,
\end{align*}
i.e., $T$ is Hilbert--Schmidt.
\end{proof}

\begin{lem}\label{lem:D_mu}
Suppose that the kernel $k$ is continuously differentiable with respect to both of its arguments, and that it satisfies $\int \partial_{1,i}\partial_{2,i} k(x, x) \dd \mu(x) < \infty$ for any $i\in [d]$. Then the operator $D_\mu:f\in\calH\mapsto \nabla f$ is well defined, linear and bounded. Furthermore, its adjoint operator $D_\mu^*: L_2^d(\mu)\to\calH$ is also a bounded linear operator. 
\end{lem}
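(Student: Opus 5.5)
The plan rests on the derivative reproducing property of RKHSs. Since $k$ is continuously differentiable in both arguments (with the mixed derivatives $\partial_{1,i}\partial_{2,i}k$ existing, as the hypothesis presupposes), the standard result (e.g.\ Steinwart--Christmann, Lemma 4.34) gives two facts. First, $\partial_{1,i}k(x,\cdot)\in\calH$ for every $x$ and $i\in[d]$. Second, every $f\in\calH$ is differentiable with
\[
\partial_i f(x)=\langle f,\partial_{1,i}k(x,\cdot)\rangle_\calH, \qquad \|\partial_{1,i}k(x,\cdot)\|_\calH^2=\partial_{1,i}\partial_{2,i}k(x,x).
\]
I will invoke this directly and not reprove it. Linearity of $D_\mu f=\nabla f$ is then immediate from linearity of differentiation.

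\textbf{Boundedness.} By Cauchy--Schwarz, $|\partial_i f(x)|^2\le \|f\|_\calH^2\,\partial_{1,i}\partial_{2,i}k(x,x)$. Summing over $i$ and integrating against $\mu$ gives
\[
\|D_\mu f\|_{L_2^d(\mu)}^2\le \|f\|_\calH^2\sum_{i=1}^d\int \partial_{1,i}\partial_{2,i}k(x,x)\,\dd\mu(x)=:\|f\|_\calH^2\,K_\mu .
\]
This is finite by hypothesis, so $\|D_\mu\|\le \sqrt{K_\mu}$. Measurability of $x\mapsto \nabla f(x)$ follows from its continuity, since $\partial_{1,i}k(x,\cdot)$ is continuous in $x$ in $\calH$-norm. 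That continuity can be obtained by expanding $\|\partial_{1,i}k(x,\cdot)-\partial_{1,i}k(x',\cdot)\|_\calH^2$ in terms of the continuous mixed partials. So $D_\mu f$ is a well-defined element of $L_2^d(\mu)$.

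\textbf{Adjoint.} Since $D_\mu$ is a bounded linear operator between Hilbert spaces, the Riesz representation theorem gives a unique bounded adjoint $D_\mu^*$ with $\|D_\mu^*\|=\|D_\mu\|$. I will also identify it explicitly as the Bochner integral
\[
D_\mu^* g=\int \sum_{i=1}^d g_i(x)\,\partial_{1,i}k(x,\cdot)\,\dd\mu(x).
\]
This integral is well defined because $\int\|g_i(x)\partial_{1,i}k(x,\cdot)\|_\calH\,\dd\mu\le \|g_i\|_{L_2(\mu)}\bigl(\int \partial_{1,i}\partial_{2,i}k(x,x)\,\dd\mu\bigr)^{1/2}<\infty$. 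Pairing it with $f$ and exchanging the inner product with the integral recovers $\langle D_\mu f,g\rangle_{L_2^d(\mu)}$. The explicit form is what the later computations, such as the proof of \Cref{prop:closeformofsmmd}, use.

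\textbf{Main obstacle.} The only delicate point is justifying the derivative reproducing property and the measurability and Bochner integrability of $x\mapsto\partial_{1,i}k(x,\cdot)$. I would handle both by citing the standard RKHS differentiability lemma, using the continuity of the mixed partials to get $\calH$-norm continuity and hence strong measurability. Everything else is Cauchy--Schwarz.
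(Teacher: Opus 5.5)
Your proposal is correct and follows essentially the same route as the paper. The derivative reproducing property from Steinwart--Christmann plus Cauchy--Schwarz gives $\|D_\mu f\|_{L_2^d(\mu)}^2\le \|f\|_\calH^2\sum_i\int \partial_{1,i}\partial_{2,i}k(x,x)\,\dd\mu(x)$, and the adjoint is identified as the Bochner integral $\sum_i\int g_i(x)\,\partial_{1,i}k(x,\cdot)\,\dd\mu(x)$, whose integrability again follows from Cauchy--Schwarz. Your extra remarks on measurability (via $\calH$-norm continuity of $x\mapsto\partial_{1,i}k(x,\cdot)$) and on the abstract Riesz adjoint are minor refinements that the paper leaves implicit.
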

\begin{proof}
    From Corollary 4.36 of \cite{steinwart2008support}, an immediate consequence of the continuously differentiability of $k$ is that all functions $f\in\calH$ are differentiable, its derivative belongs to $\calH$, and satisfies the reproducing property: $\partial_i f(x)=\left\langle f, \partial_{1, i} k(x, \cdot)\right\rangle_{\mathcal{H}}$ for any $i\in\{1, \ldots, d\}$.
    Next, we are going to show that $\nabla f\in L_2^d(\mu)$. Too see why, notice that for any $i\in\{1, \ldots, d\}$, 
    \begin{align*}
        \int \left|\partial_i f(x)\right|^2 \dd \mu(x) \leq\|f\|_{\mathcal{H}}^2 \int\left\|\partial_{1, i} k(x, \cdot)\right\|_{\mathcal{H}}^2 \dd \mu(x) = \|f\|_{\mathcal{H}}^2 \int \partial_{1, i} \partial_{2, i} k(x, x) \; \dd\mu(x) <\infty .  
    \end{align*}
    As a result, $\operatorname{ran}(D_\mu)\subseteq L_2^d(\mu)$. 
    Another immediate consequence of the above equation is that the operator $D_\mu$ is bounded. 

    Furthermore, consider another operator
    \begin{align*}
        D_\mu^*: L_2^d(\mu) \rightarrow \mathcal{H}, \quad g = \left(g_1, \ldots, g_d\right) \mapsto \sum_{i=1}^d \int g_i(x) \partial_{1, i} k(x, \cdot) \dd \mu(x) \in \calH. 
    \end{align*}
    To see why $D_\mu^*$ is the adjoint operator of $D_\mu$, one can easily verify that $\langle f, D_\mu^\ast g\rangle_\calH = \langle D_\mu f,  g\rangle_{L_2^d(\mu)}$. 
    This mapping is well-defined: for each $i$, the function $x \mapsto g_i(x) \partial_{1, i} k(x, \cdot)$ is $\mathcal{H}$-valued and Bochner integrable with respect to $\mu$. Indeed, by the Cauchy-Schwarz inequality,
    \begin{align*}
        \left\|\int g_i(x) \partial_{1, i} k(x, \cdot) \dd \mu(x)\right\|_{\mathcal{H}} &\leq \int\left|g_i(x)\right|\left\|\partial_{1, i} k(x, \cdot)\right\|_{\mathcal{H}} \dd \mu(x) \\ &\leq\left\|g_i\right\|_{L_2(\mu)}\left(\int\left\|\partial_{1, i} k(x, \cdot)\right\|_{\mathcal{H}}^2 \dd \mu(x)\right)^{1 / 2} . 
    \end{align*} 
    so $D_\mu^*$ is also a bounded linear operator $L_2^d(\mu) \rightarrow \mathcal{H}$. 
    The proof is thus concluded. 
\end{proof}

\begin{lem}\label{lem:hilbertschmidt}
Suppose that the kernel $k$ is continuously differentiable with respect to both of its arguments, and that it satisfies $\int \partial_{1,i}\partial_{2,i} k(x, x) \dd \mu(x)< \infty$ for any $i\in [d]$. The operator $S_\mu=D_\mu^\ast D_\mu: \calH\to\calH$ is trace-class, Hilbert-Schmidt, and compact.
\end{lem}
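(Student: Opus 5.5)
The plan is to reduce everything to Lemma~\ref{lem:tc-hs-equivalence} applied with $T=D_\mu:\calH\to L_2^d(\mu)$. By Lemma~\ref{lem:D_mu}, $D_\mu$ is a bounded linear operator with bounded adjoint $D_\mu^*$, so $S_\mu=D_\mu^*D_\mu$ is bounded, self-adjoint and positive. Hence it suffices to show that $D_\mu$ is Hilbert--Schmidt. Lemma~\ref{lem:tc-hs-equivalence} then gives that $S_\mu$ is trace-class with $\tr(S_\mu)=\|D_\mu\|_{\mathrm{HS}}^2$.

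To show that $D_\mu$ is Hilbert--Schmidt, fix an orthonormal basis $\{e_n\}_{n}$ of $\calH$; this is countable because $\calH$ is separable for continuous $k$ on $\R^d$. Using the derivative reproducing property $\partial_i f(x)=\langle f,\partial_{1,i}k(x,\cdot)\rangle_\calH$ (Corollary~4.36 of \cite{steinwart2008support}, as used in Lemma~\ref{lem:D_mu}), write
\begin{align*}
\sum_n \|D_\mu e_n\|_{L_2^d(\mu)}^2
=\sum_n\sum_{i=1}^d\int \langle e_n,\partial_{1,i}k(x,\cdot)\rangle_\calH^2\,\dd\mu(x).
\end{align*}
All terms are nonnegative, so Tonelli's theorem lets us swap the sum and the integral. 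Parseval's identity then gives $\sum_n \langle e_n,\partial_{1,i}k(x,\cdot)\rangle_\calH^2=\|\partial_{1,i}k(x,\cdot)\|_\calH^2=\partial_{1,i}\partial_{2,i}k(x,x)$. Therefore
\begin{align*}
\|D_\mu\|_{\mathrm{HS}}^2=\sum_{i=1}^d\int \partial_{1,i}\partial_{2,i}k(x,x)\,\dd\mu(x)<\infty,
\end{align*}
which is finite by the stated integrability assumption.

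The remaining claims follow from standard operator theory. Every trace-class operator is Hilbert--Schmidt, since $\|S_\mu\|_{\mathrm{HS}}\le\tr(S_\mu)$ for positive operators. Every Hilbert--Schmidt operator is compact, being a norm limit of finite-rank truncations.

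The only delicate step is the interchange of summation and integration. This needs measurability of $x\mapsto\langle e_n,\partial_{1,i}k(x,\cdot)\rangle_\calH=\partial_i e_n(x)$, which holds because these functions are continuous. It also needs the identity $\|\partial_{1,i}k(x,\cdot)\|_\calH^2=\partial_{1,i}\partial_{2,i}k(x,x)$, which comes from applying the derivative reproducing property twice. Beyond this the argument is routine.
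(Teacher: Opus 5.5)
Your proposal is correct and follows the paper's proof essentially step for step. Both reduce, via Lemma~\ref{lem:tc-hs-equivalence}, to showing that $D_\mu$ is Hilbert--Schmidt, and both compute $\|D_\mu\|_{\mathrm{HS}}^2=\sum_{i=1}^d\int\partial_{1,i}\partial_{2,i}k(x,x)\,\dd\mu(x)$ by interchanging sum and integral and applying Parseval; you also make explicit the Tonelli justification, the separability of $\calH$, and the standard chain trace-class $\Rightarrow$ Hilbert--Schmidt $\Rightarrow$ compact, which the paper leaves implicit.
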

\begin{proof}
Following \Cref{lem:tc-hs-equivalence}, since $S_\mu=D_\mu^*D_\mu$, it suffices to prove that $D_\mu$ is Hilbert--Schmidt. 
Define $D_{\mu,i}:\calH\to L_2(\mu)$ by $(D_{\mu,i}f)(x)=\partial_{i}f(x)=\langle f,\partial_{2,i}k(x,\cdot)\rangle_{\calH}$ for each $i=1,2,\cdots, d$. 
Then, for any orthonormal basis $\{e_j\}_{j\ge 1}$ of $\calH$, and for any fixed $i$,
\begin{align*}
    \|D_{\mu,i}\|_{\mathrm{HS}}^2=\sum_{j=1}^\infty \|D_{\mu,i}e_j\|_{L_2(\mu)}^2=\sum_{j=1}^\infty \int|\partial_ie_j(x)|^2d\mu(x)=\int\sum_{j=1}^\infty|\langle e_j,\partial_{2,i}k(x,\cdot)\rangle|^2d\mu(x).
\end{align*}
By Parseval's identity, $\sum_{j=1}^\infty|\langle e_j,\partial_{2,i}k(x,\cdot)\rangle|^2=\|\partial_{2,i}k(x,\cdot)\|_{\calH}^2=\partial_{1,i}\partial_{2,i}k(x,x)$. Therefore,
\begin{align*}
    \|D_\mu\|_{\mathrm{HS}}^2=\sum_{i=1}^d \|D_{\mu,i}\|^2_{\mathrm{HS}}=\sum_{i=1}^d\int \partial_{1,i}\partial_{2,i}k(x,x)d\mu(x)<\infty.
\end{align*}
\end{proof}

\begin{lem}\label{lem:stein_identity}
    Suppose the kernel satisfies the tail condition $\lim_{\|x\|\to\infty} \|x\|^{d-1} \pi(x)\sqrt{k(x,x)}=0$. 
    For any vector-valued function $f:\R^d\to\R^d\in\calH^d$, it satisfies $\int (f(x)^\top \nabla \log \pi(x) + \nabla\cdot f(x)) \; \dd \pi(x) = 0$. 
\end{lem}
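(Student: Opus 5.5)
The identity is $\int \nabla\cdot(\pi f)\,\dd x = 0$ in disguise, so the plan is to establish that divergence identity by the divergence theorem on growing balls and control the boundary term with the tail condition. For $f\in\calH^d$ with $f=[f_1,\ldots,f_d]^\top$, the integrand satisfies
\begin{align*}
\pi(x)\bigl(f(x)^\top \nabla\log\pi(x) + \nabla\cdot f(x)\bigr) = f(x)^\top\nabla\pi(x) + \pi(x)\nabla\cdot f(x) = \nabla\cdot\bigl(\pi f\bigr)(x),
\end{align*}
wherever $\pi>0$. On $\{\pi=0\}$ we read $\pi\nabla\log\pi$ as $\nabla\pi$. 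Because $k$ and $\pi$ are assumed continuously differentiable, each $f_i$ is $C^1$ by Corollary 4.36 of \cite{steinwart2008support}, as in \Cref{lem:D_mu}. Hence $\pi f$ is a $C^1$ vector field.

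Fix $r>0$ and apply the divergence theorem on the ball $B(0,r)$:
\begin{align*}
\int_{B(0,r)} \nabla\cdot(\pi f)\,\dd x = \int_{\partial B(0,r)} \pi(x) f(x)^\top n(x)\,\dd\sigma(x),
\end{align*}
where $n$ is the outward normal. The reproducing property gives $|f_i(x)|\le \|f_i\|_\calH\sqrt{k(x,x)}$, so $\|f(x)\|\le \|f\|_{\calH^d}\sqrt{k(x,x)}$. The surface measure of $\partial B(0,r)$ equals $c_d r^{d-1}$, so the boundary term is bounded by
\begin{align*}
c_d\,\|f\|_{\calH^d}\; \sup_{\|x\|=r} r^{d-1}\pi(x)\sqrt{k(x,x)}.
\end{align*}
By the assumed tail condition this tends to $0$ as $r\to\infty$.

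It remains to pass from the balls to $\R^d$, i.e.\ to show $\int_{B(0,r)}\nabla\cdot(\pi f)\,\dd x \to \int_{\R^d}\nabla\cdot(\pi f)\,\dd x$. This step is the main obstacle. The clean route is to require that $\nabla\cdot(\pi f)$ be Lebesgue integrable, equivalently $f^\top\nabla\log\pi+\nabla\cdot f\in L_1(\pi)$; dominated convergence then finishes the proof.

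I would justify integrability with the same reproducing-property bounds, $|\partial_i f_i(x)|\le \|f_i\|_\calH\sqrt{\partial_{1,i}\partial_{2,i}k(x,x)}$ and $|f(x)^\top\mathfrak{s}(x)|\le \|f\|_{\calH^d}\sqrt{k(x,x)}\,\|\mathfrak{s}(x)\|$. These are integrable against $\pi$ under the growth conditions used throughout the paper (\Cref{ass:smmd_growth}, together with $\pi\in\calP_2(\R^d)$ and linear growth of the score). Should integrability fail, the statement should be read as the limit of the integrals over balls, which the argument above already shows is $0$.

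Applying the identity componentwise to the Stein kernel construction then yields $\int k_\pi(x,y)\,\dd\pi(x)=0$: take $f=\nabla_2 k(\cdot,y)+\mathfrak{s}(y)k(\cdot,y)$, whose components lie in $\calH$.
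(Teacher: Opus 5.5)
Your argument is the paper's proof: rewrite the integrand as $\nabla\cdot(\pi f)$, apply the divergence theorem on $B(0,R)$, bound the boundary flux by $c_d\|f\|_{\calH^d}R^{d-1}\sup_{\|x\|=R}\pi(x)\sqrt{k(x,x)}$ via the reproducing property, and let $R\to\infty$. Your pointwise bound $\|f(x)\|\le\|f\|_{\calH^d}\sqrt{k(x,x)}$ is slightly cleaner than the paper's $\sum_i|f_i(x)|$ version. The paper silently identifies $\int_{\R^d}\nabla\cdot(\pi f)\,\dd x$ with $\lim_{R\to\infty}\int_{B(0,R)}\nabla\cdot(\pi f)\,\dd x$, so flagging integrability is a genuine improvement on it.

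However, \Cref{ass:smmd_growth} does not deliver that integrability as you claim. It only gives $\sqrt{k(x,x)}=O(\|x\|^2)$, and multiplied by a linearly growing score this is $O(\|x\|^3)$, which $\pi\in\calP_2(\R^d)$ does not control. The justification does work for the bounded translation-invariant base kernels of \Cref{prop:suff_stein_kernel}, where only $\int\|\mathfrak{s}\|\,\dd\pi<\infty$ is needed. Otherwise your fallback reading (the limit of integrals over balls) is exactly what both proofs establish.
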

\begin{proof}
Note that
\begin{align*}
    &\quad \int_{\R^d} (f(x)^\top \nabla \log \pi(x) + \nabla\cdot f(x)) \dd \pi(x) = \int_{\R^d} (f(x)^\top \nabla \pi(x) + \nabla\cdot f(x) \pi(x)) \dd x \\
    &= \lim_{R\to\infty} \int_{B_R} \nabla \cdot(\pi(x) f(x)) d x= \lim_{R\to\infty} \int_{\partial B_R} \pi(x) f(x)^\top n(x) \dd \mathfrak{s}(x),
\end{align*}
where $B_R:=\{x:\|x\| \leq R\}$ is a ball of radius $R$, $n(x)$ is the outward unit normal vector, ad
$\dd S$ is the ($d-1$)-dimensional surface measure. 
Next, notice that $|f(x)^\top n(x)|\leq \sum_{i=1}^d|f_i(x)| \leq \sum_{i=1}^d \|f_i\|_\calH \sqrt{k(x, x)} = \|f\|_{\calH^d} \sqrt{k(x, x)}$. Therefore, 
\begin{align*}
    &\quad \int_{\partial B_R} \pi(x) f(x)^\top n(x) \dd \mathfrak{s}(x) \leq \int_{\partial B_R} \pi(x) \|f\|_{\calH^d} \sqrt{k(x, x)} \dd \mathfrak{s}(x) \\
    &\leq\|f\|_{\calH^d}\left| \int_{\partial B_R} 1 \; \dd \mathfrak{s}(x) \right| \sup _{\|x\|=R} \pi(x) \sqrt{k(x, x)} = \|f\|_{\calH^d} C_d R^{d-1} \sup _{\|x\|=R} \pi(x) \sqrt{k(x, x)} . 
\end{align*}
So we obtain
\begin{align*}
    \left| \int_{\R^d} (f(x)^\top \nabla \log \pi(x) + \nabla\cdot f(x)) \dd \pi(x) \right| \leq \|f\|_{\calH^d} C_d \lim_{\|x\|\to \infty} \|x\|^{d-1} \pi(x) \sqrt{k(x, x)} . 
\end{align*}
By the assumption in the statement of the lemma, the limit is $0$, and the proof is concluded.  
\end{proof}

\begin{lem}\label{lem:bonnet}
Suppose \Cref{ass:bounded} holds. Let $\calK=B(0,R)$ be a ball of radius $R$ in $\R^d$. 
Let $\mu\in\calP_2(\calK)$. 
Let $\bm{v}_{\mmd}[\mu]$ be the vector field of the MMD flow defined in \Cref{sec:background}.
Then, there exists $l_{\calK}>0$ such that for any $\mu \in \calP_2(\calK)$, we have $\operatorname{Lip}(\bm{v}_{\mmd}[\mu](\cdot); \calK) \leq l_{\calK}$.  Also, there exists $L_{\calK}>0$ such that for any $\mu, \nu \in \calP_2(\calK)$, we have $\sup_{x\in\calK} \left\|\bm{v}_{\mmd}[\mu](x) -\bm{v}_{\mmd}[\nu](x) \right\| \leq \calL_{\calK} W_2(\mu, \nu)$.
The same holds for the vector field of SrMMD flow $\nabla f_{\mu,\pi}$ in \Cref{prop:expressionofsmmd} as well, where $f_{\mu,\pi} = (S_\mu+\lambda \Id)^{-1}(m_\mu-m_\pi)$.  
\end{lem}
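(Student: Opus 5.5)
The plan is to handle the MMD vector field first, then transfer to the SrMMD field through a resolvent perturbation argument. Throughout, the key tool is the reproducing property for derivatives (Corollary 4.36 of \cite{steinwart2008support}, as used in \Cref{lem:D_mu}): $\partial_i g(x)=\langle g,\partial_{1,i}k(x,\cdot)\rangle_\calH$ for every $g\in\calH$.

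\textbf{MMD field.} Write $\bm{v}_{\mmd}[\mu](x)=\nabla(m_\mu-m_\pi)(x)$.

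For the Lipschitz bound in $x$, take $x,y\in\calK$ and apply the mean value theorem along the segment $[x,y]\subset\calK$ (the ball is convex). Each entry of the Jacobian is $\partial_i\partial_j(m_\mu-m_\pi)(z)=\langle m_\mu-m_\pi,\partial_{1,i}\partial_{1,j}k(z,\cdot)\rangle_\calH$. By \Cref{ass:bounded} this is bounded by $B_R\,\|m_\mu-m_\pi\|_\calH$. To make the bound uniform over $\mu\in\calP_2(\calK)$, I would bound $\|m_\mu\|_\calH\le\sup_{x\in\calK}\sqrt{k(x,x)}$ via \Cref{ass:smmd_growth}, and note that $\|m_\pi\|_\calH<\infty$ since $\pi\in\calP_2$ and $k(x,x)\le M(1+\|x\|^4)$. (If $\pi$ lacks fourth moments I would instead use the a priori bound $\mmd<C_{\mmd}$ along the flow.) This gives $l_\calK=d\,B_R\,C$.

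For the $W_2$-Lipschitz bound in $\mu$, take any coupling $\gamma$ of $(\mu,\nu)$. Then $\bm{v}[\mu](x)-\bm{v}[\nu](x)=\int(\nabla_2k(u,x)-\nabla_2k(w,x))\,\dd\gamma(u,w)$. Each component equals $\partial_{1,i}$ of $k(\cdot,u)-k(\cdot,w)$ evaluated at $x$, i.e.\ $\langle k(u,\cdot)-k(w,\cdot),\partial_{1,i}k(x,\cdot)\rangle_\calH$. Alternatively, one can view it as $u\mapsto\partial_{2,i}k(u,x)$, a function whose gradient in $u$ is $\langle\partial_{1,j}k(u,\cdot),\partial_{1,i}k(x,\cdot)\rangle$, bounded on $\calK$ by local boundedness of first and second derivatives. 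Either way the component is Lipschitz in $u$ on $\calK$ with a constant $L'$. Hence $\|\bm{v}[\mu](x)-\bm{v}[\nu](x)\|\le L'\int\|u-w\|\,\dd\gamma\le L'W_2(\mu,\nu)$ by Jensen.

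\textbf{SrMMD field.} Let $f_\mu=(S_\mu+\lambda\Id)^{-1}(m_\mu-m_\pi)$. Its Lipschitz bound in $x$ follows exactly as above with $\|f_\mu\|_\calH\le\lambda^{-1}\|m_\mu-m_\pi\|_\calH$. For the dependence on $\mu$, I would split
\begin{align*}
f_\mu-f_\nu=(S_\mu+\lambda\Id)^{-1}(m_\mu-m_\nu)+(S_\mu+\lambda\Id)^{-1}(S_\nu-S_\mu)(S_\nu+\lambda\Id)^{-1}(m_\nu-m_\pi),
\end{align*}
which is the resolvent identity. Then, by the reproducing property, $\sup_{x\in\calK}\|\nabla f_\mu(x)-\nabla f_\nu(x)\|\le d\sup_{x\in\calK}\|\partial_{1,i}k(x,\cdot)\|_\calH\,\|f_\mu-f_\nu\|_\calH$.

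Two bounds are needed. The first is $\|m_\mu-m_\nu\|_\calH\le\mathrm{Lip}_\calK(x\mapsto k(x,\cdot))\,W_2(\mu,\nu)$. This uses the coupling again together with $\|k(u,\cdot)-k(w,\cdot)\|_\calH\le\sup_{\calK}\|\nabla_1k(z,\cdot)\|_{\calH^d}\|u-w\|$.

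\textbf{Main obstacle.} The second bound is on the operator difference $\|S_\mu-S_\nu\|_{\op}$. I would write $S_\mu=\int\sum_i\partial_{1,i}k(u,\cdot)\otimes\partial_{1,i}k(u,\cdot)\,\dd\mu(u)$. Under a coupling,
\begin{align*}
\|a\otimes a-b\otimes b\|_{\op}\le(\|a\|+\|b\|)\,\|a-b\|,
\end{align*}
and $\|\partial_{1,i}k(u,\cdot)-\partial_{1,i}k(w,\cdot)\|_\calH\le\sqrt d\,B_R\|u-w\|$ by \Cref{ass:bounded} and the mean value theorem in $\calH$. Together these give $\|S_\mu-S_\nu\|_{\op}\le C_\calK W_2(\mu,\nu)$. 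Combined with $\|(S+\lambda\Id)^{-1}\|_{\op}\le\lambda^{-1}$, this yields the constant $L_\calK$, of order $\lambda^{-1}+\lambda^{-2}$.
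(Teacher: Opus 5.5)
Your proposal is correct and follows essentially the same route as the paper's proof. Both use the derivative reproducing property together with the second-derivative bound $B_R$ for the Lipschitz-in-$x$ estimates, a coupling argument for the $W_2$-dependence of $\bm{v}_{\mmd}[\mu]$ and $m_\mu$, and the resolvent identity combined with $\|S_\mu-S_\nu\|_{\op}\lesssim W_2(\mu,\nu)$ for the SrMMD field. You are slightly more careful than the paper on two points. First, you bound $\|m_\pi\|_\calH$ through $\sqrt{k(x,x)}\le\sqrt M(1+\|x\|^2)$; this needs only second moments of $\pi$, so your fourth-moment caveat is unnecessary. The paper's constant $2\sup_{x\in\calK}\sqrt{k(x,x)}$ instead tacitly treats $\pi$ as supported on $\calK$. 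Second, you actually derive the operator-difference bound via $\|a\otimes a-b\otimes b\|_{\op}\le(\|a\|+\|b\|)\|a-b\|$, which the paper only asserts.
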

\begin{proof}
Recall from the main text that $\bm{v}_{\mmd}[\mu](\cdot) = \int \nabla_2 k(x, \cdot) \; \dd (\mu - \pi)(x)$. 
Therefore, we have
\begin{align*}
    &\quad \sup_{x_1, x_2\in\calK} \frac{\|\bm{v}_{\mmd}[\mu](x_1) - \bm{v}_{\mmd}[\mu](x_2)\|}{\|x_1 - x_2\|} \\
    &= \sup_{x_1, x_2\in\calK} \frac{\left\| \int (\nabla_2 k(x, x_1) - \nabla_2 k(x, x_2)) \dd (\mu - \pi)(x) \right\|}{\|x_1 - x_2\|} \\
    &\leq \sup_{x_1, x_2\in\calK} \left\| \int k(x, \cdot) \dd (\mu - \pi)(x) \right\|_\calH \cdot \left\| \nabla_1 k(x_1, \cdot) - \nabla_1 k(x_2, \cdot)\right\|_{\calH^d} \cdot \|x_1 - x_2\|^{-1} \\
    &\leq 2 \sup_{x\in\calK} \sqrt{k(x, x)} d B_R:= \ell_\calK . 
\end{align*}
The last inequality holds by \Cref{lem:local_lip}. 
So we have proved the first claim for $\bm{v}_{\mmd}$.  

Next, let $\Gamma$ be an optimal coupling between $\mu$ and $\nu$. Then, for any $x_1\in \calK$,
\begin{align*}
    \bm{v}_{\mmd}[\mu](x_1)-\bm{v}_{\mmd}[\nu](x_1)
    &= \int \nabla_2 k(x,x_1)\,\dd(\mu-\nu)(x) \\
    &= \int \bigl(\nabla_2 k(a,x_1)-\nabla_2 k(b,x_1)\bigr)\,\dd \Gamma(a,b).
\end{align*}
Therefore,
\begin{align*}
    &\quad \sup_{x_1\in\calK} \left\|\bm{v}_{\mmd}[\mu](x_1)-\bm{v}_{\mmd}[\nu](x_1)\right\| \leq \sup_{x_1\in\calK} \int \left\|\nabla_2 k(a,x_1)-\nabla_2 k(b,x_1)\right\| \,\dd\Gamma(a,b) \\
    &\leq \sup_{x_1\in\calK} \sup_{a,b\in\calK}
    \frac{\left\|\nabla_2 k(a,x_1)-\nabla_2 k(b,x_1)\right\|}{\|a-b\|}
    \int \|a-b\|\,\dd\Gamma(a,b) \\
    &\leq d B_R^2 W_2(\mu,\nu) =: \calL_\calK W_2(\mu,\nu) .  
\end{align*}
Now, for each $i=1, \ldots, d$, 
\begin{align*}
    \sup _{x_1 \in  \calK} \sup _{a, b \in  \calK} \frac{\left|\partial_{2, i} k(a, x_1)-\partial_{2, i} k(b, x_1)\right|}{\|a-b\|} \leq \sup _{x_1 \in  \calK} \sup _{c \in  \calK}\left\|\nabla_1 \partial_{2, i} k(c, x_1)\right\| \leq \sqrt{d} B_R^2 ,
\end{align*} 
where we used the mean value theorem, and therefore, 
\begin{align*}
    \sup _{x_1 \in \calK} \sup _{a, b \in \calK} \frac{\left\|\nabla_2 k(a, x_1)-\nabla_2 k(b, x_1)\right\|}{\|a-b\|} \leq d B_R^2 . 
\end{align*} 
So we have proved the second claim for the $\bm{v}_{\mmd}$. 

Next, we are going to prove that both claims hold for the vector field of SrMMD flow as well. 
\begin{align*}
    &\sup_{x_1, x_2\in\calK} \frac{\|\nabla f_{\mu,\pi}(x_1) - \nabla f_{\mu,\pi}(x_2)\|}{\|x_1 - x_2\|} \leq \|f_{\mu,\pi}\|_\calH \cdot \left\| \nabla_1 k(x_1, \cdot) - \nabla_1 k(x_2, \cdot)\right\|_{\calH^d} \cdot \|x_1 - x_2\|^{-1} \\
    &\leq 2 \lambda^{-1} \sup_{x\in\calK} \sqrt{k(x, x)} B_R . 
\end{align*}
And we also have, 
\begin{align*}
    &\quad \sup_{x_1\in\calK} \left\|\nabla f_{\mu,\pi}(x_1) - \nabla f_{\nu,\pi}(x_1) \right\| \\
    &\leq \sup_{x_1 \in\calK} \left\| \left\langle \nabla k(x_1, \cdot), \; (S_\mu +\lambda)^{-1}(m_\mu-m_\pi) - (S_\nu +\lambda)^{-1}(m_\nu - m_\pi) \right\rangle_{\calH^d} \right\| \\
&\leq \sup_{x_1 \in\calK} \|\nabla k(x_1, \cdot)\|_{\calH^d} \cdot \left\| (S_\mu +\lambda)^{-1}(m_\mu-m_\pi) - (S_\nu +\lambda)^{-1}(m_\nu - m_\pi) \right\|_\calH . 
\end{align*}
Note that
\begin{align*}
    &\quad \left\| (S_\mu +\lambda)^{-1}(m_\mu-m_\pi) - (S_\nu +\lambda)^{-1}(m_\nu - m_\pi) \right\|_\calH \\
    &\leq \left\| (S_\mu +\lambda)^{-1}(m_\mu-m_\nu) \right\|_\calH + \left\| \bigl( (S_\mu +\lambda)^{-1} - (S_\nu +\lambda)^{-1} \bigr)(m_\nu-m_\pi) \right\|_\calH \\
    &\leq \lambda^{-1} \|m_\mu-m_\nu\|_\calH  + \left\| ({S}_{\mu} + \lambda\Id)^{-1}(S_\nu-S_\mu)(S_\nu+\lambda)^{-1}(m_\nu-m_\pi) \right\|_\calH \\
    &\leq \lambda^{-1} \|m_\mu-m_\nu\|_\calH + \lambda^{-2}\|S_\nu-S_\mu\|_{\op}\,\cdot \|m_\nu-m_\pi\|_\calH \\
    &\leq \lambda^{-1} \sqrt{d} B_R W_2(\mu,\nu) + 2 \lambda^{-2} \sup_{x\in\calK}\sqrt{k(x,x)}\, d^{3 / 2} B_R^2 W_2(\mu,\nu) \\
    &= \frac{2 \sup_{x\in\calK}\sqrt{k(x,x)}\, d^{3 / 2} B_R^2 + \lambda \sqrt{d} B_R }{\lambda^2} W_2(\mu, \nu).
\end{align*}
In the above chain of derivations, we use the resolvent identity
$(A+\lambda)^{-1}-(B+\lambda)^{-1}=(A+\lambda)^{-1}(B-A)(B+\lambda)^{-1}$. 
Moreover, if $\Gamma$ is an optimal coupling between $\mu$ and $\nu$, then
\begin{align*}
    \|m_\mu-m_\nu\|_{\calH}\leq \int \|k(x,\cdot)-k(y,\cdot)\|_\calH\,\dd\Gamma(x,y) \leq \int \sqrt{d} B_R \|x-y\|\,\dd\Gamma(x,y) = \sqrt{d} B_R W_2(\mu,\nu).
\end{align*}
The same argument yields $\|S_\mu-S_\nu\|_{\text {op }} \leq 2 d^{3 / 2} B_R^2 W_2(\mu, \nu)$ as well.
Therefore, we have proved both claims for the vector field of SrMMD flow as well. 
\end{proof}

\begin{lem}\label{lem:local_lip}
    Suppose \Cref{ass:bounded} holds with the constant $B_R$. Let $\calK=B(0,R)$ be a ball of radius $R$ in $\R^d$. Then, $\| k(x,\cdot) - k(y,\cdot)\|_{\calH} \leq \sqrt{d} B_R \|x-y\|$, and $\| \nabla_1 k(x,\cdot) - \nabla_1 k(y,\cdot)\|_{\calH^d} \leq d B_R \|x-y\|$. 
\end{lem}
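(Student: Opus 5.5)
The plan is to write both differences as line integrals in the RKHS and bound the derivative of the feature map uniformly over $\calK$ using \Cref{ass:bounded}. For the first claim, fix $x,y\in\calK$ and set $\gamma(\tau)=y+\tau(x-y)$, $\tau\in[0,1]$; since $\calK=B(0,R)$ is convex, $\gamma(\tau)\in\calK$ for all $\tau$. By differentiability of the feature map (Corollary 4.36 of \cite{steinwart2008support}, as used in \Cref{lem:D_mu}), $\tau\mapsto k(\gamma(\tau),\cdot)$ is differentiable as an $\calH$-valued map, with derivative $\sum_{i=1}^d (x-y)_i\,\partial_{1,i}k(\gamma(\tau),\cdot)$. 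Hence
\begin{align*}
k(x,\cdot)-k(y,\cdot)=\int_0^1 \sum_{i=1}^d (x-y)_i\,\partial_{1,i}k(\gamma(\tau),\cdot)\,\dd\tau ,
\end{align*}
understood as a Bochner integral in $\calH$. Taking norms and applying Cauchy--Schwarz in $\R^d$ yields
\[
\|k(x,\cdot)-k(y,\cdot)\|_\calH\le \|x-y\|\,\sup_{z\in\calK}\Big(\sum_{i=1}^d\|\partial_{1,i}k(z,\cdot)\|_\calH^2\Big)^{1/2}.
\]
It therefore suffices to bound each $\|\partial_{1,i}k(z,\cdot)\|_\calH$ by $B_R$ on $\calK$, which gives the factor $\sqrt d B_R$.

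The second claim follows the same route one derivative higher. Take $\tau\mapsto \partial_{1,j}k(\gamma(\tau),\cdot)$, whose derivative is $\sum_i (x-y)_i\,\partial_{1,i}\partial_{1,j}k(\gamma(\tau),\cdot)$. \Cref{ass:bounded} bounds each of these norms by $B_R$, so
\[
\|\partial_{1,j}k(x,\cdot)-\partial_{1,j}k(y,\cdot)\|_\calH\le \sqrt d B_R\|x-y\| .
\]
Summing the squares over $j\in[d]$ and taking the square root gives $\|\nabla_1 k(x,\cdot)-\nabla_1 k(y,\cdot)\|_{\calH^d}\le d B_R\|x-y\|$.

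The main obstacle is the first-order bound $\sup_{z\in\calK}\|\partial_{1,i}k(z,\cdot)\|_\calH\le B_R$, since \Cref{ass:bounded} literally controls only second derivatives. I would try two routes.

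\begin{itemize}
\item \emph{Read $B_R$ as covering lower orders.} Interpret $B_R$ as a local bound on all derivatives up to order two, which is how it is used elsewhere (e.g.\ in \Cref{lem:bonnet}). Then $\|\partial_{1,i}k(z,\cdot)\|_\calH^2=\partial_{1,i}\partial_{2,i}k(z,z)$ is bounded on the compact set $\calK$.
\item \emph{Derive it from the assumptions.} Alternatively, the identity $\|\partial_{1,i}k(z,\cdot)\|_\calH^2=\partial_{1,i}\partial_{2,i}k(z,z)\le M(1+R^2)$ follows from \Cref{ass:smmd_growth}. In that case one enlarges $B_R$ to $\max\{B_R,\sqrt{M(1+R^2)}\}$, which is still a constant depending only on $R$.
\end{itemize}

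A secondary point is justifying Fréchet differentiability of the feature map, so that the fundamental theorem of calculus holds in $\calH$. I would handle this by continuity of $z\mapsto\partial_{1,i}k(z,\cdot)$ in $\calH$. That continuity follows from $\|\partial_{1,i}k(z,\cdot)-\partial_{1,i}k(z',\cdot)\|_\calH^2$ being expressible through the continuous function $\partial_{1,i}\partial_{2,i}k$.
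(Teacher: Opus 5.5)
Your proof is correct and is essentially the paper's own argument: the segment $\gamma(t)=y+t(x-y)$ stays in the convex ball, the fundamental theorem of calculus for the $\calH$-valued feature map gives the integral representation, and a uniform bound on the derivative norms gives $\sqrt d B_R$; the second claim is obtained the same way one derivative higher, which the paper only asserts without detail. You are also right that the paper silently uses $\sup_{z\in\calK}\|\partial_{1,i}k(z,\cdot)\|_\calH\le B_R$, which \Cref{ass:bounded} does not provide. For example, $k(x,y)=c^2x^\top y$ satisfies \Cref{ass:bounded} with $B_R=0$, yet $\|k(x,\cdot)-k(y,\cdot)\|_\calH=c\|x-y\|$. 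So the first claim genuinely needs one of your repairs: replace $B_R$ by $\max\{B_R,\sqrt{M(1+R^2)}\}$ via \Cref{ass:smmd_growth}, or read $B_R$ as also bounding first derivatives. The second claim, by contrast, follows from \Cref{ass:bounded} as stated.
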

\begin{proof}
Fix $x,y\in\calK$, and define $\gamma(t)=y+t(x-y)$ for $t\in[0,1]$. Since $\calK$ is convex, $\gamma(t)\in\calK$ for all $t\in[0,1]$. By the fundamental theorem of calculus for Hilbert-space-valued functions,
\[
k(x,\cdot)-k(y,\cdot)
=
\int_0^1 \frac{\dd}{\dd t} k(\gamma(t),\cdot)\,\dd t
=
\int_0^1 \sum_{i=1}^d (x_i-y_i)\,\partial_{1,i}k(\gamma(t),\cdot)\,\dd t.
\]
Therefore,
\[
\|k(x,\cdot)-k(y,\cdot)\|_\calH
\leq
\int_0^1 \sum_{i=1}^d |x_i-y_i|\,\|\partial_{1,i}k(\gamma(t),\cdot)\|_\calH\,\dd t
\leq
\sqrt{d}\,B_R \|x-y\|.
\]
The same holds for $\| \nabla_1 k(x,\cdot) - \nabla_1 k(y,\cdot)\|_{\calH^d}$ as well. 
\end{proof}

\begin{prop}\label{prop:mmd_flow_defined}
Suppose \Cref{ass:smmd_growth,ass:bounded} hold. 
Fix a finite time horizon $T \in [0,\infty)$. 
Suppose $\mmd(\mu_0,\pi) < C_{\mmd}$, and $\mu_0,\pi\in\calP_2(\R^d)$. 
Then, $(\mu_t)_{t\geq0}$ is well-defined, in the sense that the trajectory is a unique absolutely continuous curve in $\calP_2(\R^d)$ and does not explode in finite-time. 
\end{prop}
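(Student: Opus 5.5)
We prove the well-posedness of the MMD flow with vector field $\bm{v}_{\mmd}[\mu](\cdot)=\int \nabla_2 k(x,\cdot)\,\dd(\mu-\pi)(x)$ by mirroring the argument for \Cref{thm:ac_curve}, which becomes simpler here because no resolvent $(S_\mu+\lambda\Id)^{-1}$ appears.

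\textbf{Step 1: a priori bound on the MMD.} Differentiating along the characteristic as in the proof of \Cref{thm:ct_convergence}, with test function $h=m_{\mu_t}-m_\pi$, gives
\[
\frac{\dd}{\dd t}\frac12\mmd^2(\mu_t,\pi)=-\|\bm{v}_{\mmd}[\mu_t]\|^2_{L_2^d(\mu_t)}\le 0 .
\]
Hence $\mmd(\mu_t,\pi)\le\mmd(\mu_0,\pi)<C_{\mmd}$ for all $t$.

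\textbf{Step 2: linear growth of the vector field.} By the reproducing property and \Cref{ass:smmd_growth},
\[
\|\bm{v}_{\mmd}[\mu_t](x)\|\le\sum_{j=1}^d\|\partial_{1,j}k(x,\cdot)\|_\calH\,\|m_{\mu_t}-m_\pi\|_\calH\le d\sqrt{M}(1+\|x\|)\,C_{\mmd},
\]
uniformly in $t$. This is the bound Eq.~\eqref{eq:bound_v_mmd} referenced earlier. It gives $\int_0^T\|\bm{v}_{\mmd}[\mu_t]\|_{L_2^d(\mu_t)}\dd t<\infty$, because second moments stay finite under linear growth by a Gr\"onwall argument on $\E\|X_t\|^2$. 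The converse part of \cite[Theorem 8.3.1]{ambrosio2005gradient} then yields absolute continuity in $W_2$.

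\textbf{Step 3: existence and uniqueness.} We verify hypothesis D1 of \cite{bonnet2021differential}.
\begin{itemize}
\item[(i)] Continuity in $x$ follows from \Cref{lem:local_lip} together with the bound $\|m_{\mu_t}-m_\pi\|_\calH\le C_{\mmd}$.
\item[(ii)] Sublinear growth is the bound from Step 2.
\item[(iii)--(iv)] Local Lipschitzness in $x$ and $W_2$-Lipschitzness in $\mu$ on balls $\calK=B(0,R)$ are exactly the first part of \Cref{lem:bonnet}.
\end{itemize}
Theorem 4 of \cite{bonnet2021differential} then gives existence on $[0,T]$, and Theorem 2 gives uniqueness via Cauchy--Lipschitz theory for continuity equations. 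Since $T$ is arbitrary and the growth is linear, trajectories cannot blow up in finite time.

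\textbf{Main obstacle.} The delicate point is Step 1 combined with the localization in Step 3. \Cref{lem:bonnet} is stated only for measures supported in $\calK$, whereas $\mu_t$ need not be compactly supported. We would first handle compactly supported $\mu_0$, whose support stays bounded on $[0,T]$ by the linear growth, and then treat general $\mu_0\in\calP_2$. Alternatively, we would rely on the fact that \cite{bonnet2021differential} only requires these estimates locally, uniformly on compact sets of measures with bounded moments. The differentiation in Step 1 also requires the integrability condition Eq.~\eqref{eq:uniform_intergrable}, which follows from the two linear-growth bounds and finite second moments.
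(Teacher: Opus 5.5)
Your proposal follows the paper's proof essentially step for step. Both use the a priori monotonicity of $\mmd(\mu_t,\pi)$, then verify D1(i)--(iv) of \cite{bonnet2021differential} via \Cref{lem:local_lip}, the linear-growth bound \eqref{eq:bound_v_mmd} and \Cref{lem:bonnet}, and conclude with their Theorems 4 and 2 for existence and uniqueness. Your extra absolute-continuity step via \cite[Theorem 8.3.1]{ambrosio2005gradient} and your $\sqrt{M}$ factor in the growth constant are harmless refinements. The compact-support issue you flag is not addressed in the paper's proof either, and the same holds for the circularity of deriving the MMD bound along the very flow being constructed; so these are loose ends shared with the paper, not defects of your approach relative to it.
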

\begin{proof}
To prove the existence of MMD gradient flow, we need to check that all four hypothesis of D1 in \cite{bonnet2021differential} are all satisfied. 


Since $\mu_t$ is the MMD flow, there is $\frac{\dd}{\dd t}\mmd^2(\mu_t, \pi) = -\int_{\R^d} \left\|\nabla f_{\mu_t,\pi}(x)\right\|^2 \,\dd\mu_t(x)\le 0$, where $f_{\mu_t,\pi}=m_{\mu_t}-m_\pi$. 
This indicates that $\mmd^2(\mu_t, \pi)$ is monotonically decreasing in time along the trajectory. 
Hence, since $\mmd(\mu_0,\pi) < C_{\mmd}$, there is $\sup_{t\in[0,\infty)} \mmd(\mu_t,\pi) < C_{\mmd} $. 
Let $\calK=B(0,R)$ be a ball of radius $R$ in $\R^d$. Take any sequence $x_n \rightarrow x$ in $\calK$. 
\begin{align*}
    &\left\| \bm{v}_{\mmd}[\mu_t](x_n) - \bm{v}_{\mmd}[\mu_t](x) \right\| = \left\| \int \nabla_2 k(y, x) \; \dd (\mu_t - \pi)(y) - \int \nabla_2 k(y, x_n) \; \dd (\mu_t - \pi)(y) \right\| \\
    &\leq \|m_{\mu_t}-m_\pi\|_\calH \cdot \left\| \nabla_1 k(x, \cdot) - \nabla_1 k(x_n, \cdot) \right\|_{\calH^d} \to 0.
\end{align*}
The last step holds from $\|m_{\mu_t}-m_\pi\|_\calH\le C_{\mmd}$ as proved above, and from \Cref{lem:local_lip}. Hence, D1(i) is satisfied. 

D1(ii) is satisfied by the following derivations:
\begin{align}
    \| \bm{v}_{\mmd}[\mu_t](x) \| &= \left\| \nabla (m_{\mu_t} -m_\pi) (x) \right\| \leq \sum_{j=1}^d \left| \left\langle \partial_{1, j}k(x,\cdot) , \; (m_{\mu_t} -m_\pi) \right\rangle_\calH \right| \nonumber \\
    &\leq \sum_{j=1}^d \| \partial_{1, j}k(x,\cdot)\|_\calH \cdot C_{\mmd}  \leq d C_{\mmd} (\|x\| + 1) \label{eq:bound_v_mmd}.  
\end{align}
D1(iii) and D1(iv) are checked in \Cref{lem:bonnet}. Therefore, it shows that the hypotheses \textnormal{(DI)} of \cite{bonnet2021differential} are satisfied. Hence, by Theorem 4 of \cite{bonnet2021differential}, there exists a trajectory solving the continuity inclusion on any finite time interval $[0,T]$. 

In our MMD flow setting, once existence is established, uniqueness follows separately from the Cauchy--Lipschitz theory for continuity equations, for instance from Theorem 2 of \cite{bonnet2021differential}, since the velocity field satisfies the corresponding growth and local Lipschitz conditions as proved above. 
As the argument applies on every finite interval $[0,T]$, the trajectory cannot explode in finite time. 
\end{proof}

\begin{lem}[Local bi-Lipschitz regularity of the characteristic flow]\label{lem:local_bilip_flow}
Let $T>0$, and let $b:[0,T]\times \R^d \to \R^d$ be a time-dependent vector field such that:

\begin{enumerate}
    \item it is locally Lipschitz: for every $R>0$, there exists $\ell_R \in L^1(0,T)$ such that for a.e. $t\in[0,T]$ and for all $x,y\in B_R$, $|b_t(x)-b_t(y)| \le \ell_R(t) |x-y|$; 
    \item it has at most linear growth: there exists $m\in L^1(0,T)$ such that for a.e. $t\in[0,T)$ and all $x\in \R^d$,
    $|b_t(x)| \le m(t)(1+\|x\|)$. 
\end{enumerate}
Let $X(t,x)$ be the unique global solution of the characteristic flow: 
\[
\partial_t X(t,x)=b_t(X(t,x)),
\qquad X(0,x)=x.
\]
Then, for every $R_0>0$ and every $t\in[0,T]$, the map $X_t:B_{R_0}\to \R^d, x\mapsto X(t,x)$, is bi-Lipschitz onto its image. More precisely, if
$
R_* := (1+R_0) \exp(\int_0^T m(s)\,\dd s)-1$, 
then for all $x,y\in B_{R_0}$ and all $t\in[0,T]$,
\[
\exp\Bigl(-\int_0^t \ell_{R_*}(s)\,\dd s\Bigr)|x-y|
\le |X_t(x)-X_t(y)|
\le \exp\Bigl(\int_0^t \ell_{R_*}(s)\,\dd s\Bigr)|x-y|.
\]
In particular, $X_t$ is locally bi-Lipschitz on bounded sets.
\end{lem}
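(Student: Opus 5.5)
The argument has two parts. First, an a priori bound keeps trajectories starting in $B_{R_0}$ inside the fixed ball $B_{R_*}$. Second, a Grönwall estimate on the difference of two trajectories, run forward and backward in time, gives the upper and lower Lipschitz bounds with the constant $\ell_{R_*}$.

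\textbf{Step 1: trajectories stay in $B_{R_*}$.} Fix $x\in B_{R_0}$ and write $X_t=X(t,x)$. By the linear growth condition, the function $t\mapsto 1+\|X_t\|$ is absolutely continuous (since $\|X_t\|$ is 1-Lipschitz in $X_t$, and $X_t$ is absolutely continuous). Moreover
\[
\frac{\dd}{\dd t}\|X_t\|\le \|b_t(X_t)\|\le m(t)\bigl(1+\|X_t\|\bigr)\quad\text{a.e.}
\]
Grönwall's inequality then yields
\[
1+\|X_t\|\le (1+R_0)\exp\Bigl(\int_0^t m(s)\,\dd s\Bigr)\le 1+R_*,
\]
so $X_t\in B_{R_*}$ for all $t\in[0,T]$. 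This is the step that makes a single local constant $\ell_{R_*}$ usable along the entire trajectory. The same bound also rules out blow-up, which justifies the existence of a global solution on $[0,T]$ via Carathéodory theory.

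\textbf{Step 2: Grönwall on the difference.} For $x,y\in B_{R_0}$, set $\Delta_t=X_t(x)-X_t(y)$. Both trajectories lie in $B_{R_*}$ by Step 1, so the local Lipschitz condition gives
\[
\Bigl|\frac{\dd}{\dd t}\|\Delta_t\|^2\Bigr| = 2\bigl|\langle \Delta_t,\,b_t(X_t(x))-b_t(X_t(y))\rangle\bigr|\le 2\ell_{R_*}(t)\|\Delta_t\|^2\quad\text{a.e.}
\]
This is a two-sided differential inequality. Integrating $\frac{\dd}{\dd t}\log\|\Delta_t\|^2$ then gives both bounds at once:
\[
\|\Delta_0\|e^{-\int_0^t\ell_{R_*}}\le\|\Delta_t\|\le\|\Delta_0\|e^{\int_0^t\ell_{R_*}}.
\]

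\textbf{Main obstacle: dividing by $\|\Delta_t\|^2$.} Taking the logarithm requires $\Delta_t\neq0$ whenever $x\neq y$. To avoid the issue, I would work with $u(t)=\|\Delta_t\|^2$ and apply Grönwall to $u(t)e^{\pm2\int_0^t\ell_{R_*}}$ directly. The upper bound follows from the ordinary Grönwall inequality. For the lower bound, the function $u(t)e^{2\int_0^t\ell_{R_*}}$ has nonnegative a.e. derivative, so it is nondecreasing, which gives $u(t)\ge u(0)e^{-2\int_0^t\ell_{R_*}}$. This handles the lower bound without any division.

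In particular, the lower bound shows that $X_t$ is injective on $B_{R_0}$, and its inverse is Lipschitz on the image with constant $e^{\int_0^t\ell_{R_*}}$. Hence $X_t$ is bi-Lipschitz onto its image, which is exactly what the proof of \Cref{thm:ac_curve} uses.
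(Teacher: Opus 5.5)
Your proof is correct and follows essentially the same route as the paper: the same linear-growth Gr\"onwall bound keeps trajectories started in $B_{R_0}$ inside $B_{R_*}$, and then Gr\"onwall is applied to the difference of two trajectories with the single constant $\ell_{R_*}$. The only variation is in the lower bound. You obtain it from the forward two-sided inequality $|u'|\le 2\ell_{R_*}u$ via monotonicity of the absolutely continuous function $u(t)e^{2\int_0^t\ell_{R_*}}$, whereas the paper applies an integral-form Gr\"onwall backward on $[s,t]$ and then sets $s=0$; both arguments are valid.
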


\begin{proof}
Fix $R_0>0$ and $x\in B_{R_0}$. By the linear growth assumption,
\[
\|X(t,x)\|
\le \|x\| + \int_0^t m(s)\bigl(1+\|X(s,x)\|\bigr)\,\dd s.
\]
Hence
\[
1+\|X(t,x)\|
\le 1+\|x\|+\int_0^t m(s)\bigl(1+\|X(s,x)\|\bigr)\,\dd s.
\]
By Gr\"onwall's inequality,
\[
1+\|X(t,x)\|
\le (1+\|x\|)\exp\Bigl(\int_0^t m(s)\,\dd s\Bigr)
\le (1+R_0)\exp\Bigl(\int_0^T m(s)\,\dd s\Bigr).
\]
Therefore,
\[
X(s,x)\in B_{R_*}
\qquad \text{for all } s\in[0,T], \ x\in B_{R_0}.
\]

Now fix $x,y\in B_{R_0}$. Since both trajectories remain in $B_{R_*}$, we have
\[
\|X_t(x)-X_t(y)\|
\le \|x-y\|+\int_0^t \ell_{R_*}(s)\,\|X_\mathfrak{s}(x)-X_\mathfrak{s}(y)\|\,\dd s.
\]
Another application of Gr\"onwall's inequality yields
\[
\|X_t(x)-X_t(y)\|
\le \exp\Bigl(\int_0^t \ell_{R_*}(s)\,\dd s\Bigr)\|x-y\|.
\]
For the reverse bound, fix $0\le s\le t$. Using the integral formulation of the ODE,
\[
X_\mathfrak{s}(x)-X_\mathfrak{s}(y)
=
X_t(x)-X_t(y)-\int_s^t \bigl(b_r(X_r(x))-b_r(X_r(y))\bigr)\,\dd r.
\]
Thus
\[
\|X_\mathfrak{s}(x)-X_\mathfrak{s}(y)\|
\le \|X_t(x)-X_t(y)\|
+\int_s^t \ell_{R_*}(r)\,\|X_r(x)-X_r(y)\|\,\dd r.
\]
Applying Gr\"onwall's inequality on the interval $[s,t]$, we obtain
\[
\|X_\mathfrak{s}(x)-X_\mathfrak{s}(y)\|
\le \exp\Bigl(\int_s^t \ell_{R_*}(r)\,\dd r\Bigr) \|X_t(x)-X_t(y)\|.
\]
Taking $s=0$ gives
\[
\|x-y\|
\le \exp\Bigl(\int_0^t \ell_{R_*}(r)\,\dd r\Bigr)\|X_t(x)-X_t(y)\|.
\]
Equivalently,
\[
\|X_t(x)-X_t(y)\|
\ge \exp\Bigl(-\int_0^t \ell_{R_*}(r)\,\dd r\Bigr)\|x-y\|.
\]
Combining the two bounds proves the claim.
\end{proof}

\begin{lem}
\label{lem:differentiation_along_characteristics}
Let $(X_t)_{t\in[0,T]}$ be an absolutely continuous stochastic process in $\R^d$. 
Let $h\in C^1(\R^d)$ satisfy
$\E\bigl[ |h(X_0)| \bigr] < \infty$ and
$\E\left[ \int_0^T \left|
\nabla h(X_t)^\top \dot X_t
\right| \dd t \right] < \infty$.
Then the map $t\mapsto \E[h(X_t)]$ is absolutely continuous on $[0,T]$, and
$\frac{\dd}{\dd t}\E[h(X_t)] = \E\left[
\nabla h(X_t)^\top \dot X_t
\right]$ for a.e. $t\in[0,T]$. 
\end{lem}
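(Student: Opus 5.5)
The plan is to prove the fundamental theorem of calculus pathwise and then exchange expectation and time integration by Fubini. Fix a sample path $\omega$ for which $t\mapsto X_t(\omega)$ is absolutely continuous on $[0,T]$. Since $h\in C^1(\R^d)$ and $X_\cdot(\omega)$ has compact image $\{X_t(\omega):t\in[0,T]\}$, the function $h$ is Lipschitz on a neighbourhood of that image. Hence $t\mapsto h(X_t(\omega))$ is absolutely continuous, being the composition of a locally Lipschitz function with an absolutely continuous curve. Moreover, the chain rule $\frac{\dd}{\dd t}h(X_t(\omega))=\nabla h(X_t(\omega))^\top \dot X_t(\omega)$ holds at every $t$ where $\dot X_t(\omega)$ exists, which is a.e. $t$. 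This yields the pathwise identity
\begin{align*}
h(X_t)=h(X_0)+\int_0^t \nabla h(X_s)^\top \dot X_s\,\dd s,\qquad t\in[0,T].
\end{align*}

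Next, take expectations. The integrability assumption $\E[\int_0^T|\nabla h(X_s)^\top\dot X_s|\dd s]<\infty$, together with joint measurability of $(s,\omega)\mapsto \nabla h(X_s)^\top\dot X_s$, lets Fubini--Tonelli exchange $\E$ and $\int_0^t$. For joint measurability, I would define $\dot X_s$ as the $\limsup$ of difference quotients, which is measurable in $(s,\omega)$ as a limit of jointly measurable functions. Combined with $\E|h(X_0)|<\infty$, this shows that $h(X_t)$ is integrable for every $t$ and that
\begin{align*}
\E[h(X_t)]=\E[h(X_0)]+\int_0^t g(s)\,\dd s,\qquad g(s):=\E\bigl[\nabla h(X_s)^\top\dot X_s\bigr],
\end{align*}
where $g\in L^1(0,T)$ by the assumption. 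Then $t\mapsto\E[h(X_t)]$ is an indefinite integral of an $L^1$ function, hence absolutely continuous. By the Lebesgue differentiation theorem, its derivative equals $g(t)$ for a.e. $t$, which is the claim.

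I expect the main obstacle to be measure-theoretic bookkeeping rather than analysis. Two points need care: the joint measurability of the derivative process, and the fact that the chain rule for $h\circ X(\omega)$ holds only a.e. in $t$, with the exceptional set depending on $\omega$. The second is harmless because the pathwise FTC identity holds for every $t$, and Fubini only needs the integrand defined $\dd s\otimes\dd\Pb$-a.e. If absolute continuity of paths is only assumed almost surely, I would discard the null set of bad $\omega$ before running the argument.
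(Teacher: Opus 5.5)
Your proposal is correct and follows essentially the same route as the paper. Both arguments apply the pathwise chain rule to write $h(X_t)=h(X_0)+\int_0^t \nabla h(X_s)^\top \dot X_s\,\dd s$, take expectations, swap $\E$ and $\int_0^t$ by Fubini under the integrability hypothesis, and conclude absolute continuity with the a.e.\ derivative. You also justify two points the paper leaves implicit: pathwise absolute continuity of $h(X_t)$, via local Lipschitzness of $h$ on the compact image of the path, and joint measurability of $(s,\omega)\mapsto\dot X_s$.
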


\begin{proof}
Since $X_t$ is absolutely continuous and $h\in C^1(\R^d)$, the pathwise chain rule gives
\[
    h(X_t) - h(X_0)
    =
    \int_0^t
        \nabla h(X_s)^\top \dot X_s
    \dd s
\]
for every $t\in[0,T]$, almost surely. Taking expectations yields
\[
    \E[h(X_t)] - \E[h(X_0)]
    =
    \E\left[
        \int_0^t
            \nabla h(X_s)^\top \dot X_s
        \dd s
    \right].
\]
By the assumed integrability condition, Fubini's theorem applies, and therefore
\[
    \E[h(X_t)] - \E[h(X_0)]
    =
    \int_0^t
        \E\left[
            \nabla h(X_s)^\top \dot X_s
        \right]
    \dd s .
\]
Hence $t\mapsto \E[h(X_t)]$ is absolutely continuous. Differentiating the last
display gives
$\frac{\dd}{\dd t}\E[h(X_t)] = \E\left[
\nabla h(X_t)^\top \dot X_t
\right]$ for a.e. $t\in[0,T]$. 
\end{proof}

\begin{prop}\label{prop:suff_stein_kernel}
Let the base kernel $k$ be a translation invariant kernel $k(x,y)=\phi(\|x-y\|)$. 
Suppose $\phi:\R\to\R$ is a four-times differentiable even function with locally bounded derivatives. 
Suppose there exists $M_0>0$ such that $\|\mathfrak{s}(x)\| \leq M_0 (1 + \|x\|)$, $\| \bJ \mathfrak{s}(x)\|_F \leq M_0 (1 + \|x\|)$ and that $\mathfrak{s}$ has locally bounded second order derivatives. 
Then, the Langevin Stein kernel $k_\pi$ satisfies both \Cref{ass:smmd_growth,ass:bounded}. 
If additionally, $\mathfrak{s}$ is Lipschitz and has a uniformly bounded Hessian, then the Langevin Stein kernel $k_\pi$ also satisfies \Cref{ass:growth_2}. 
\end{prop}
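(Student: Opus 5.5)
The plan is to expand the Langevin Stein kernel \eqref{eq:steinkernel} explicitly and to bound each of its terms and their derivatives using the growth of $\mathfrak{s}$ and the local boundedness of the derivatives of $\phi$. Since $k(x,y)=\phi(\|x-y\|)$ with $\phi$ even and four-times differentiable, $k$ is translation invariant. Consequently $k(x,x)=\phi(0)$, $\nabla_1 k(x,x)=0$, $\nabla_1\nabla_2 k(x,x)=-\nabla^2\phi(0)$ (a constant matrix), and every derivative of $k$ up to order four is bounded on the diagonal. Off the diagonal, these derivatives are bounded on compact sets. For \Cref{ass:smmd_growth}, evaluate \eqref{eq:steinkernel} on the diagonal. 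This gives $k_\pi(x,x)=\phi(0)\|\mathfrak{s}(x)\|^2 + c$ for a constant $c=\nabla\cdot_1\nabla_2 k(x,x)$, and hence $|k_\pi(x,x)|\le M(1+\|x\|^2)\le M(1+\|x\|^4)$.

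For $\partial_{1,i}\partial_{2,i}k_\pi(x,x)$, I will differentiate the four terms of \eqref{eq:steinkernel} once in $x_i$ and once in $y_i$ and then set $y=x$. The resulting terms are products of factors drawn from three groups: $\mathfrak{s}$ or $\partial_i\mathfrak{s}$ evaluated at $x$, with at most two such factors in each product; derivatives of $k$ of order at most four on the diagonal, which are constants; and, by translation invariance, odd-order derivatives of $k$ on the diagonal, which vanish. The potentially quadratic-growth terms are $\partial_i\mathfrak{s}(x)^\top\partial_i\mathfrak{s}(x)\phi(0)$ and the terms pairing $\mathfrak{s}$ with second derivatives of $k$. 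Using $\|\mathfrak{s}\|,\|\bJ\mathfrak{s}\|_F\le M_0(1+\|x\|)$, each term is $O(1+\|x\|^2)$, which gives the second half of \Cref{ass:smmd_growth}. A key point is that no odd derivative like $\nabla\phi$ survives on the diagonal with a cubic factor, so the degree stays at two. For \Cref{ass:bounded}, I use $\|\partial_{1,i}\partial_{1,j}k_\pi(x,\cdot)\|_{\calH_\pi}^2=\partial_{1,i}\partial_{1,j}\partial_{2,i}\partial_{2,j}k_\pi(x,x)$. This is a finite combination of derivatives of $\mathfrak{s}$ up to second order and of $k$ up to fourth order, which I will double-check, all continuous, so it is bounded on $B(0,R)$. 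I am slightly worried here: the expansion seems to require up to sixth derivatives of $k$ and third derivatives of $\mathfrak{s}$, which exceeds the stated regularity. If so, I will instead bound the RKHS norm via $\calH_\pi=\{\mathfrak{s}^\top f+\nabla\cdot f: f\in\calH^d\}$, using that $\partial_{1,i}\partial_{1,j}k_\pi(x,\cdot)$ is the image of an element of $\calH^d$ whose norm is controlled by $\partial$-derivatives of $k$ of order at most four, with coefficients involving $\mathfrak{s}$ and its derivatives up to order two at $x$. Local boundedness of these quantities then suffices. This norm-control step is the main obstacle.

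For \Cref{ass:growth_2}, I will compute $\bH_1 k_\pi(x,y)$, again term by term. When $\mathfrak{s}$ is Lipschitz with bounded Hessian, $\bJ\mathfrak{s}$ and $\nabla^2\mathfrak{s}$ are bounded, and $\|\mathfrak{s}(x)\|\le M_0(1+\|x\|)$. The Hessian in $x$ of $\mathfrak{s}(x)^\top\mathfrak{s}(y)k(x,y)$ contains terms of the form $\nabla^2\mathfrak{s}(x)$, $\bJ\mathfrak{s}(x)\nabla_1 k$, and $\mathfrak{s}(x)\,\nabla_1^2 k$, each multiplied by $\|\mathfrak{s}(y)\|$. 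However, the term $\mathfrak{s}(x)\nabla_1^2k(x,y)\mathfrak{s}(y)$ grows in $x$, so the claimed bound $M(1+\|y\|^2)$, which is uniform in $x$, needs $\|\mathfrak{s}(x)\|\,|\nabla^2 k(x,y)|$ to be controlled in $x$. I would handle this by assuming $\phi$ and its derivatives decay at least like $(1+r)^{-1}$, as for the Gaussian kernel, so that $\|x\|\,|\phi^{(j)}(\|x-y\|)|\lesssim 1+\|y\|$. Under this assumption, every term is at most $C(1+\|y\|)^2$. I would flag this decay requirement as a hypothesis that should be made explicit.
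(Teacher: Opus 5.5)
Your treatment of \Cref{ass:smmd_growth} is the same as the paper's, and it is fine: you evaluate on the diagonal, use that odd derivatives of $\phi$ vanish at $0$, and read off at most quadratic dependence on $\mathfrak{s}(x)$ and $\bJ\mathfrak{s}(x)$.

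The gap is \Cref{ass:bounded}. You leave it open, and your fallback would not close it.
\begin{itemize}
\item The direct identity $\|\partial_{1,i}\partial_{1,j}k_\pi(x,\cdot)\|_{\calH_\pi}^2=\partial_{1,i}\partial_{1,j}\partial_{2,i}\partial_{2,j}k_\pi(x,x)$ needs sixth derivatives of $\phi$ but only second derivatives of $\mathfrak{s}$, since each factor $\mathfrak{s}(x)$ or $\mathfrak{s}(y)$ receives at most two derivatives. Your worry about third derivatives of $\mathfrak{s}$ is therefore unfounded.
\item Your worry about $\phi$ is justified, and the feature-map route does not lower the order. Write $k_\pi(x,y)=\langle\xi_x,\xi_y\rangle_{\calH^d}$ with $\xi_x=\mathfrak{s}(x)k(x,\cdot)+\nabla_1k(x,\cdot)$. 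The element $\partial_i\partial_j\xi_x$ contains $\partial_{1,i}\partial_{1,j}\nabla_1k(x,\cdot)$, whose squared $\calH$-norm is a sixth-order derivative of $z\mapsto\phi(\|z\|)$ at $0$. So ``order at most four'' is wrong.
\item In fact the claim is false under the stated $C^4$ hypothesis. Take $d=1$, $\pi=\calN(0,1)$, and $\phi$ the Mat\'ern-$5/2$ function. It is even and exactly four times differentiable, with $\hat\phi(\omega)\propto(5+\omega^2)^{-3}$ and $\calH\cong H^3(\R)$. Then $\calH_\pi=T(\calH)$ with $Tf=f'-xf$, and $T$ is injective on $L_2(\R)$ because its kernel is spanned by $e^{x^2/2}$. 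One computes $\partial_x^2k_\pi(x,\cdot)=Th_x$ with $h_x=-2\phi'(x-\cdot)-x\phi''(x-\cdot)+\phi'''(x-\cdot)$. This $h_x$ lies in $L_2(\R)$ but not in $\calH$, because $\int\omega^6\hat\phi(\omega)\,\dd\omega=\infty$. Any $g\in\calH$ with $Tg=Th_x$ would have to equal $h_x$, so $\partial_x^2k_\pi(x,\cdot)\notin\calH_\pi$ and \Cref{ass:bounded} fails.
\end{itemize}
The repair is to assume $\phi\in C^6$. Your direct route then closes at once with the stated second-order regularity of $\mathfrak{s}$. The paper's proof does not handle this either: it asserts local boundedness from $\phi,\phi',\phi'',\phi^{(4)}$ alone. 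So you are not missing a trick.

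On \Cref{ass:growth_2}, your diagnosis is correct and sharper than the paper, which only cites Lemma~1 of \cite{korba2021kernel}. Without decay of $\phi$ the claim fails. Take $d=1$, the positive definite kernel $k(x,y)=\cos(x-y)$, and $\mathfrak{s}(x)=-x$, which is Lipschitz with zero Hessian. Then $k_\pi(x,y)=xy\cos(x-y)-(x-y)\sin(x-y)+\cos(x-y)$ and $\partial_x^2k_\pi(x,0)=x\sin x-3\cos x$. This is unbounded in $x$, whereas \Cref{ass:growth_2} at $y=0$ requires it to be at most $M$. A decay hypothesis like yours therefore has to be part of the statement, and with it your term-by-term bound goes through.
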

\begin{proof}
Recall that the Stein kernel $k_\pi$ is
\begin{align*}
    k_{\pi}(x,y) := \mathfrak{s}(x)^\top \mathfrak{s}(y) k(x,y) + \mathfrak{s}(x)^\top \nabla_2 k(x,y) + \nabla_1 k(x,y)^\top \mathfrak{s}(y) + \nabla \cdot_1 \nabla_2 k(x,y) . 
\end{align*}
Then, 
\begin{align*}
    |k_{\pi}(x,x)| &\leq |\phi(0)| \|\mathfrak{s}(x)\|^2 + 2 \|\mathfrak{s}(x)\| |\phi'(0)| + d |\phi''(0)| \\
    &\leq \big( |\phi(0)| M_0^2 + 2 M_0 |\phi'(0)| + d |\phi''(0)| \big)  (1 + \|x\|^2) . 
\end{align*}
This proves that $k_\pi$ satisfies the first growth condition in \Cref{ass:smmd_growth}. 

Denote $\bJ \mathfrak{s}(x) \in \R^{d \times d}$ as the Jacobian matrix of the score function. 
From the derivation in Appendix B.1 of \cite{korba2021kernel}, we have
\begin{align*}
\begin{aligned}
&\quad  \nabla_{1}\nabla_2 k_\pi(x, y) = \left( \bJ \mathfrak{s}(x)^{\top} \mathfrak{s}(y)\right) \nabla_2 k(x, y)^{\top} + \mathfrak{s}(x)^{\top} \mathfrak{s}(y) \nabla_{1}\nabla_2 k(x, y) + \bJ \mathfrak{s}(y)^{\top} \bJ \mathfrak{s}(x) k(x, y) \\ &+ \nabla_1 k(x, y) \mathfrak{s}(x)^{\top} \bJ \mathfrak{s}(y) + \bH_2 k(x, y) \bJ \mathfrak{s}(x) + \nabla_{1} \bH_2 k(x, y) \mathfrak{s}(x) + \nabla_{2} \bH_1 k(x, y) \mathfrak{s}(y)  \\
&\qquad + \bJ \mathfrak{s}(y)^{\top} \bH_1 k(x, y) + \nabla_{1}\nabla_2 \operatorname{Tr}\left(\nabla_2 \nabla_1 k(x, y)\right) . 
\end{aligned}
\end{align*}
Since $\phi$ is an even function, its odd derivatives at $0$ equal to $0$. So we have
\begin{align*}
    \nabla_{1}\nabla_2 k_\pi(x, x) &= \|\mathfrak{s}(x)\|^2 \nabla_{1}\nabla_2 k(x, x) + \bJ \mathfrak{s}(x)^\top \bJ \mathfrak{s}(x) k(x, x) + 2 \bH_2 k(x, x) \bJ \mathfrak{s}(x) \\
    &+ \nabla_{1}\nabla_2 \operatorname{Tr}\left(\nabla_2 \nabla_1 k(x, x) \right) \\
    &= -\|\mathfrak{s}(x)\|^2 \phi''(0)\Id + \bJ \mathfrak{s}(x)^\top \bJ \mathfrak{s}(x) \phi(0) + 2 \phi''(0) \bJ \mathfrak{s}(x) + 4(d+2) \phi^{(4)}(0) \Id. 
\end{align*}
Therefore, we obtain
\begin{align*}
    \|\nabla_{1}\nabla_2 k_\pi(x, x)\|_F &\leq d |\phi''(0)| \|\mathfrak{s}(x)\|^2 + |\phi(0)| \|\bJ \mathfrak{s}(x)\|_F^2 + 2 |\phi''(0)| \| \bJ \mathfrak{s}(x)\|_F + 4(d+2) |\phi^{(4)}(0) | \\
    &\leq (|\phi''(0)| M_0^2 + |\phi(0)| M_0^2 + 2 |\phi''(0)| M_0 + 4(d+2) |\phi^{(4)}(0) |)(1 + \|x\|^2) .
\end{align*}
This proves that $k_\pi$ satisfies the second growth condition in \Cref{ass:smmd_growth}. 
Next, since $\mathfrak{s}, \bJ \mathfrak{s}, \nabla^2 \mathfrak{s}$ and $\phi, \phi', \phi'', \phi^{(4)}$ are all locally bounded, \Cref{ass:bounded} is also satisfied. 

The second claim is proved in Lemma 1 of \cite{korba2021kernel}, specifically, in Eq.~(40)-(42). 
\end{proof}

\begin{figure}[t]
    \centering

    \hspace{-8mm}
    \begin{minipage}[t]{0.49\linewidth}
        \centering
        \includegraphics[height=3.0cm,keepaspectratio]{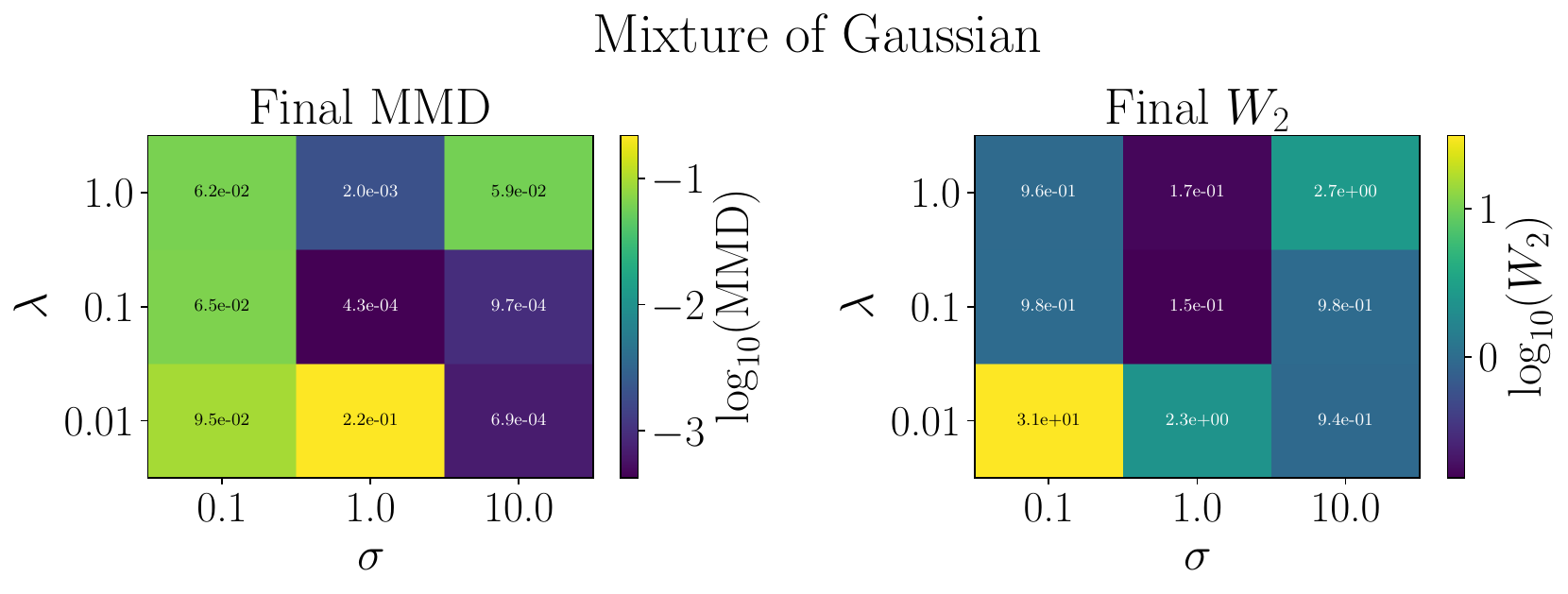}
    \end{minipage}
    \hspace{8mm}
    \begin{minipage}[t]{0.25\linewidth}
        \centering
        \includegraphics[height=3.0cm,keepaspectratio,trim=30 0 0 0,clip]{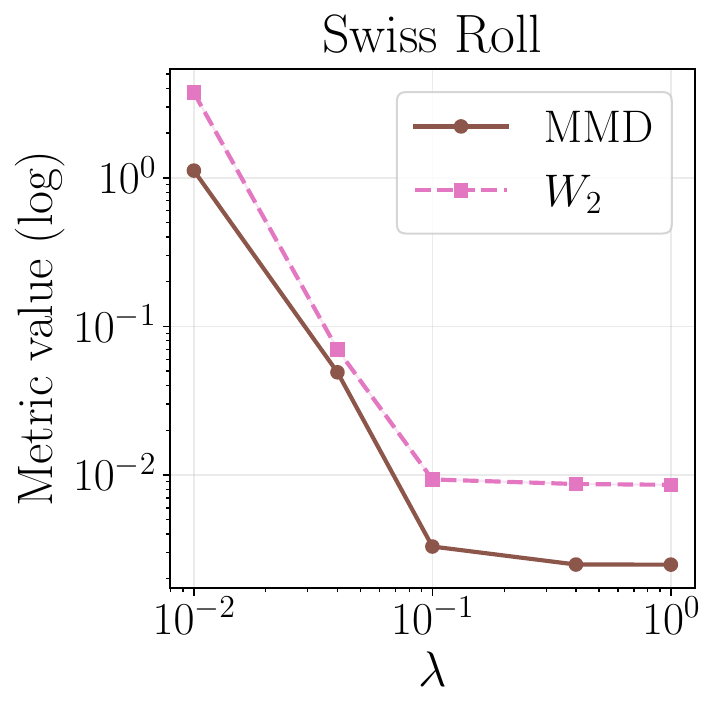}
    \end{minipage}
    \hspace{-8mm}
    \begin{minipage}[t]{0.25\linewidth}
        \centering
        \includegraphics[height=3.0cm,keepaspectratio,trim=30 0 0 0,clip]{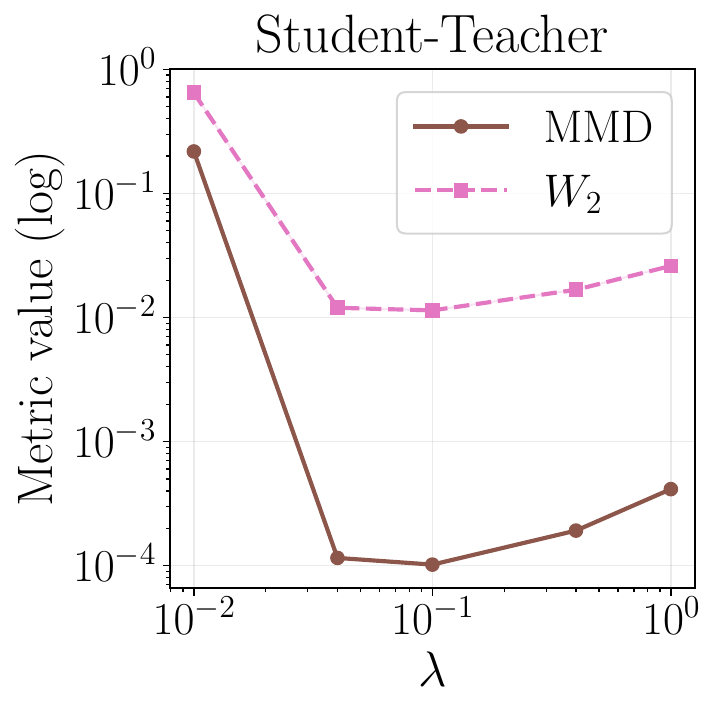}
    \end{minipage}
    \hspace{-8mm}
    \caption{Ablation studies of SrMMD flow on Mixture of Gaussians, Swiss roll, and student-teacher network, with respect to kernel bandwidth $\sigma$ and regularization parameter $\lambda$.}
    \label{fig:ablation_mog}
\end{figure}

\section{Additional Experimental Details}\label{sec:add_exp}
In this section, we provide additional details for reproducing the experiments in \Cref{sec:experiments}, together with supplementary experimental results. 
All experiments were run on a workstation equipped with an AMD Ryzen 9 5950X 16-Core Processor and an NVIDIA GeForce RTX 4090 GPU. 
\subsection{Toy examples}\label{sec:more_toy}
The mixture of Gaussian consists of four  Gaussian distributions with mean $(\pm2, \pm2)$ and the same covariance matrix $1.2 \Id$. 
The swiss roll distribution is taken from the \textsc{sklearn} package. 
For the gaussian mixture setting, the kernel mean embedding under a Gaussian kernel $k(x,y)=\exp(-(2\sigma^2)^{-1} \|x-y\|^2)$ admits a close-form expression~\citep{chen2024conditional}. Specifically, for a Gaussian distribution $\nu=\calN(\tilde{m}, \tilde{\Sigma})$: 
\begin{align*}
    m_\nu(x)  = \left|\Id + \sigma^{-2} \tilde{\Sigma}\right|^{-1 / 2} \exp \left(-\frac{1}{2}(x-\tilde{m})^{\top}\left(\tilde{\Sigma}+\sigma^2 \Id \right)^{-1}(x-\tilde{m})\right) . 
\end{align*}
In contrast, the kernel mean embedding for the Swiss roll is approximated as an empirical average over $500$ i.i.d samples from the target. 

In the left panel of \Cref{fig:ablation_mog}, we report ablation studies on the kernel bandwidth $\sigma$ and the regularization parameter $\lambda$ for the SrMMD flow on the Gaussian mixture benchmark. We also report an ablation study on $\lambda$ for the Swiss roll benchmark, where the kernel is chosen as $k(x,x')=-\|x-x'\|$ and therefore does not involve a lengthscale parameter. 
For the mixture of Gaussians, the configuration achieving the smallest MMD and $W_2$ distance is $(\sigma,\lambda)=(1.0,0.1)$. For the Swiss roll benchmark, the best-performing value is $\lambda=0.1$. The hyperparameters of the MMD flow and DrMMD flow baselines are selected analogously by searching over the candidate sets $\{0.01, 0.03, 0.1, 0.3, 1.0, 3.0\}$ for $\sigma$ and $\{0.01, 0.03, 0.1, 0.3, 1.0, 3.0, 10.0\}$ for $\lambda$. We omit the corresponding ablation plots, as these baselines are not the primary focus of this study.

\subsection{Student--teacher network}\label{sec:more_student}
The two-layer neural network used in this experiment is
$\psi(z,\theta)=G\bigl(b^1+W^1\varsigma(W^0z+b^0)\bigr)$,
where $\varsigma$ here is the ReLU activation function and $G(x)=\exp(-x^2/4)$. Here, $\theta=(b^1,W^1,b^0,W^0)\in\R^d$ denotes the concatenation of all network parameters, with $d=1+1+50+1=53$. The teacher distribution is $\pi=\calN(0,\Id)$ on $\R^{53}$, from which we observe $M=10$ i.i.d. samples and then fixed througout. 
The student distribution is initialized with $N=100$ particles sampled independently from $\calN(0,0.1\Id)$.

As shown in the main text, training the mean-field studnet teacher network here amounts to MMD flow with kernel $k(\theta,\theta')=\E_{z\sim\Pb_{\mathrm{data}}}[\psi(z,\theta)^\top\psi(z,\theta')]$. Specifically, the squared discrepancy between the mean teacher and student networks can be written as:
\begin{align}
&\E_{z\sim\Pb_{\mathrm{data}}}
\left[
\left\|
\Psi_\pi^\mathsf{T}(z)-\Psi_\mu^\mathsf{S}(z)
\right\|^2
\right] =
\E_{z\sim\Pb_{\mathrm{data}}}
\left[
\left\|
\E_{\theta\sim\pi}[\psi(z,\theta)]
-
\E_{\theta'\sim\mu}[\psi(z,\theta')]
\right\|^2
\right] \notag \\
&\quad =
\E_{\substack{\theta\sim\pi\\ \theta'\sim\pi}}
[k(\theta,\theta')]
+
\E_{\substack{\theta\sim\mu\\ \theta'\sim\mu}}
[k(\theta,\theta')]
-
2\E_{\substack{\theta\sim\pi\\ \theta'\sim\mu}}
[k(\theta,\theta')] \notag \\
&\quad =
\iint k(\theta,\theta')\,\dd(\pi-\mu)(\theta)\,\dd(\pi-\mu)(\theta')
=
\mmd^2(\mu,\pi).
\end{align}
In practice, the expectation with respect to $\Pb_{\mathrm{data}}$ is approximated using an empirical average over $1000$ i.i.d. training samples, and performance is evaluated on a held-out set of $1000$ independent validation samples. During training, the kernel $k(\theta,\theta')$ is approximated at each iteration by averaging over a randomly selected subset of $100$ training samples.

In the right panel of \Cref{fig:ablation_mog}, we report an ablation study on the regularization parameter $\lambda$ for the SrMMD flow. Since the kernel used in this experiment is defined through the network features, it does not involve a kernel lengthscale. The best-performing value is $\lambda=0.1$. For the DrMMD flow baseline, we use the same value $\lambda=0.1$, following the experimental results of \citet{chen2025regularized}. 

\subsection{Color transfer}\label{sec:more_color}
All methods use a fixed Gaussian kernel
$k_\sigma(x,y)=\exp(-0.5\|x-y\|^2)$ and use a fixed step size $\gamma=0.01$. The regularization parameter $\lambda$ is selected by searching over the pre-specified candidate sets
$\{0.01, 0.03, 0.1, 0.3, 1.0, 3.0, 10.0\}$. 
The final selected regularization parameter for SrMMD flow is $\lambda=0.01$. 
To transfer colors, for each source pixel with color $c$, we identify the nearest particle in the initial particle set $Y_0$, $i = \arg\min_j \|c - Y_{0,j}\|^2$, 
and assign the pixel the color of the corresponding transported particle $Y_{T,i}$. The resulting image is shown in the right panel of \Cref{fig:st_and_color}.

    
\begin{figure}
    \centering
    \includegraphics[height=0.25\linewidth]{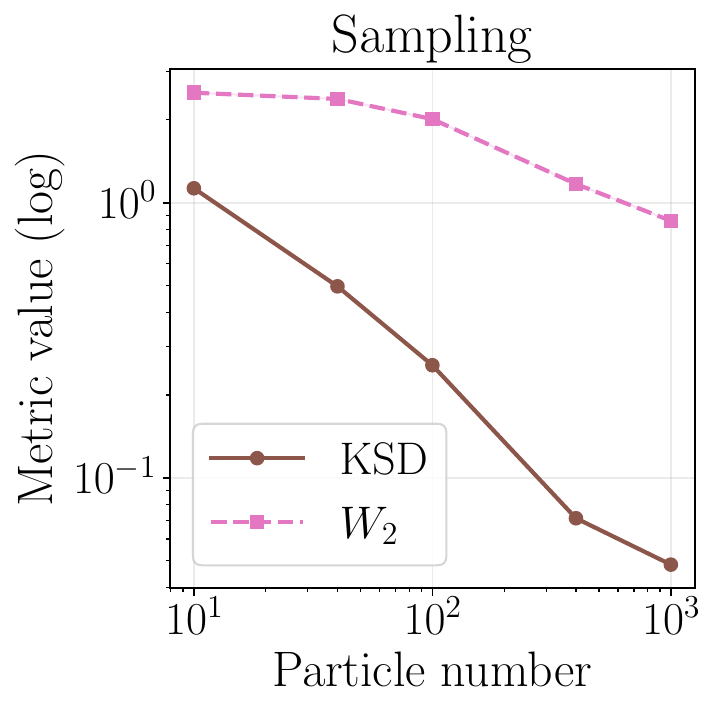}
    \includegraphics[height=0.25\linewidth] {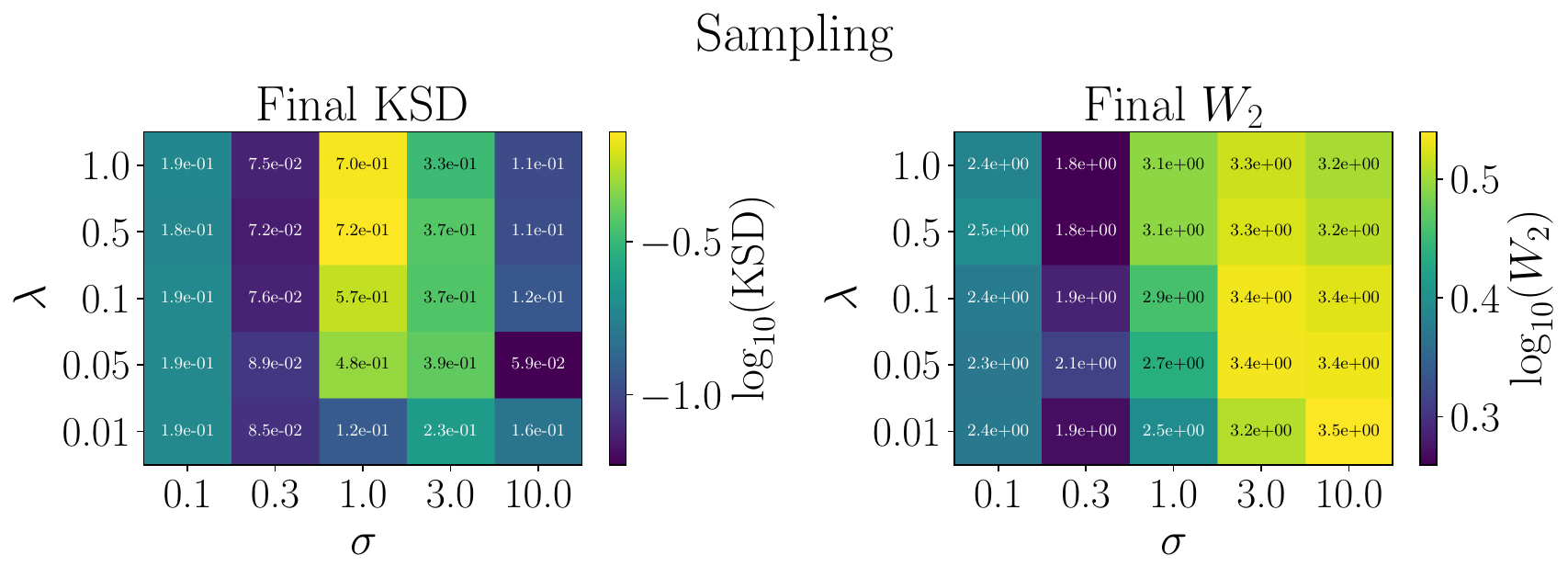}
    \caption{Ablation studies of SrMMD flow as a \emph{sampling} method on the mixture of Gaussians, with respect to the number of particles (\textbf{Left}), lengthscale $\sigma$ and regularization strength $\lambda$ (\textbf{Middle \& Right}). }
    \label{fig:ablation_sampling}
\end{figure}

\subsection{Sampling: Gaussian mixture and Bayesian logistic regression}\label{sec:more_sampling}

\textbf{Gaussian mixture}
Our Gaussian mixture setting is the same as the ones considered in \cite{chen2025stationary,chen2010super}, which is a mixture of 10 Gaussians in $d=2$ dimensions. In the sampling literature, this is the simplest setting where the target distribution $\pi$ satisfies a  Log-Sobolev or Poincar\'{e} inequality yet the corresponding constants increase exponentially with the distance between distinct modes. The score function is computed by taking the gradient of the log density. 
For SrMMD, KSD and HrMMD flow, we use the Stein kernel constructed from the Gaussian kernel as the base. For SVGD and R-SVGD, we use the Gaussian kernel. For a fair comparison, all these methods use the same kernel lengthscale $\sigma=0.3$. 
For SrMMD, HrMMD, and KSD flows, we use the best tuned step sizes $\gamma=0.01$ for KSD and $\gamma=0.1$ for SrMMD and HrMMD. 
For SVGD and R-SVGD, we use the best-tuned step size $\gamma=0.01$. All the step sizes are tuned from a pre-specified candidate set $\{0.01, 0.03, 0.1, 0.3, 1.0\}$.

In \Cref{fig:ablation_sampling}, we provide an ablation study of our SrMMD flow. 
The left panel shows that
both final KSD and $W_2$ decrease as the particle number $N$ increases, indicating that our particle implementation benefits from a more accurate empirical approximation of the population descent scheme, as desired. 
The middle and right panel shows that the best configuration for SrMMD flow is $(\sigma,\lambda)=(0.3, 0.5)$. 

\textbf{Bayesian logistic regression}
We use four datasets from the UCI Machine Learning Repository~\citep{kelly2024uci}:  \textsc{Breast Cancer} ($d=30$, $569$ samples), \textsc{Ionosphere} ($d=34$, $351$ samples), \textsc{German Credit} ($d=24$, $1000$ samples), and \textsc{Covtype} ($d=54$, $30,000$ samples). 
Fro all the datasets, $\frac{2}{3}$ are used as training samples and $\frac{1}{3}$ are used as validation samples. 
For SrMMD and KSD flow, we use the Stein kernel constructed from the Gaussian kernel as the base. For SVGD, we use the Gaussian kernel. For a fair comparison, all these methods use the same kernel lengthscale $\sigma=1.0$. 
We use the L-BFGS optimizer~\citep{liu1989limited} for KSD flow as suggested by~\cite{korba2021kernel}, with initial step size 0.01 and convergence tolerance $10^{-3}$. SrMMD and SVGD use a fixed-step Euler update with step size $\gamma=0.1$, selected from the candidate set $\{0.01,0.03,0.1,0.3,1.0\}$. All methods operate in the full-batch regime (no minibatching) and are run for 3,000 iterations until convergence is empirically observed.

\begin{figure}[t]
    \centering
    \includegraphics[width=0.49\linewidth]{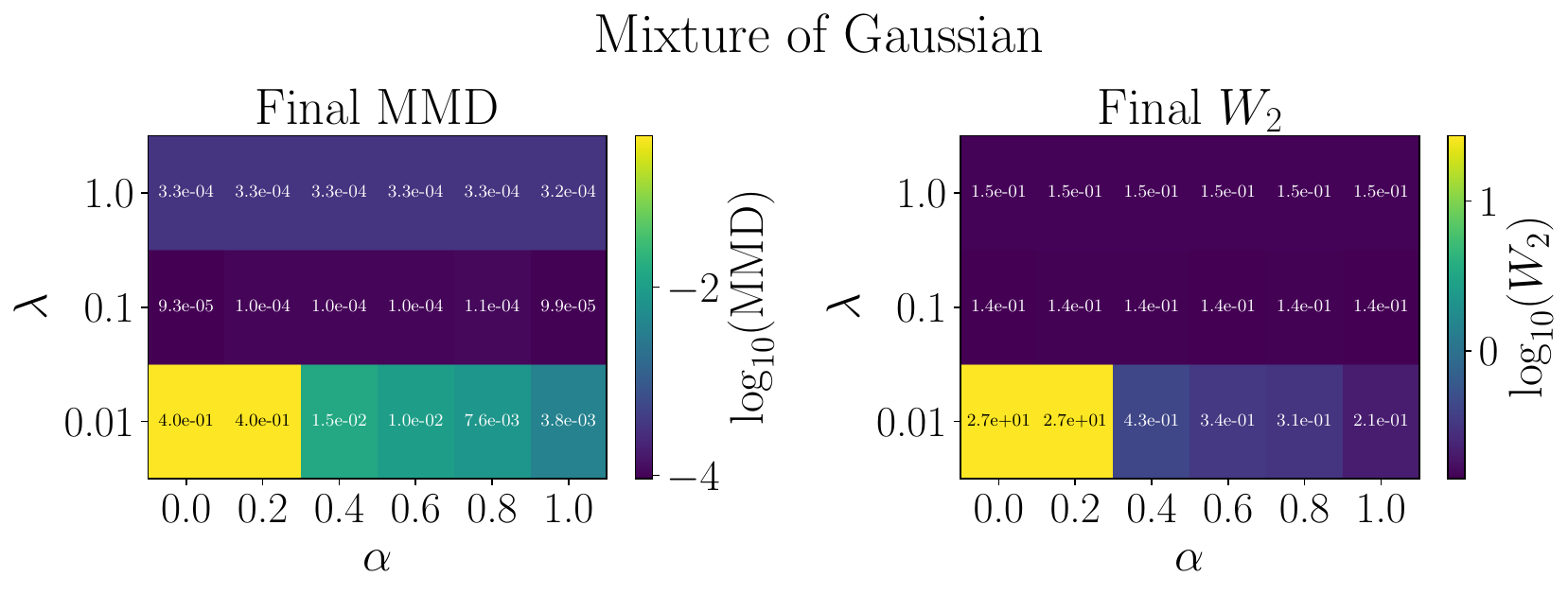}
    \includegraphics[width=0.49\linewidth]{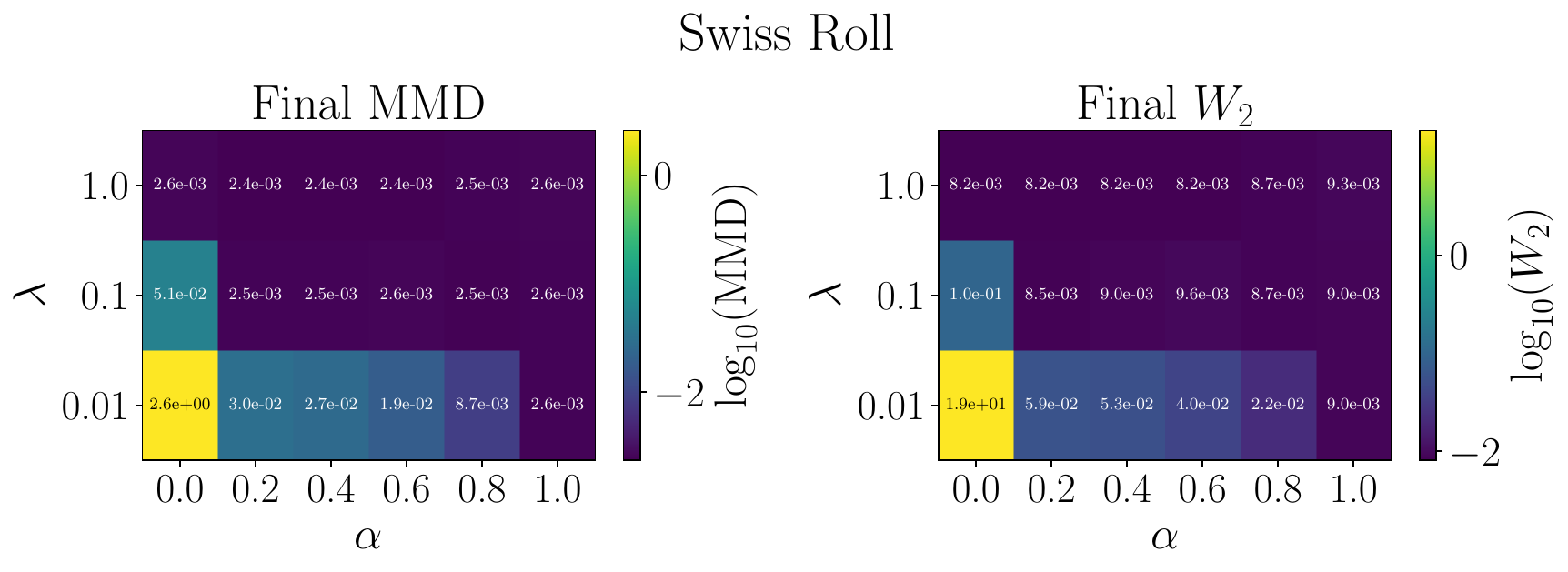}
     \includegraphics[height=0.20\linewidth]{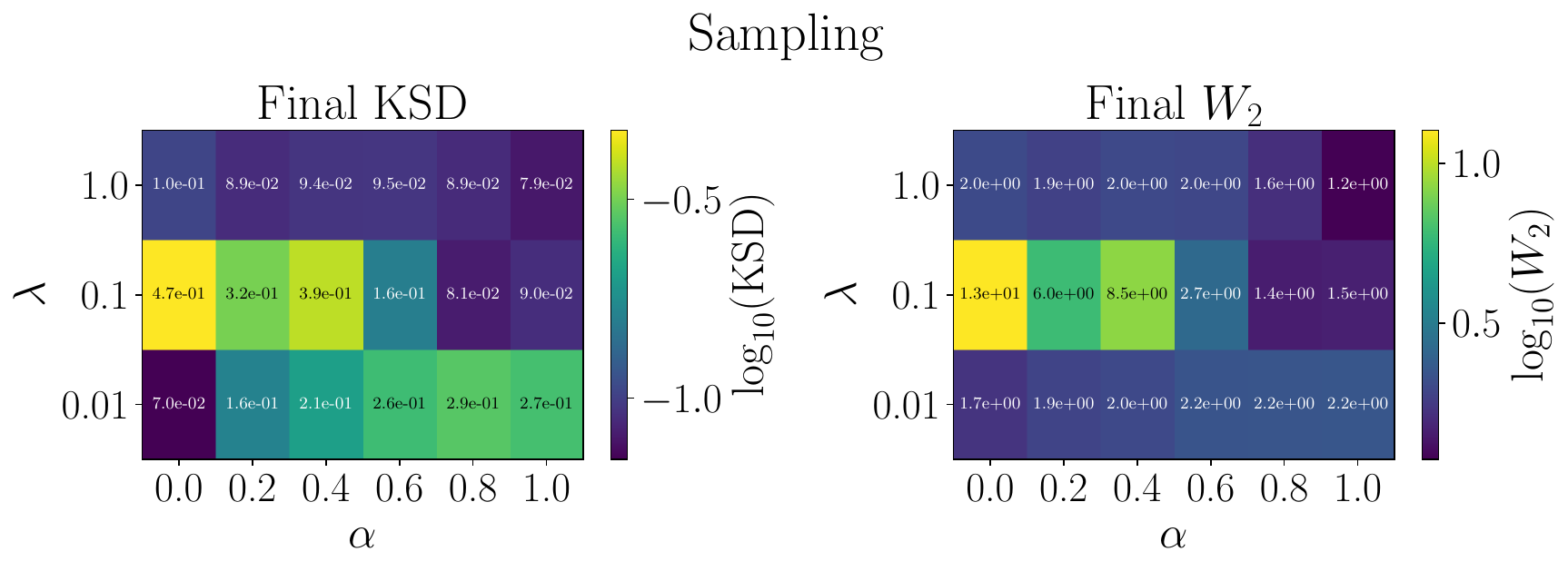}
    
    \caption{Ablation of HrMMD flow with respect to $\lambda$ and $\alpha$ on mixture of Gaussians, Swiss roll and sampling. }
    \label{fig:hrmmd_heatmaps}
\end{figure}

\subsection{Hybrid-regularized maximum mean discrepancy (HrMMD) flow}\label{sec:hrmmd}
In this section, we discuss our Hybrid-regularized maximum mean discrepancy (HrMMD), which is a new regularized variant of MMD with both $L_2$ norm and gradient norm regularization on the witness function. 
Specifically, for $\alpha \in [0,1]$ and $\lambda>0$, we define
\begin{align*}
\mathrm{HrMMD}(\mu,\pi)
:=
\sup_{f\in\mathcal H:\;
\alpha\|\nabla f\|_{L_2^d(\mu)}^2
+
(1-\alpha)\|f\|_{L_2(\mu)}^2
+
\lambda\|f\|_{\mathcal H}^2
\le 1}
\int f\, \dd(\mu-\pi).
\end{align*}
This formulation interpolates continuously between two previously discussed regularizations. When $\alpha=1$, HrMMD reduces to our SrMMD in \Cref{defi:smmd}. 
When $\alpha=0$, only the $L_2$ norm remains and it reduces to DrMMD~\citep{chen2025regularized}. 
However, a slight distinction here remains since DrMMD penalizes the $L_2(\pi)$ norm while here it penalizes the $L_2(\mu)$ norm. 
HrMMD provides an interpolation of these two types of regularization, and it is also exactly the regularization used in the actual MMD GANs~\citep{arbel2018gradient}. 

The corresponding witness function admits the closed-form representation
\begin{align*}
f_{\mu,\pi}^{\mathrm{HrMMD}}
=
(\alpha S_\mu + (1-\alpha) \Sigma_\mu + \lambda \Id)^{-1}(m_\mu-m_\pi),
\end{align*}
where $S_\mu=D_\mu^*D_\mu$ is the gradient covariance operator defined in \Cref{sec:convergence} in the main text, and $\Sigma_\mu=\E_{x\in\mu}[k(x,\cdot)\otimes k(x,\cdot)]$ is the covariance operator from $\calH\to\calH$ associated with the kernel $k$~\citep{muandet2017kernel}. 
Similarly as our SrMMD, the witness function $f_{\mu,\pi}^{\mathrm{HrMMD}}$ of HrMMD also admits a tractable particle-level closed form expression. Let $\hat\mu=\frac1N\sum_{i=1}^N \delta_{x^{(i)}}$
be the empirical distribution of the particles, and let
\begin{align*}
[K_{XX}]_{ij}=k(x^{(i)},x^{(j)}), \;
[D_{XX}]_{(i,\ell),j}=\partial_{1,\ell}k(x^{(i)},x^{(j)}), \;
[H_{XX}]_{(i,\ell),(j,m)}=\partial_{1,\ell}\partial_{2,m}k(x^{(i)},x^{(j)}).
\end{align*}
We also denote
\begin{align*}
\mathbb{K}_X(z) = [k(x^{(1)},z),\dots,k(x^{(N)},z)]^\top,\quad
\mathbb{D}_X(z) = [\partial_{1,1}k(x^{(1)},z),\dots,\partial_{1,d}k(x^{(N)},z)]^\top,
\end{align*}
and let $\mathbf 1_N\in\mathbb R^N$ be the vector of all ones. Then the empirical witness function can be written as
\begin{align*}
f_{\hat\mu,\hat\pi}^{(\alpha)}(z)
=
\frac1\lambda\left(
\frac1N\mathbf 1_N^\top \mathbb{K}_X(z)-\E_{\pi}[k(Y, z)]
\right)
-\frac1\lambda\Big(
\sqrt{1-\alpha}\,\beta_k^\top \mathbb{K}_X(z)
+
\sqrt{\alpha}\,\beta_d^\top \mathbb{D}_X(z)
\Big),
\end{align*}
where $(\beta_k,\beta_d)$ is determined by the linear system
\begin{align*}
\left(
\begin{bmatrix}
(1-\alpha)K_{XX} & \sqrt{\alpha(1-\alpha)}\,D_{XX}^\top\\
\sqrt{\alpha(1-\alpha)}\,D_{XX} & \alpha H_{XX}
\end{bmatrix}
+
N\lambda \Id
\right)
\begin{bmatrix}
\beta_k\\
\beta_d
\end{bmatrix}
=
\begin{bmatrix}
\sqrt{1-\alpha}\,g\\
\sqrt{\alpha}\,r
\end{bmatrix},
\end{align*}
with
$$g=\frac1N K_{XX}\mathbf 1_N-\E_{z\sim\pi}[\mathbb{K}_X(z)],\quad
r=\frac1N D_{XX}\mathbf 1_N - \E_{z\sim\pi}[\mathbb{D}_X(z)].$$
In the generative modeling setting, where $\pi$ is represented by i.i.d. target samples $\{y^{(j)}\}_{j=1}^M$, all the expectations with respect to $\pi$ above are replaced by empirical averages over these samples. In the sampling setting with a Stein kernel, the target mean embedding vanishes under Stein's identity, which further simplifies the expression. 

In \Cref{fig:hrmmd_heatmaps}, we report ablation studies for HrMMD flow with respect to $\alpha$ and $\lambda$ on the mixture of Gaussians, Swiss roll, and sampling experiments. 
We vary the interpolation parameter $\alpha$ over $\{0.0, 0.2, 0.4, 0.6, 0.8, 1.0\}$ and the regularization parameter $\lambda$ over $\{0.01, 0.1, 1.0\}$. 
Overall, HrMMD flow tends to perform better when the gradient penalty is stronger. 
This suggests that the gradient penalty plays a more important role than the $L_2$ penalty, consistent with empirical observations in MMD GANs~\citep{arbel2018gradient,binkowski2018demystifying}. 
Nevertheless, the optimum is not always attained at $\alpha=1$, indicating that a small additional $L_2$ penalty can further improve performance.



\end{appendices}

\end{document}